\newcommand{\Id}{{\rm Id}}
\newcommand{\R}{\mathbb R}
\newcommand{\tr}{{\rm tr}}
\newcommand{\bpm}{\begin{pmatrix}}
\newcommand{\epm}{\end{pmatrix}}
\newcommand{\bsm}{\left(\begin{smallmatrix}}
\newcommand{\esm}{\end{smallmatrix}\right)}
\newcommand{\bigO}{\mathcal{O}}
\newcommand{\HH}{\mathbb H}
\newcommand{\RR}{\mathbb R}
\newcommand{\spd}{\operatorname{SPD}}
\newcommand{\diag}{\operatorname{diag}}
\newcommand{\SPDtraRiem}{$\operatorname{SPD}^{R}_{\rm{Sca}}$}
\newcommand{\SPDtraFone}{$\operatorname{SPD}^{F_1}_{\rm{Sca}}$}
\newcommand{\SPDrotRiem}{$\operatorname{SPD}^{R}_{\rm{Rot}}$}
\newcommand{\SPDrotFone}{$\operatorname{SPD}^{F_1}_{\rm{Rot}}$}
\newcommand{\SPDrefRiem}{$\operatorname{SPD}^{R}_{\rm{Ref}}$}
\newcommand{\SPDrefFone}{$\operatorname{SPD}^{F_1}_{\rm{Ref}}$}
\newtheorem{proposition}{Proposition}
\newtheorem{lemma}{Lemma}
\newtheorem{corollary}{Corollary}
\newcommand{\EntitySub}{\mathcal{E}}
\newcommand{\RelSub}{\mathcal{R}}
\newcommand{\TripSub}{\mathcal{T}}
\newcommand{\gyr}{\operatorname{gyr}}
\newcommand{\Aut}{\operatorname{Aut}}
\title{Vector-valued Distance and Gyrocalculus on the Space of Symmetric Positive Definite Matrices}
\author{%
   Federico L\'opez\thanks{Correspondence to \texttt{federico.lopez@h-its.org}} \\
   HITS \\
   \And
   Beatrice Pozzetti \\
   Heidelberg University \\
   \And
   Steve Trettel \\
   Stanford University \\
   \AND
   Michael Strube \\
   HITS \\
   \And
   Anna Wienhard \\
   Heidelberg University \\
   HITS\\
}
\begin{document}

\maketitle

\begin{abstract}
We propose the use of the vector-valued distance to compute distances and extract geometric information from the manifold of symmetric positive definite matrices (SPD), and develop gyrovector calculus, constructing analogs of vector space operations in this curved space.
We implement these operations and showcase their versatility in the tasks of knowledge graph completion, item recommendation, and question answering. In experiments, the SPD models outperform their equivalents in Euclidean and hyperbolic space.
The vector-valued distance allows us to visualize embeddings, showing that the models learn to disentangle representations of positive samples from negative ones.
\end{abstract}

\section{Introduction}
Symmetric Positive Definite (SPD) matrices have been applied in many tasks in computer vision such as pedestrian detection \cite{tosato2010multiclassClassifOnRiemManifolds,tuzel2008spdPedestrianDetection}, action \cite{harandi2014manToManforActionRecog,liandlu2018spdLieGroupAlgorithm,nguyen2019spdHandGestureRecog} or face recognition \cite{huang2014euclidToRiemPointClassif, huang2015logEuclideanSPD}, object \cite{ionescu2015matrixBackprop, Yin2016KernelClusteringSPD} and image set classification \cite{wang2018analysisOnRiemManifoldImageSets}, visual tracking \cite{wu2015manifoldKernelforVisualTracking}, and medical imaging analysis \cite{arsigny2006spdGeometricMeans,pennec2006spdmagneticResonanceImaging} among others.
They have been used to 
capture statistical notions (Gaussian distributions \cite{salem2017gaussianDistribOnSPD}, covariance \cite{tuzel2006regionCovariance}), while respecting the Riemannian geometry of the underlying SPD manifold, which offers a convenient trade-off between structural richness and computational tractability \cite{cruceru20matrixGraph}.
Previous work has applied approximation methods that locally flatten the manifold by projecting it to its tangent space \cite{carreira2012semanticSegmentationSPD, Vemulapalli2015RiemannianML}, or by embedding the manifold into higher dimensional Hilbert spaces \cite{quang2014logHilbertSPD, Yin2016KernelClusteringSPD}. 

These methods face problems such as distortion of the geometrical structure of the manifold and other known concerns with regard to high-dimensional spaces \cite{dong2017spdToFaceRecog}.
To overcome these issues, several distances on SPD manifolds have been proposed, such as the Affine Invariant metric \cite{pennec2006spdmagneticResonanceImaging}, the Stein metric \cite{sra2012steinMetricSPD}, the Bures–Wasserstein metric \cite{bhatia2019buresWassersteinDistanceSPD} or the Log-Euclidean metric \cite{arsigny2006spdGeometricMeans, arsigny2006logEuclidTensorDiffusion}, with their respective geometric properties. 
However, 
{the representational power of SPD is not fully exploited} in many cases \cite{arsigny2006logEuclidTensorDiffusion,pennec2006spdmagneticResonanceImaging}. At the same time, it is hard to translate operations into their non-Euclidean domain given the lack of closed-form expressions. 
There has been a growing need to generalize basic operations, such as addition, rotation, reflection or scalar multiplication, to their Riemannian geometric counterparts to leverage this structure in the context of Geometric Deep Learning \cite{bronstein2018geomdeeplearning}.



SPD manifolds have a rich geometry that contains both Euclidean and hyperbolic subspaces. Thus, embeddings into SPD manifolds are beneficial, since they can accommodate hierarchical structure in data sets in the hyperbolic subspaces while at the same time represent Euclidean features. This makes them more versatile than using only hyperbolic or Euclidean spaces, and in fact, their different submanifold geometries can be used to identify and disentangle such substructures in graphs.

In this work, we introduce the vector-valued distance function to exploit the full representation power of SPD (\S\ref{sec:VVD}). While in Euclidean or hyperbolic space the relative position between two points is completely captured by their distance (and this is the only invariant), in SPD this invariant is a vector, encoding much more information  than a single scalar. This vector reflects the higher expressivity of SPD due to its richer geometry encompassing Euclidean as well as hyperbolic spaces. 
We develop algorithms using the vector-valued distance and showcase two main advantages: its versatility to implement universal models,  
and its use in explaining and visualizing what the model has learned. 

Furthermore, we bridge the gap between Euclidean and SPD geometry by developing gyrocalculus in SPD (\S\ref{sec:gyrocalculus}), which yields closed-form expressions of arithmetic operations, such as addition, scalar multiplication and matrix scaling.
This provides means to translate previously implemented ideas in different metric spaces to their analog notions in SPD. These arithmetic operations are also useful to adapt neural network architectures to SPD manifolds.

We showcase this on knowledge graph completion, item recommendation, and question answering. In the experiments, the proposed SPD models outperform their equivalents in Euclidean and hyperbolic space (\S\ref{sec:experiments}).
These results reflect the superior expressivity of SPD, and show the versatility of the approach and ease of integration with downstream tasks.


The vector-valued distance  allows us to develop a new tool for the analysis of the structural properties of the learned representations. With this tool, we visualize high-dimensional SPD embeddings, providing better explainability on what the models learn (\S\ref{sec:analysis}). We show that the knowledge graph models are capable of disentangling and clustering positive triples from negative ones.

\section{Related Work}

Symmetric positive definite matrices are not new in the Machine Learning literature. They have been used in a plethora of applications  \cite{arsigny2006spdGeometricMeans, dong2017spdToFaceRecog, harandi2014manToManforActionRecog, huang2017riemannianNetForSPDMatrix, huang2014euclidToRiemPointClassif,   huang2015logEuclideanSPD, ionescu2015matrixBackprop, liandlu2018spdLieGroupAlgorithm, nguyen2019spdHandGestureRecog, pennec2006spdmagneticResonanceImaging, salem2017gaussianDistribOnSPD, tosato2010multiclassClassifOnRiemManifolds, tuzel2006regionCovariance, tuzel2008spdPedestrianDetection,  wang2018analysisOnRiemManifoldImageSets, wu2015manifoldKernelforVisualTracking, Yin2016KernelClusteringSPD}), although not always respecting the intrinsic structure or the positive definiteness constraint \cite{carreira2012semanticSegmentationSPD, feragen2015curvatureAndLinearity, quang2014logHilbertSPD, Vemulapalli2015RiemannianML, Yin2016KernelClusteringSPD}.
The alternative has been to map manifold points onto a tangent space and employ Euclidean-based tools. Unfortunately, this mapping distorts the metric structure in regions far
from the origin of the tangent space affecting the performance \cite{jayasumana2013kernelOnSPD, zhao2019convexClassifSPD}. 

Previous work has proposed alternatives to the basic neural building blocks respecting the geometry of the space. For example, transformation layers \cite{dong2017spdToFaceRecog, gao2019robustRepreSPD, huang2017riemannianNetForSPDMatrix}, alternate convolutional layers based on SPDs 
\cite{zhang2018manifoldToManifold} and Riemannian means \cite{chakraborty2020manifoldNet}, or appended after the convolution \cite{brooks2019riemannianRadarData}, recurrent models \cite{chakraborty2018recurrentSPD}, projections onto Euclidean spaces \cite{li2018globalCovariance, mao2019cosonet} and batch normalization \cite{brooks2019riemBNforSPD}. Our work follows this line, providing explicit formulas for translating Euclidean arithmetic notions into SPDs.

Our general view, using the vector-valued distance function, allows us to treat Riemannian and Finsler metrics on SPD in a unified framework. Finsler metrics have previously been applied 
in compressed sensing \cite{Donoho:2008aa}, information geometry \cite{shen2006FinslerMetricInfoGeom}, for clustering categorical distributions \cite{Nielsen2019clusteringWithFinsler}, and in robotics \cite{ratliff2020finslerRobotics}.
With regard to optimization, matrix backpropagation techniques have been explored \cite{absil2009optimAlgosonMatrix, boumal2014manopt, ionescu2015matrixBackprop}, with some of them accounting for different Riemannian geometries \cite{brooks2019riemBNforSPD, huang2017riemannianNetForSPDMatrix}. Nonetheless, we opt for tangent space optimization \cite{chami2019hgcnn} by exploiting the explicit formulations of the exponential and logarithmic map.

\section{The Space $\spd_n$}
\label{sec:space-spd}
\begin{wrapfigure}{r}{0.5\textwidth}
\vspace{-3mm}
\centering
\includegraphics[width=.4\textwidth,keepaspectratio]{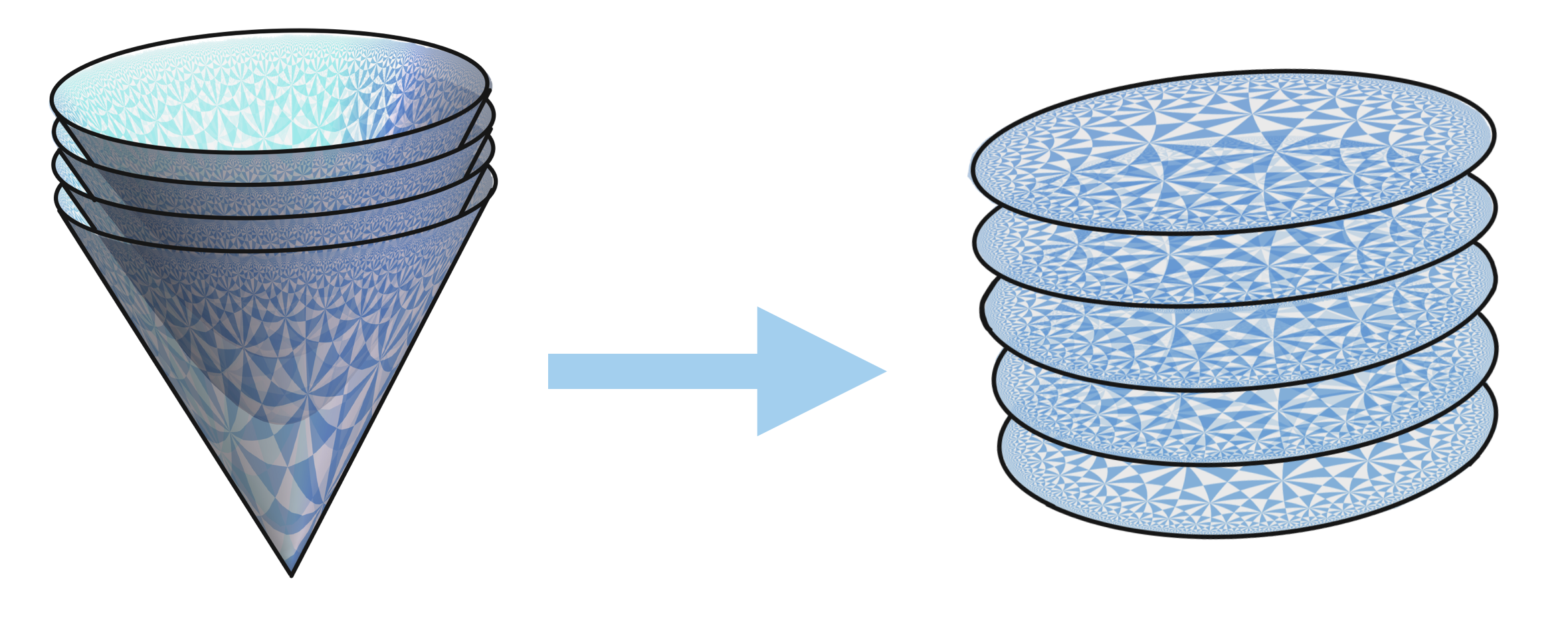}
\caption{$\spd_2$ is foliated by hyperboloids, each of which is a copy of the hyperbolic plane.}
\label{fig:spd2-cone-planes}
\vspace{-0mm}
\end{wrapfigure}
The space $\spd_n$ is a Riemannian manifold of non-positive curvature of $n(n+1)/2$ dimensions. Points in $\spd_n$ are positive definite real symmetric $n\times n$ matrices, with the identity matrix $I$ being a natural basepoint.
The tangent space to any point of $\spd_n$ can be identified with the vector space $S_n$ of all real symmetric $n\times n$ matrices.
$\spd_n$ contains $n$-dimensional Euclidean subspaces, $(n-1)$-dimensional hyperbolic subspaces as well as products of {$\lfloor\frac{n}{2}\rfloor$} hyperbolic planes (see \citet{helgason1078diffGeom} for an in-depth introduction).

In Figure~\ref{fig:spd2-cone-planes} we visualize the smallest nontrivial example. $\spd_2$ identifies with the inside of a cone in $\RR^3$, cut out by requiring both eigenvalues of the matrix $\left(\begin{smallmatrix}x&y\\y&z\end{smallmatrix}\right)$ to be positive. It carries the product geometry of the hyperbolic plane times a line.

\textbf{Exponential and logarithmic maps:}
The exponential map, $\exp\colon S_n\to \spd_n$, is a homeomorphism which links the Euclidean geometry of the tangent space $S_n$ and the curved geometry of $\spd_n$.
Its inverse is the logarithm map, $\log\colon \spd_n\to S_n$. This pair of functions give diffeomorphisms that allows one to freely move between ’tangent space coordinates’ or the original ’manifold coordinates'.
We apply both maps based at $I\in \spd_n$. The reason for this is that while mathematically any two points on $\spd_n$ are equivalent, and we could obtain a different concrete expression for any other choice of basepoint $B\in\spd_n$, the resulting formulas would be more complicated, and thus $I$ is the best choice from a computational point of view. We prove this in Appendix~\ref{sec:exponential-maps}.


\textbf{Symmetries:}
The prototypical symmetries of $\spd_n$ are parameterized by elements of $GL(n;\RR)$: any invertible matrix $M$ defines the symmetry $P\mapsto MPM^T$ acting on all points $P\in\spd_n$.
Thus many geometric transformations $\spd_n$ can be completed using standard optimized matrix algorithms as opposed to custom-built procedures. See Appendix~\ref{sec:metric_isom}. for a brief review of these symmetries.

Among these, we may find $\spd_n$-generalizations of familiar symmetries of Euclidean geometry.
When also $M$ is an element of $\spd_n$, the symmetry $P\mapsto MPM^T$ is a generalization of an Euclidean \emph{translation}, fixing no points of $\spd_n.$
When $M$ is an orthogonal matrix, the symmetry $P \mapsto MPM^{T}$ is conjugation by $M$, and thus fixes the basepoint $I=MIM^T=MM^{-1}=I$.
We think of elements fixing the basepoint as being $\spd_n$-\emph{rotations} or $\spd_n$-\emph{reflections}, when the matrix $M$ is a familiar rotation or reflection (see Figure~\ref{fig:rot-refl}).

The Euclidean symmetry of \emph{reflecting in a point} also has a natural generalization to $\spd_n$.
Euclidean reflection in the origin is given by $p\mapsto -p$; and its $\spd_n$-analog, reflection in the basepoint $I$, is matrix inversion $P\mapsto P^{-1}$.
The general $\spd_n$-reflection in a point $Q\in\spd_n$ is a conjugate of this by an $\spd_n$ translation, given by $P\mapsto Q P^{-1}Q$.

\begin{figure*}[!t]
\vspace{-2mm}
\centering
\subfloat{\includegraphics[width=.33\textwidth,keepaspectratio]{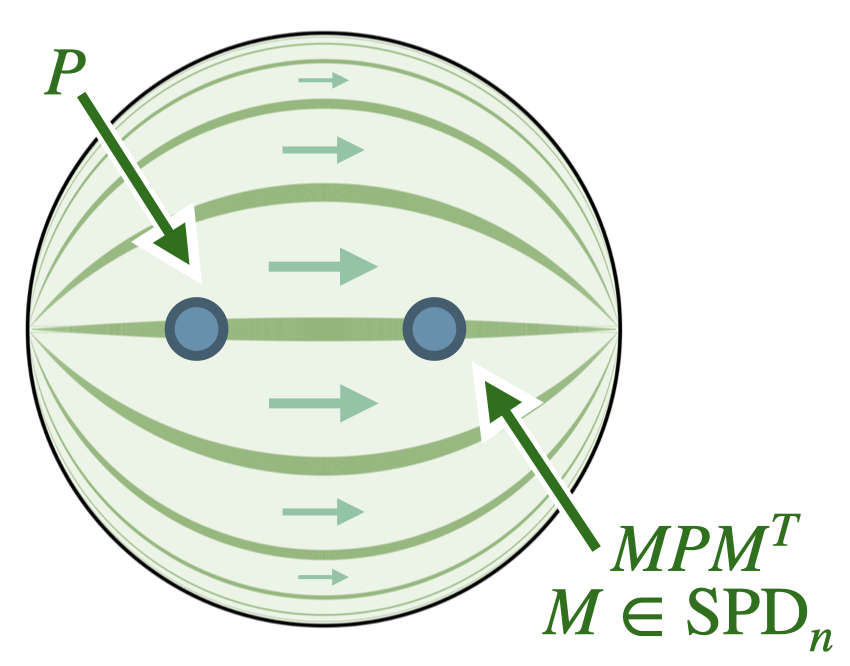}}\hfill
\subfloat{\includegraphics[width=.32\textwidth,keepaspectratio]{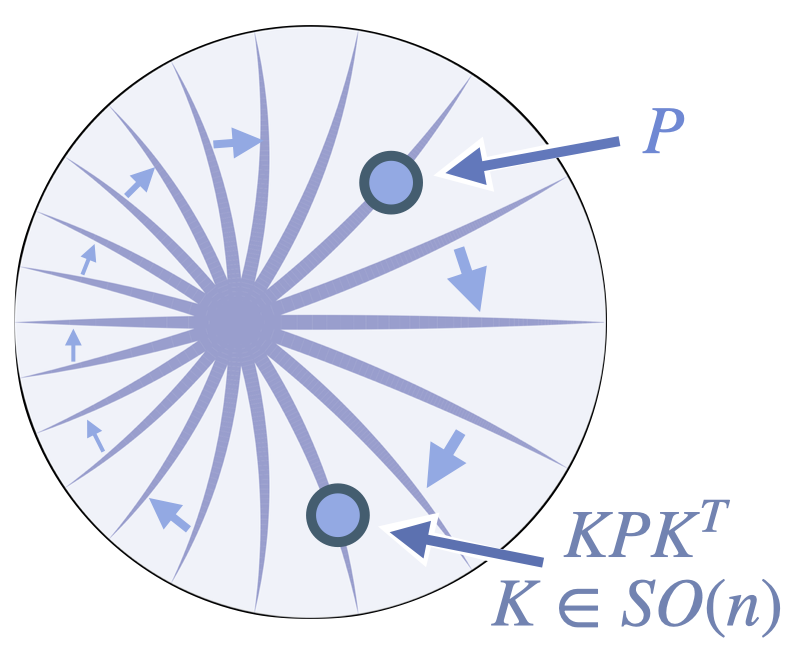}
}\hfill
\subfloat{\includegraphics[width=.33\textwidth,keepaspectratio]{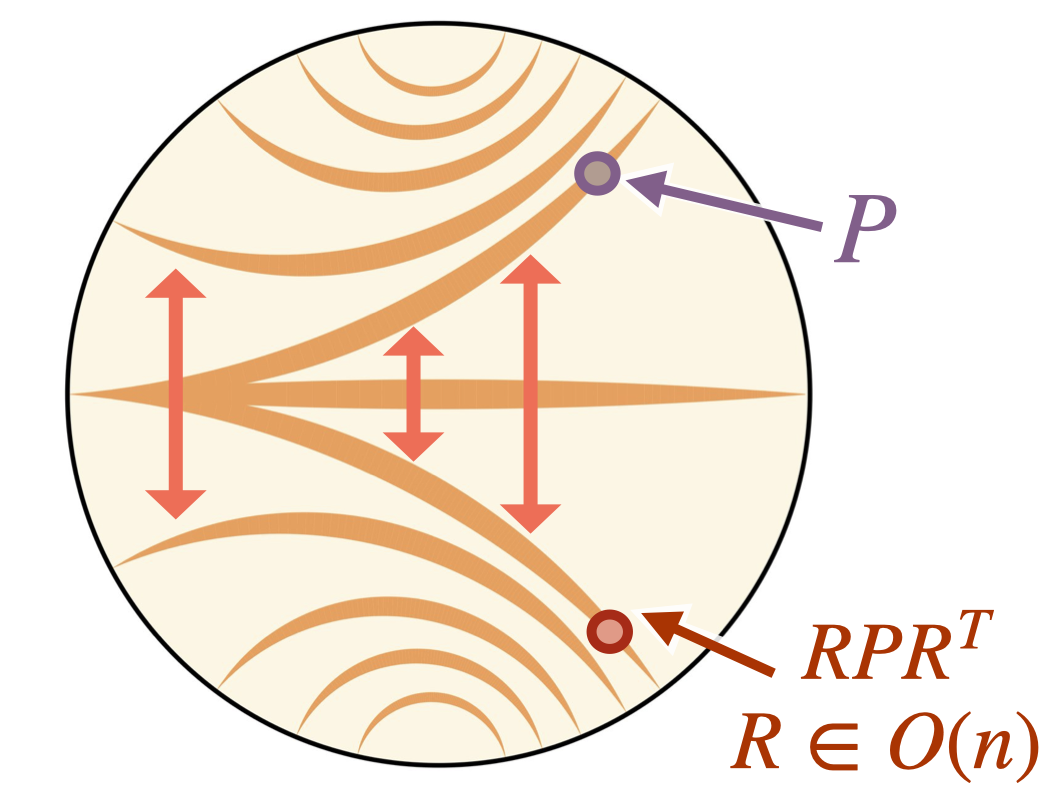}
}\hfill
\caption{Some isometries of $\spd_n$ have analogous Euclidean counterparts.  Translation (left), rotation (center) and reflection (right).}
\label{fig:rot-refl}
\vspace{-4mm}
\end{figure*}

\subsection{Vector-valued Distance Function}
\label{sec:VVD}

In Euclidean or hyperbolic spaces, the relative position between two points 
is completely determined by their distance, which is given by a scalar. For the space $\spd_n$, it is determined by a vector. 

\textbf{The VVD vector: }To assign this vector in SPD we introduce the vector-valued distance (VVD) function $d_{vv}\colon\spd_n \times \spd_n \to \mathbb{R}^n$.
For two points $P,Q\in\spd_n$, the VVD is defined as:
\begin{equation}
d_{vv}(P,Q) = \operatorname{log}(\lambda_1(P^{-1}Q),\ldots,\lambda_n(P^{-1}Q))  
\end{equation}
where $\lambda_1(P^{-1}Q) \geq \ldots \geq \lambda_n(P^{-1}Q)$ are the eigenvalues of $P^{-1}Q$ sorted in descending order.
This vector is an invariant of the relative position of two points up to isometry. 
This means that in $\spd_n$, only if the VVD between two points $A$ and $B$ is the vector $v \in \mathbb{R}^n$, and the VVD between $P$ and $Q$ is also $v$, then there exists an isometry mapping $A$ to $P$ and $B$ to $Q$. Thus, we can recover completely the relative position of two points in $\spd_n$  from this vector. For example, the Riemannian metric is obtained by using the standard $l_2$ norm on the VVD vector. This is: $d^{R}(P, Q) = ||d_{vv}(P, Q)||_{2}$.
See \cite{kapovich2017vectorValuedDistance} \S2.6 and Appendix~\ref{sec:VecValDist} for a review of VVDs in symmetric spaces. 


\begin{wrapfigure}{r}{0.65\textwidth}
 \centering
 \includegraphics[width=.64\textwidth,keepaspectratio]{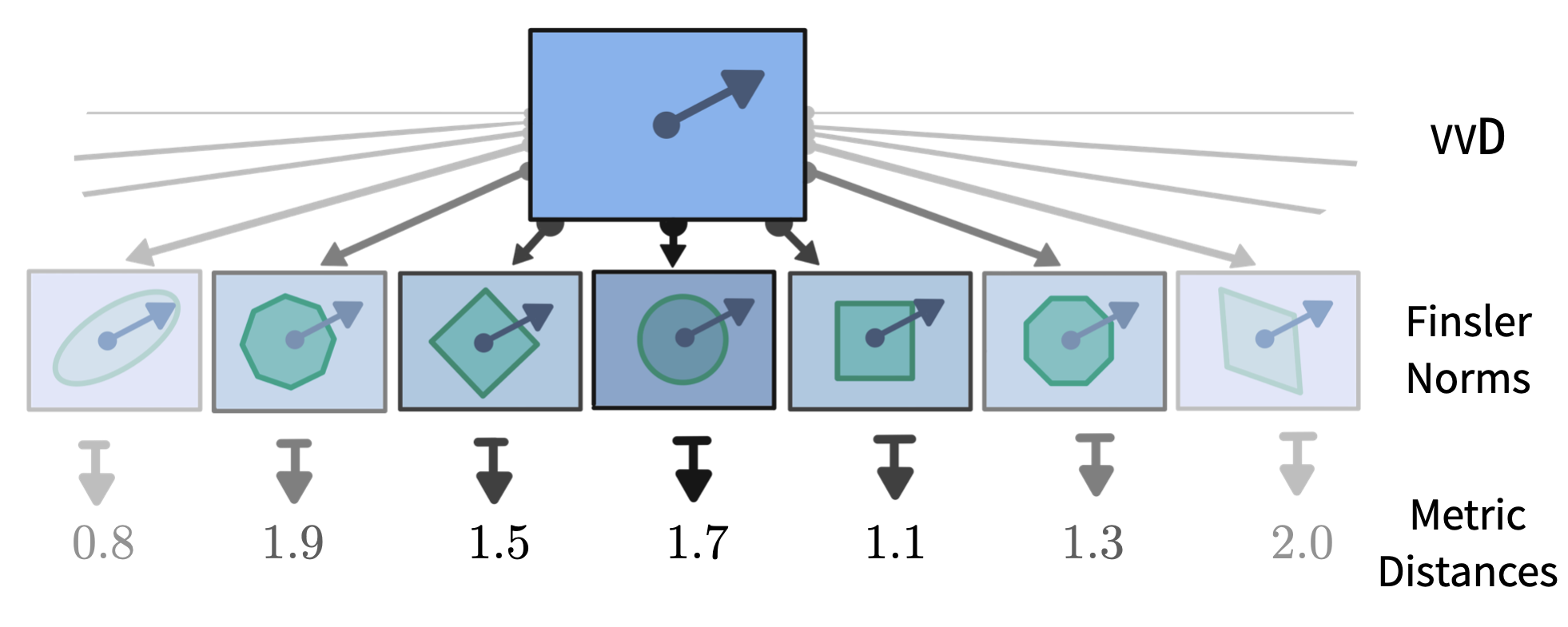}
\caption{The vector-valued distance  allows to reconstruct the Riemannian, or any Finsler distance.}
\label{fig:finslerdist}  
\vspace{-1mm}
\end{wrapfigure}
\textbf{Finsler metrics:} Any norm on $\mathbb{R}^n$ that is invariant under permutation of the entries induces a metric on $\spd_n$. 
Moreover, $\spd_n$ do not only support a Riemannian metric, but also Finsler metrics, a whole family of distances with the same symmetry group (group of isometries). 
These metrics are of special importance since distance minimizing geodesics are not necessarily unique in Finsler geometry. Two different paths can have the same minimal length. This is particularly valuable when embedding graphs in $\spd_n$, since in graphs there are generally several shortest paths.
We obtain the Finsler metrics $\rm{F}_1$ or $\rm{F}_{\infty}$ by taking the respective $\ell_1$ or $\ell_{\infty}$ norms of the VVD in $\mathbb{R}^n$ (see Figure~\ref{fig:finslerdist}). 
See \citet{planche1995finslerMetrics} and Appendix~\ref{sec:FinslerDist} for a review of the theory of Finsler metrics, \cite{bhatia2003expMetricSPD} for the study of some Finsler metrics on $\spd_n$, and \cite{lopez2021symmetric} for applications of Finsler metrics on symmetric spaces in representation learning. 


\textbf{Advantages of VVD:} The proposed metric learning framework based on the VVD offers several advantages.
First, a single model can be run with different metrics, according to the chosen norm. The VVD contains the full information of the Riemannian distance and of all invariant Finsler distances, hence we can easily recover the Riemannian metric and extend the approach to many other alternatives (in Appendix~\ref{sec:app-relation-with-other-metrics}, we detail how the VVD generalizes other $\spd_n$ metrics).
Second, the VVD provides much more information than just the distance, and can be used to analyze the learned representation in $\spd_n$, independent of the choice of the metric. 
Out of the VVD between two points, one can immediately read the regularity of the unique geodesics joining these two points. Geodesics in $\spd_n$ have different regularity, which is related with the number of maximal Euclidean subspaces that contain the geodesic. The Riemannian or Finsler distances cannot distinguish the differences between these geodesics of different regularity, but the VVD function can.
Third, the VVD function can be leveraged as a tool to visualize and analyze the learned high-dimensional representations (see \S\ref{sec:analysis}).

\section{Gyrocalculus} 
\label{sec:gyrocalculus}

\vspace{-1mm}




To build an analog of many Euclidean operators in $\spd_n$, we require also a translation of operations internal to Euclidean geometry, chief among these being the vector space operations of addition and scalar multiplication.
By means of tools introduced in pure mathematics literature\footnote{See \cite{abe2015generalizedGyrovectors, hatori2017examplesGyrovector} for an abstract treatment of gyrocalculus, and \cite{kim2016gyrovectorOnSPD} for the specific examples discussed here.}, we describe a gyro-vector space structure on $\spd_n$, which provides geometrically meaningful extensions of these vector space operations, extending successful applications of this framework in geometric deep learning to hyperbolic space \cite{ganea2018hyperNN,lopez2020fullyhyper, shimizu2021hyperNNplusplus}.
These operations provide a template for translation, where one may
attempt to replace $+,-,\times$ in formulas familiar from Euclidean spaces with the analogous operations $\oplus,\ominus,\otimes$ on $\spd_n$.
While straightforward, such translation requires some care, as gyro-addition is neither commutative nor associative.
See Appendix~\ref{sec:appendix-gyrocalc} for a review of the underlying general theory of gyrogroups and additional guidelines for accurate formula translation. 

\textbf{Addition and Subtraction:}
\label{sec:gyrogroup-structure-gyroaddition}
Given a fixed choice $I$ of basepoint and two points $P,Q\in \spd_n$, we define the gyroaddition of $P$ and $Q$ to be the point $P\oplus Q\in\spd_n$ which is the image of $Q$ under the isometry which translates $I$ to $P$ along the geodesic connecting them.  This directly generalizes the gyroaddition of hyperbolic space  exploited by \cite{bachmann2020ccgcn, ganea2018hyperNN, shimizu2021hyperNNplusplus}, via the geometric interpretation of \citet{vermeer2005geometricInterpretationGyroaddition} (see Figure~\ref{fig:gyro-operations}).

Fixing $P\in\spd_n$, we may compute the value of $P\oplus Q$ for arbitrary $Q$ as the result of applying the $\spd_n$-translation moving the basepoint to $P$, evaluated on $Q$.  We see also the additive inverse of a point with respect to this operation must then be given by its geodesic reflection in $I$.
\begin{equation}
\label{eq:gyro-addition}
      P\oplus Q =\sqrt{P}Q\sqrt{P}\hspace{1cm}\ominus P = P^{-1}
\end{equation}
As this operation encodes a symmetry of $\spd_n$, it is possible to recast certain geometric statements purely in the gyrovector formalism.  In particular, the vector-valued distance $d_{vv}(P,Q)$ may be computed as the logarithm of the eigenvalues of $\ominus P\oplus Q$ (see Appendix~\ref{sec:RiemDist}).

\begin{figure*}[!t]
\vspace{-2mm}
\centering
\subfloat{\includegraphics[width=.32\textwidth,keepaspectratio]{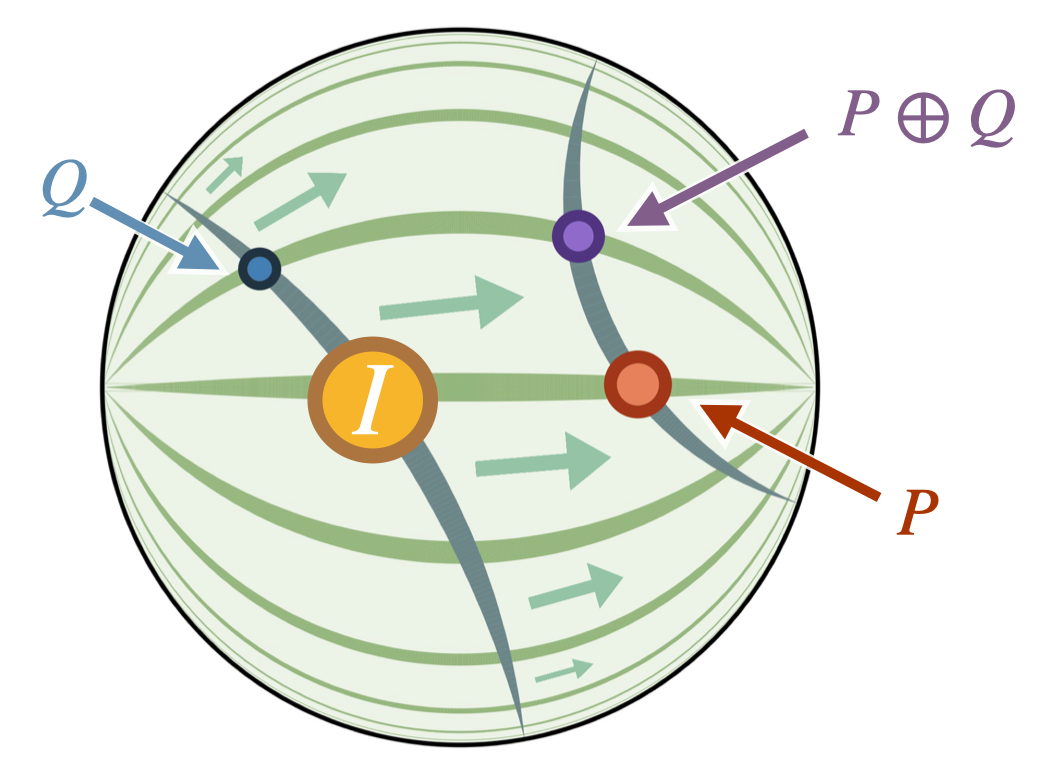}}\hfill
\subfloat{\includegraphics[width=.32\textwidth,keepaspectratio]{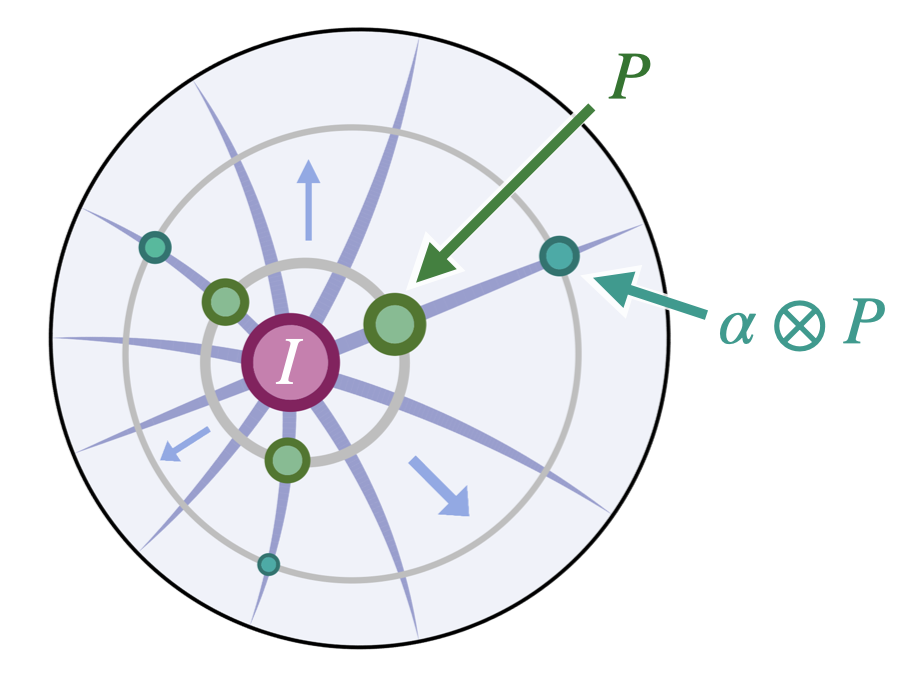}}\hfill
\subfloat{\includegraphics[width=.32\textwidth,keepaspectratio]{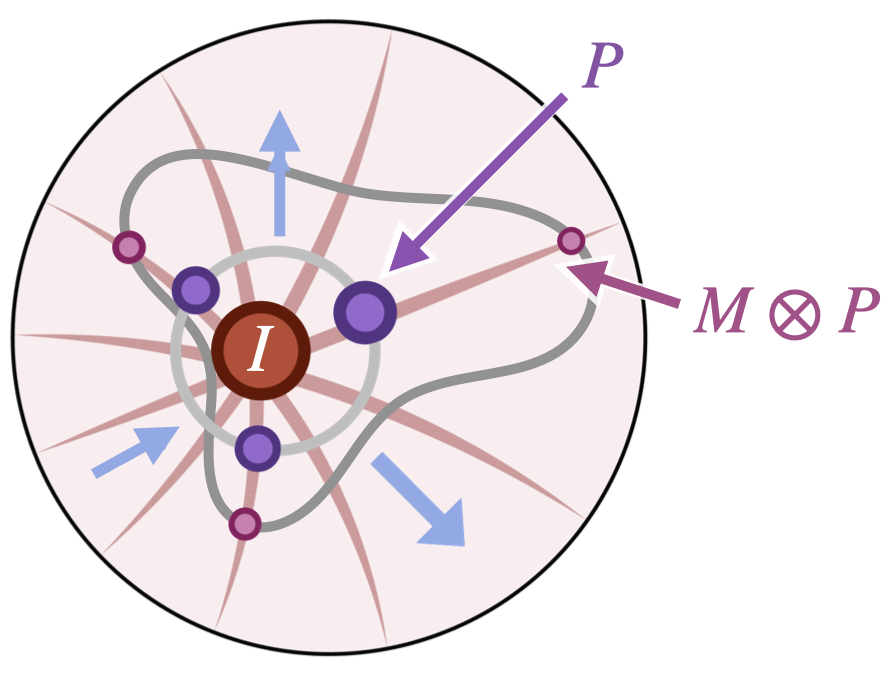}}\hfill
\caption{Gyro-addition (left), gyro-scalar multiplication (center) and matrix scaling (right).}
\label{fig:gyro-operations}
\vspace{-4mm}
\end{figure*}

\textbf{Scalar Multiplication and Matrix Scaling:}
\label{sec:gyro-matrix-scaling}
For a fixed basepoint $I$, we define the scalar multiplication of a point $P\in\spd_n$ by a scalar $\alpha\in \RR_+$ to be the point which lies at distance $\alpha d(I,P)$ from $I$ in the direction of $P$, where $d(\cdot,\cdot)$ is the metric distance on $\spd_n$.
Geometrically, this is a transfer of the vector-space scalar multiplication on the tangent space to $\spd_n$:
\begin{equation}
\label{eq:scalar-multiplication}
    \alpha\otimes P=P^\alpha=\exp(\alpha\log(P)),
\end{equation}
where $\exp,\log$ are the matrix exponential and logarithm. 
We further generalize the notion of scalar multiplication to allow for different relative expansion rates in different directions. For a fixed basepoint $I$ and a point $P \in \spd_n$, we can replace the scalar $\alpha$ from Eq.~\ref{eq:scalar-multiplication} with an arbitrary real symmetric matrix $A \in S_n$. We define this \emph{matrix scaling} by:
\begin{equation}
\label{eq:gyro-matrix-scaling}
    A \otimes P = \exp(A \odot \log(P))
\end{equation}
where $A \odot X$ denotes the Hadamard product. We denote the matrix scaling with $\otimes$, extending the previous usage: for any $\alpha\in \RR$, we have $[\alpha]\otimes P=\alpha\otimes P$ where $[\alpha]$ is the matrix with every entry $\alpha$.

\section{Implementation}
In this section we detail how we learn representations in $\spd_n$, and implement different linear mappings so that they conform to the premises of each operator, yielding SPD neural layers.

\textbf{Embeddings in $\spd_n$ and $S_n$:}
\label{sec:points-in-spd}
We are interested in learning embeddings in $\spd_n$. To do so we exploit the connection between $\spd_n$ and its tangent space $S_n$ through the exponential and logarithmic maps.
To learn a point $P \in \spd_n$, we first model it as a symmetric matrix $U \in S_n$. We impose symmetry on $U$ by learning a triangular matrix $X \in \RR^{n \times n}$ with $\nicefrac{n(n+1)}{2}$ parameters, such that $U = X + X^T$.
To obtain the matrix $P \in \spd_n$, we employ the exponential map: $P = \exp(U)$. 
Modeling points on the tangent space offers advantages for optimization, explained in \S\ref{sec:optimization}.
For the matrix scaling $A \otimes P$, we impose symmetry on the factor matrix $A \in S_n$ in the same way that we learn the symmetric matrix $U$.

\textbf{Isometries: Rotations and Reflections: }
Rotations in $n$ dimensions are described as collections of pairwise orthogonal $2$-dimensional rotations in planes (with a leftover 1-dimensional "axis of rotation" in odd dimensions).  
We utilize this observation to efficiently build elements of $O(n)$ out of two-dimensional rotations in coordinate planes.
More precisely, for any $\theta\in[0,2\pi)$ and choice of sign $\{+,-\}$ we let $R^\pm(\theta)$ denote the 2-dimensional rotation ($+$) or reflection ($-$) as 
$R^\pm(\theta)=\left(\begin{smallmatrix}
\cos \theta&\mp\sin\theta\\
\sin\theta&\pm\cos\theta
\end{smallmatrix}\right)
$.  
\begin{wrapfigure}{r}{0.4\textwidth}
\vspace{-3mm}
\centering
\small
$$R_{24}^+(\theta)=\begin{pmatrix}
1&0&0&0&0\\
0&\cos\theta&0&-\sin\theta&0\\
0&0&1&0&0\\
0&\sin\theta&0&\cos\theta&0\\
0&0&0&0&1
\end{pmatrix}$$
\label{fig:matrix-rel}
\vspace{-3mm}
\end{wrapfigure}
Then for any pair $i<j$ in $1\ldots n$, we denote by $R_{ij}^\pm(\theta)$ the transformation which applies $R^\pm(\theta)$ to the $x_ix_j$-plane of $\RR^n$, and leaves all other coordinates fixed. 
For example, in $O(5)$ the element $R_{24}^+(\theta)$ (see on the right) denotes the transformation where we replace the entries $(ii, ij, ji, jj)$ of $I_n$ with the corresponding values of $R^+(\theta)$. More general, rotations and reflections are built by taking products of these basic transformations. Given a $\nicefrac{n(n-1)}{2}$-dimensional vector of angles $\vec{\theta}=(\theta_{12},\ldots,\theta_{ij},\ldots,)$ and a choice of sign, we define the rotation and reflection corresponding to $\vec{\theta}$ by:
\begin{equation}
\label{eq:rot-ref-matrix-construction}
    \operatorname{Rot}(\vec{\theta})=\prod_{i<j}R^+_{ij}(\theta_{ij})\hspace{1cm} \operatorname{Refl}(\vec{\theta})=\prod_{i<j}R^-_{ij}(\theta_{ij})
\end{equation}
where $\operatorname{Rot}(\vec{\theta}), \operatorname{Ref}(\vec{\theta}) \in \RR^{n \times n}$ are the isometry matrices, and the vector of angles $\vec{\theta}$ can be regarded as a learnable parameter of the model. Finally, we  denote the application of the transformation $M$ to the point $P \in \spd_n$ by:
\begin{equation}
    \label{eq:rotref}
    M \circledcirc P = MPM^T
\end{equation}

\textbf{Optimization:}
\label{sec:optimization}
For the proposed rotations and reflections, the learnable weights are vectors of angles $\vec{\theta} \in \RR^{\frac{n(n-1)}{2}}$, which do not pose an optimization challenge.
On the other hand, embeddings in $\spd$ have to be optimized respecting the geometry of the manifold, but as already explained, we model them on the space of symmetric matrices $S_n$, and then we apply the exponential map. In this manner, we are able to perform tangent space optimization \cite{chami2019hgcnn} using standard Euclidean techniques, and circumvent the need for Riemannian optimization \cite{bonnabel2011rsgd, becigneul2019riemannianMethods}, which we found to be less numerically stable. Due to the geometry of $\spd_n$ (see Appendix~\ref{sec:exponential-maps}), this is an exact procedure, which does not incur losses in representational power.

\textbf{Complexity:} The most frequently utilized operation when learning embeddings is the distance calculation, thus we analyze its complexity. In Appendix~\ref{app:dist-alg-complexity} we detail the complexity of different operations.
Calculating the distance between two points in $\spd_n$ implies computing multiplications, inversions and diagonalizations of $n \times n$ matrices. We find that the cost of the distance computation with respect to the matrix dimensions is $\bigO(n^3)$. Although a matrix of rank $n$ implies $n(n+1)/2$ dimensions thus a large $n$ value is usually not required, the cost of many operations is polynomial instead of linear.

\textbf{Towards neural network architectures:}
We employ the proposed mappings along with the gyro-vector operations 
as building blocks for SPD neural layers. This is showcased in the experiments presented below.
Scalings, rotations and reflections can be seen as feature transformations. Moreover, gyro-addition allows us to define the equivalent of bias addition. Finally, although we do not employ non-linearities, our approach can be seamlessly integrated with the ReEig layer (adaptation of a ReLU layer for SPD) proposed in \cite{huang2017riemannianNetForSPDMatrix}.


\section{Experiments}
\label{sec:experiments}

In this section we employ the transformations developed on SPD to build neural models for knowledge graph completion, item recommendation and question answering. Task-specific models in different geometries have been developed in the three cases, hence we consider them adequate benchmarks for representation learning.\footnote{Code available at \url{https://github.com/fedelopez77/gyrospd}}

\subsection{Knowledge Graph Completion}
\label{sec:kg-completion}
Knowledge graphs (KG) 
represent heterogeneous knowledge in the shape of \textit{(head, relation, tail)} triples, where \textit{head} and \textit{tail} are entities and \textit{relation} represents a relationships among entities. 
KG exhibit an intricate and varying structure 
where entities can be connected by symmetric, anti-symmetric, or hierarchical relations. To capture these non-trivial patterns more expressive modelling techniques become necessary \cite{chami2020lowdimkge}, thus we choose this application to showcase the capabilities of our transformations on SPD manifolds.
Given an incomplete KG, the task is to predict which unknown links are valid.

\textbf{Problem formulation:}
Let $\mathcal{G} = (\EntitySub,\RelSub,\TripSub)$ be a knowledge graph where $\EntitySub$ is the set of entities, $\RelSub$ is the set of relations and $\TripSub \subset \EntitySub \times \RelSub \times \EntitySub$ is the set of triples stored in the graph. 
The usual approach is to learn a scoring function $\phi : \EntitySub \times \RelSub \times \EntitySub \rightarrow \R$ that measures the likelihood of a triple to be true, with the goal of scoring all missing triples correctly.
To do so, we propose to learn representations of entities as embeddings in $\spd_n$, and relation-specific transformation in the manifold, such that the KG structure is preserved.


\textbf{Scaling model:} We follow the base hyperbolic model MuRP \cite{balazevic2019murp} and adapt it into $\spd_n$ by means of the \textit{matrix scaling}. Its scoring function has shown success in the task given that it combines multiplicative and additive components, which are fundamental to model different properties of KG relations \cite{allenBalazevic2021interpretingKG}. We translate it into $\spd_n$ as:
\begin{equation}
\label{eq:kg-scaling-scoring_fn}
    \phi(h, r, t) = -d((\textbf{M}_r \otimes \textbf{H}) \oplus \textbf{R}, \textbf{T})^{2} + b_h + b_t
\end{equation}
where $\textbf{H}, \textbf{T} \in \spd_n$ are embeddings and $b_h, b_t \in \RR$ are scalar biases for the head and tail entities respectively. $\textbf{R} \in \spd_n$ and $\textbf{M}_r$ are matrices that depend on the relation. For $d(\cdot,\cdot)$, we experiment with the Riemannian and the Finsler One metric distances.

\textbf{Isometric model:} A possible alternative is to embed the relation-specific transformations as elements of the $O(n)$ group (\textit{i.e.}, rotations and reflections). This technique has proven effective in different metric spaces \cite{chami2020lowdimkge, yang2020NagEnonAbelianGroupsKG}. In this case, $\textbf{M}_r$ is a rotation or reflection matrix as in Eq.~\ref{eq:rot-ref-matrix-construction}, and the scoring function is defined as:
\begin{equation}
\label{eq:kg-isometry-scoring_fn}
    \phi(h, r, t) = -d((\textbf{M}_r \circledcirc \textbf{H}) \oplus \textbf{R}, \textbf{T})^{2} + b_h + b_t
\end{equation}

\textbf{Datasets:} We employ two standard benchmarks, namely WN18RR \cite{bordes2013transe, dettmers2018conve} and FB15k-237 \cite{bordes2013transe, toutanova2015kgObserveFeats}. WN18RR  is a subset of WordNet \cite{miller1995wordnet} containing $11$ lexical relationships between $40,943$ word senses. 
FB15k-237 is a subset of Freebase \cite{bollacker2008freebase}, 
with $14,541$ entities and $237$ relationships.

\begin{wrapfigure}{r}{0.45\textwidth}
\vspace{-2mm}
\small
\begin{equation}
\label{eq:cross-entropy-loss}
\mathcal{L} = \sum_{(h, r, t) \in \mathcal{T}} \log(1 + \operatorname{exp}(Y_{t} \phi(h, r, t)))
\end{equation}
\vspace{-6mm}
\end{wrapfigure}
\textbf{Training:} 
We follow the standard data augmentation protocol by adding inverse relations to the datasets \cite{lacroix2018tensordecomp}. 
We optimize the cross-entropy loss with uniform negative sampling defined in Equation~\ref{eq:cross-entropy-loss},
where $\mathcal{T}$ is the set of training triples, and $Y_{t} = -1$ if $t$ is a factual triple or $Y_{t} = 1$ if $t$ is a negative sample.
We employ the AdamW optimizer \cite{loshchilov2018adamW}.
We conduct a grid search with matrices of dimension $n \times n$ where $n \in \{14, 20, 24\}$ (this is the equivalent of $\{105, 210, 300\}$ degrees of freedom respectively) to select optimal dimensions, learning rate and weight decay, using the validation set. More details and set of hyperparameters in Appendix~\ref{app:expdetails-kgcompletion}.

\textbf{Evaluation metrics:} At test time, we rank the correct tail or head entity against all possible entities using the scoring function, and use inverse relations for head prediction \cite{lacroix2018tensordecomp}. Following previous work, we compute two ranking-based metrics: mean reciprocal rank (MRR), which measures the mean of inverse ranks assigned to correct entities, and hits at K (H@K, $K \in \{1, 3, 10\}$), which measures the proportion of correct triples among the top K predicted triples. We follow the standard evaluation protocol of filtering out all true triples in the KG during evaluation \cite{bordes2013transe}.

\textbf{Baselines:} We compare our models with their respective equivalents in different metric spaces, which are also state-of-the-art models for the task. 
For the scaling model, these are \textsc{MuRE} and \textsc{MuRP} \cite{balazevic2019murp}, which perform the scaling operation in Euclidean and hyperbolic space respectively. For the isometric models, we compare to \textsc{RotC} \cite{sun2018rotate}, \textsc{RotE} and \textsc{RotH}, \cite{chami2020lowdimkge} (rotations in Complex, Euclidean and hyperbolic space respectively), and \textsc{RefE} and \textsc{RefH} \cite{chami2020lowdimkge} (reflections in Euclidean and hyperbolic space). 
Baseline results are taken from the original papers.
We do not compare to previous work on SPD given that they lack the definition of an arithmetic operation in the space, thus a vis-a-vis comparison is not possible. 

\textbf{Results:}
We report the performance for all analyzed models, segregated by operation, in Table~\ref{tab:kg-results}.
On both dataset, the scaling model $\operatorname{SPD}_{\operatorname{Sca}}$ outperforms its direct competitors MuRE and MuRP, and this is specially notable in HR@10 for WN18RR: $59.0$ for \SPDtraFone vs $55.4$ and $56.6$ respectively. 
SPD reflections are very effective on WN18RR as well. They outperform their Euclidean and hyperbolic counterparts RefE and RefH, in particular when equipped with the Finsler metric.
Rotations on the SPD manifold, on the other hand, seem to be less effective. However, Euclidean and hyperbolic rotations require $500$ dimensions whereas the $\operatorname{SPD}_{\operatorname{Rot}}$ models are trained on matrices of rank $14$ (equivalent to $105$ dims).
Moreover, the underperformance observed in some of the analyzed cases for rotations and reflections does not repeat in the following experiments (\S\ref{sec:kg-rs-exps} \& \S\ref{sec:qa-exps}). Hence, we consider this is due to overfitting in some particular setups. Although we tried different regularization methods, we regard a sub-optimal configuration rather than a geometric reason to be the cause for the underperformance.

Regarding the choice of a distance metric, the Finsler One metric is better suited with respect to HR@3 and HR@10 when using scalings and reflections on WN18RR.
For the FB15k-237 dataset, SPD models operating with the Riemannian metric outperform their Finsler counterparts. This suggests that the Riemannian metric is capable of disentangling the large number of relationships in this dataset to a better extent.

In these experiments we have evaluated models applying equivalent operations and scoring functions in different geometries, thus they can be thought as a vis-a-vis comparison of the metric spaces. We observe that SPD models tie or outperform baselines in most instances. This showcases the improved representation capacity of the SPD manifold when compared to Euclidean and hyperbolic spaces. Moreover, it demonstrates the effectiveness of the proposed metrics and operations in this manifold.

\begin{table}[!t]
\vspace{-4mm}
\caption{Results for Knowledge graph completion.}
\label{tab:kg-results}
\small
\centering
\adjustbox{max width=0.85\linewidth}{
\begin{tabular}{llrrrrrrrr}
\toprule
 &  & \multicolumn{4}{c}{WN18RR} & \multicolumn{4}{c}{FB15k-237} \\
 \cmidrule(lr){3-6}\cmidrule(lr){7-10}
Operation & Model & \multicolumn{1}{l}{MRR} & \multicolumn{1}{l}{HR@1} & \multicolumn{1}{l}{HR@3} & \multicolumn{1}{l}{HR@10} & \multicolumn{1}{l}{MRR} & \multicolumn{1}{l}{HR@1} & \multicolumn{1}{l}{HR@3} & \multicolumn{1}{l}{HR@10} \\
\midrule
\multirow{4}{*}{Scaling} & \textsc{MuRE} & 47.5 & 43.6 & 48.7 & 55.4 & 33.6 & 24.5 & 37.0 & 52.1 \\
 & \textsc{MuRP} & 48.1 & \textbf{44.0} & 49.5 & 56.6 & 33.5 & 24.3 & 36.7 & 51.8 \\
 & \SPDtraRiem & 48.1 & 43.1 & 50.1 & 57.6 & \textbf{34.5} & \textbf{25.1} & \textbf{38.0} & \textbf{53.5} \\
 & \SPDtraFone & \textbf{48.4} & 42.6 & \textbf{51.0} & \textbf{59.0} & 32.9 & 23.6 & 36.3 & 51.5 \\
 \midrule
\multirow{5}{*}{Rotations} & \textsc{RotC} & 47.6 & 42.8 & 49.2 & 57.1 & 33.8 & 24.1 & 37.5 & 53.3 \\
 & \textsc{RotE} & 49.4 & 44.6 & 51.2 & 58.5 & \textbf{34.6} & \textbf{25.1} & \textbf{38.1} & \textbf{53.8} \\
 & \textsc{RotH} & \textbf{49.6} & \textbf{44.9} & \textbf{51.4} & \textbf{58.6} & 34.4 & 24.6 & 38.0 & 53.5 \\
 & \SPDrotRiem & 46.2 & 39.7 & 49.6 & 57.8 & 32.9 & 23.6 & 36.3 & 51.6 \\
 & \SPDrotFone & 40.9 & 30.5 & 48.2 & 57.3 & 32.1 & 22.9 & 35.4 & 50.5 \\
 \midrule
\multirow{4}{*}{Reflections} & \textsc{RefE} & 47.3 & 43.0 & 48.5 & 56.1 & \textbf{35.1} & \textbf{25.6} & \textbf{39.0} & \textbf{54.1} \\
 & \textsc{RefH} & 46.1 & 40.4 & 48.5 & 56.8 & 34.6 & 25.2 & 38.3 & 53.6 \\
 & \SPDrefRiem & 48.3 & 44.0 & 49.7 & 56.7 & 32.5 & 23.4 & 35.6 & 51.0 \\
 & \SPDrefFone & \textbf{48.7} & \textbf{44.3} & \textbf{50.1} & \textbf{57.4} & 31.6 & 22.5 & 34.6 & 50.0 \\
 \bottomrule
\end{tabular}
}
\vspace{-5mm}
\end{table}

\subsection{Knowledge Graph Recommender Systems}
\label{sec:kg-rs-exps}
Recommender systems (RS) model user preferences to provide personalized recommendations \cite{zhang2019dlrssurvey}.
KG embedding methods have been widely adopted into the recommendation problem as an effective tool to model side information and enhance the performance \cite{zhang2016collkbe,guo2020kgsurvey}. 
For instance, one reason for recommending a movie to a particular user is that the user has already watched many movies from the same genre or director \cite{ma2019explainableRuleskgrecosys}.
Given multiple relations between users, items, and heterogeneous entities, the goal is to predict the user's next item purchase or preference. 

\textbf{Model:} We model the recommendation problem as a link prediction task over users and items \cite{li2014rslinkpred}. In addition, we aim to incorporate side information between users, items and other entities. Hence we apply our KG embedding method from \S\ref{sec:kg-completion} as is, to embed this multi-relational graph. We evaluate the capabilities of the approach by only measuring the performance over user-item interactions.

\textbf{Datasets:} To investigate the recommendation problem endowed with added relationships, we employ the Amazon dataset \cite{mcauley2013hft, nimcauley2019extamazondata} (branches "Software", "Luxury \& Beauty" and "Prime Pantry"), with users' purchases of products, and the MindReader dataset \cite{brams2020mindreader} of movie recommendations. 
Both datasets provide additional relationships between users, items and entities such as product brands, or movie directors and actors.
To generate evaluation splits, the penultimate and last item the user has interacted with are withheld as dev and test sets respectively.

\textbf{Training:} In this setup we also augment the data by adding inverse relations and optimize the loss from Equation~\ref{eq:cross-entropy-loss}.
We set the size of the matrices to $10 \times 10$ dimensions (equivalent to $55$ free parameters). 
More details about relationships and set of hyperparameters in Appendix~\ref{app:expdetails-kgrecosys}.

\textbf{Evaluation and metrics:} 
We follow the standard procedure of evaluating against $100$ randomly selected samples the user has not interacted with \cite{he2017neuralCF, lopez2021relco}.
To evaluate the recommendation performance we focus on the \textit{buys / likes} relation. For each user $u$ we rank the items $i_{j}$ according to the scoring function $\phi(u, buys, i_{j})$.
We adopt MRR and H@10, as ranking metrics for recommendations.

\textbf{Baselines:} We compare to TransE \cite{bordes2013transe}, RotC \cite{sun2018rotate}, MuRE and MuRP \cite{balazevic2019murp} trained with $55$ dimensions.

\begin{table}[!t]
\vspace{-4mm}
\caption{Results for Knowledge graph-based recommender systems.}
\label{tab:recosys-results}
\small
\centering
\adjustbox{max width=0.95\linewidth}{
\begin{tabular}{lcccccccc}
\toprule
 & \multicolumn{2}{c}{\textsc{Software}} & \multicolumn{2}{c}{\textsc{Luxury}} & \multicolumn{2}{c}{\textsc{Pantry}} & \multicolumn{2}{c}{\textsc{MindReader}} \\
Model & MRR & H@10 & MRR & H@10 & MRR & H@10 & MRR & H@10 \\
\cmidrule(lr){2-3}\cmidrule(lr){4-5}\cmidrule(lr){6-7}\cmidrule(lr){8-9}
\textsc{TransE} & 28.5$\pm$0.1 & 47.2$\pm$0.5 & 35.6$\pm$0.1 & 52.3$\pm$0.1 & 16.6$\pm$0.0 & 35.3$\pm$0.1 & 19.1$\pm$0.4 & 37.6$\pm$0.1 \\
\textsc{RotC} & 28.5$\pm$0.3 & 45.4$\pm$1.4 & 33.0$\pm$0.1 & 49.8$\pm$0.2 & 14.5$\pm$0.0 & 31.3$\pm$0.2 & 25.3$\pm$0.3 & 50.3$\pm$0.6 \\
\textsc{MuRE} & 29.4$\pm$0.4 & 47.1$\pm$0.4 & 35.6$\pm$0.7 & 54.0$\pm$0.3 & 19.4$\pm$0.1 & 39.5$\pm$0.2 & 25.2$\pm$0.3 & 49.9$\pm$0.6 \\
\textsc{MuRP} & 29.6$\pm$0.3 & 47.9$\pm$0.3 & \textbf{37.5$\pm$0.1} & \textbf{55.2$\pm$0.3} & 19.4$\pm$0.1 & 39.8$\pm$0.2 & 25.3$\pm$0.3 & 49.3$\pm$0.2 \\
\SPDtraRiem & 29.4$\pm$0.4 & 48.1$\pm$0.8 & \textbf{37.5$\pm$0.2} & 55.1$\pm$0.2 & 19.5$\pm$0.0 & 39.6$\pm$0.3 & 25.4$\pm$0.1 & 49.8$\pm$0.3 \\
\SPDtraFone & 28.8$\pm$0.1 & 46.9$\pm$0.5 & 37.3$\pm$0.3 & 54.1$\pm$0.9 & 19.0$\pm$0.1 & 38.8$\pm$0.2 & \textbf{25.7$\pm$0.5} & 49.5$\pm$0.1 \\
\SPDrotRiem & \textbf{30.3$\pm$0.2} & 48.6$\pm$0.9 & 37.2$\pm$0.1 & 54.8$\pm$0.4 & \textbf{20.0$\pm$0.1} & \textbf{40.3$\pm$0.1} & 25.3$\pm$0.0 & \textbf{50.5$\pm$0.3} \\
\SPDrotFone & 30.1$\pm$0.1 & \textbf{49.1$\pm$0.3} & 36.9$\pm$0.1 & 54.5$\pm$0.6 & 19.2$\pm$0.0 & 39.3$\pm$0.1 & \textbf{25.7$\pm$0.0} & 49.5$\pm$0.2 \\
\SPDrefRiem & 29.6$\pm$0.2 & 48.0$\pm$0.5 & 37.3$\pm$0.2 & 55.0$\pm$0.2 & 19.3$\pm$0.0 & 39.7$\pm$0.3 & 25.3$\pm$0.0 & 49.1$\pm$0.1 \\
\SPDrefFone & 29.3$\pm$0.1 & 47.5$\pm$0.6 & 36.8$\pm$0.0 & 54.8$\pm$0.1 & 18.6$\pm$0.2 & 38.3$\pm$0.3 & 24.8$\pm$0.2 & 47.9$\pm$1.8 \\
\bottomrule
\end{tabular}
}
\vspace{-4mm}
\end{table}

\textbf{Results:} In Table~\ref{tab:recosys-results} we observe that the SPD models tie or outperform the baselines in both MRR and HR@10 across all analyzed datasets.
Rotations in both, Riemannian and Finsler metrics, are more effective in this task, achieving the best performance in 3 out of 4 cases, followed by the scaling models. 
Overall, this shows the capabilities of the systems to effectively represent user-item interactions enriched with relations between items and their attributes, thus learning to better model users’ preferences. Furthermore, it displays the versatility of the approach to diverse data domains.

\subsection{Question Answering}
\label{sec:qa-exps}
We evaluate our approach on the task of Question Answering (QA). In this manner we also showcase the capabilities of our methods to train word embeddings.

\begin{wrapfigure}{r}{0.4\textwidth}
\vspace{-3mm}
\small
\begin{equation}
\begin{split}
    \label{eq:qa-simfunction}
    \operatorname{sim}(q, a) = -w_{f} d(\textbf{Q}, \textbf{A}) + w_{b}, \\
    \text{where } \textbf{Q} = T(\bigoplus\limits_{i=1}^n t_{i}^q) \oplus B
\end{split}
\end{equation}
\vspace{-5mm}
\end{wrapfigure}
\textbf{Model:} {We} adapt the model  from HyperQA \cite{tay2018hyperQA} to $\spd$. We model word embeddings $t_i \in \spd_n$, and represent question/answers as a summation of the embeddings of their corresponding tokens. We apply a feature transformation $T(\cdot)$ followed by a bias addition, as an equivalent of a neural linear layer. $T(\cdot)$ can be a scaling, rotation or reflection. Finally we compute a distance-based similarity function between the resulting question/answer representations as defined in Equation~\ref{eq:qa-simfunction}, where $w_f, w_b \in \RR$, $B \in \spd_n$ and the transformation $T$ are parameters of the model.

\textbf{Datasets:} We analyze two popular benchmarks for QA: TrecQA \cite{wang2007jeopardyTrecqaDataset} (clean version) and WikiQA \cite{yang2015wikiqaDataset}, filtering out questions with multiple answers from the dev and test sets.

\begin{wraptable}{r}{0.5\linewidth}
	\vspace{-3.5mm}
\caption{Results for Question Answering.}
\label{tab:qa-results}
\small
\centering
\adjustbox{max width=\linewidth}{
\begin{tabular}{lcccc}
\toprule
 & \multicolumn{2}{c}{\textsc{TrecQA}} & \multicolumn{2}{c}{\textsc{WikiQA}} \\
Model & MRR & H@1 & MRR & H@1 \\
\cmidrule(lr){2-3}\cmidrule(lr){4-5}
$\operatorname{Euclidean}$ & 55.9$\pm$2.0 & 41.0$\pm$2.0 & 43.4$\pm$0.3 & 22.4$\pm$1.1 \\
$\operatorname{Hyperbolic}$ & 58.0$\pm$1.3 & 39.3$\pm$2.0 & 44.0$\pm$0.4 & 22.8$\pm$0.6 \\
\SPDtraRiem & 55.4$\pm$0.1 & 37.1$\pm$0.1 & \textbf{45.5$\pm$0.5} & 24.4$\pm$1.1 \\
\SPDtraFone & 57.1$\pm$0.7 & 38.6$\pm$0.2 & 44.8$\pm$0.5 & 24.0$\pm$0.6 \\
\SPDrotRiem & 58.7$\pm$1.5 & 41.4$\pm$2.9 & 44.6$\pm$0.6 & 23.6$\pm$0.6 \\
\SPDrotFone & 58.1$\pm$0.5 & \textbf{43.6$\pm$1.0} & 43.7$\pm$0.4 & 23.8$\pm$0.8 \\
\SPDrefRiem & 57.3$\pm$0.3 & 40.7$\pm$1.1 & 43.9$\pm$0.7 & 23.4$\pm$2.0 \\
\SPDrefFone & \textbf{59.6$\pm$0.5} & 42.1$\pm$1.0 & 44.7$\pm$1.2 & \textbf{25.0$\pm$2.5} \\
\bottomrule
\end{tabular}
}
\vspace{-4mm}
\end{wraptable}

\textbf{Training:} 
We optimize the cross-entropy loss from Eq.~\ref{eq:cross-entropy-loss}, where we replace $\phi$ for $\operatorname{sim}(q, a)$ and for each question we use wrong answers as negative samples.
We set the size of the matrices to $14 \times 14$ dimensions (equivalent to $105$ free parameters). 
The set of hyperparameters can be found in Appendix~\ref{app:expdetails-qa}.

\textbf{Evaluation metrics:} At test time, for each question we rank its candidate answers according to Eq.~\ref{eq:qa-simfunction}.
We adopt MRR and H@1 as evaluation metrics.

\textbf{Baselines:} We compare against Euclidean and hyperbolic spaces of $105$ dimensions. For the Euclidean model we employ a linear layer as feature transformation. For the hyperbolic model, we operate on the tangent space and project the points into the Poincar\'e ball to compute distances.

\textbf{Results:} We present results in Table~\ref{tab:qa-results}. 
In both datasets, we see that the word embeddings and transformations learned by the SPD models are able to place questions and answers representations in the space such that they outperform Euclidean and hyperbolic baselines. Finsler metrics seem to be very effective in this scenario, improving the performance of their Riemannian counterparts in many cases.
Overall, this suggests that embeddings in SPD manifolds learn meaningful representations that can be exploited into downstream tasks. Moreover, we showcase how to employ different operations as feature transformations and bias additions, replicating the behavior of linear layers in classical deep learning architectures that can be seamlessly integrated with different distance metrics.

\begin{figure*}[!t]
\vspace{-4mm}
\centering
\subfloat{\includegraphics[width=.33\textwidth,keepaspectratio,trim={0 0 0 11cm},clip]{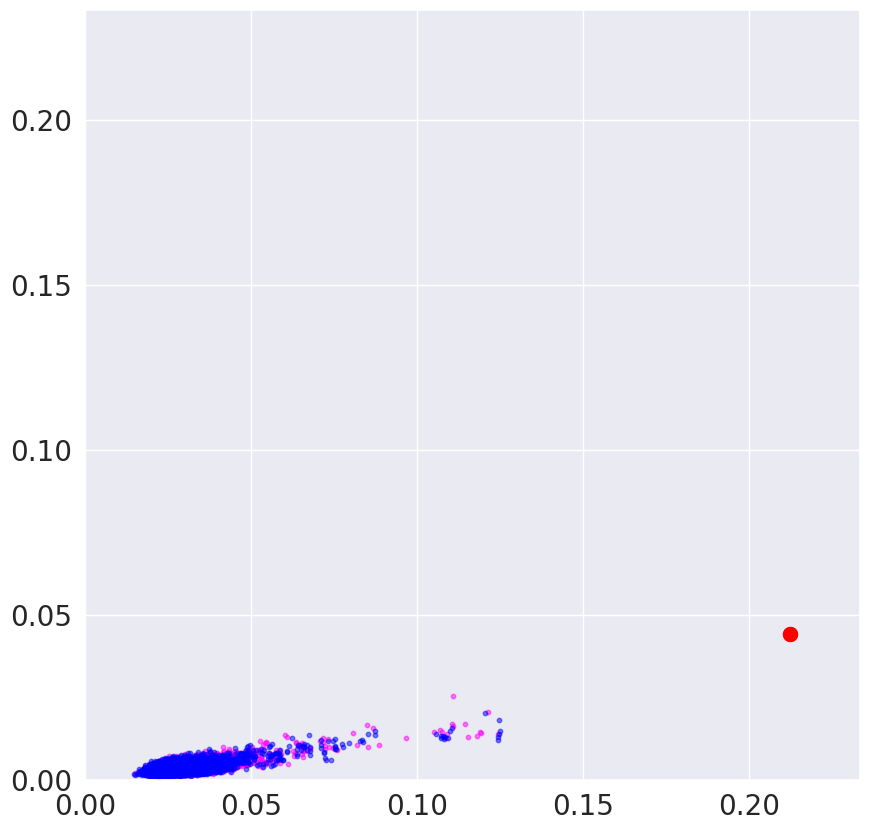}}\hfill
\subfloat{\includegraphics[width=.33\textwidth,keepaspectratio,trim={0 0 0 11cm},clip]{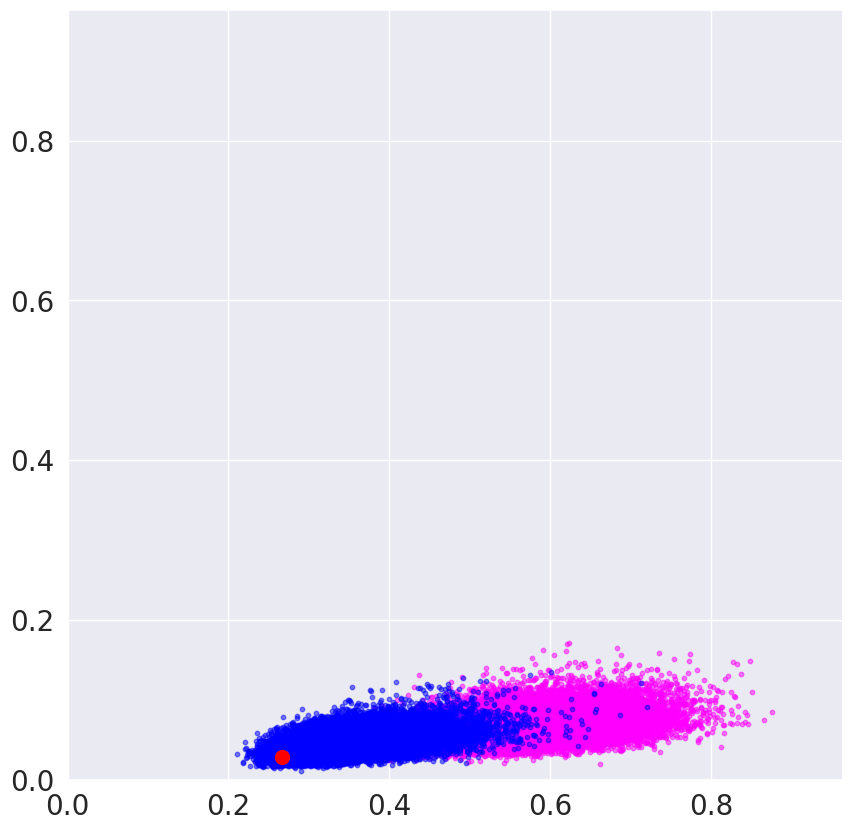}}\hfill
\subfloat{\includegraphics[width=.33\textwidth,keepaspectratio,trim={0 0 0 11cm},clip]{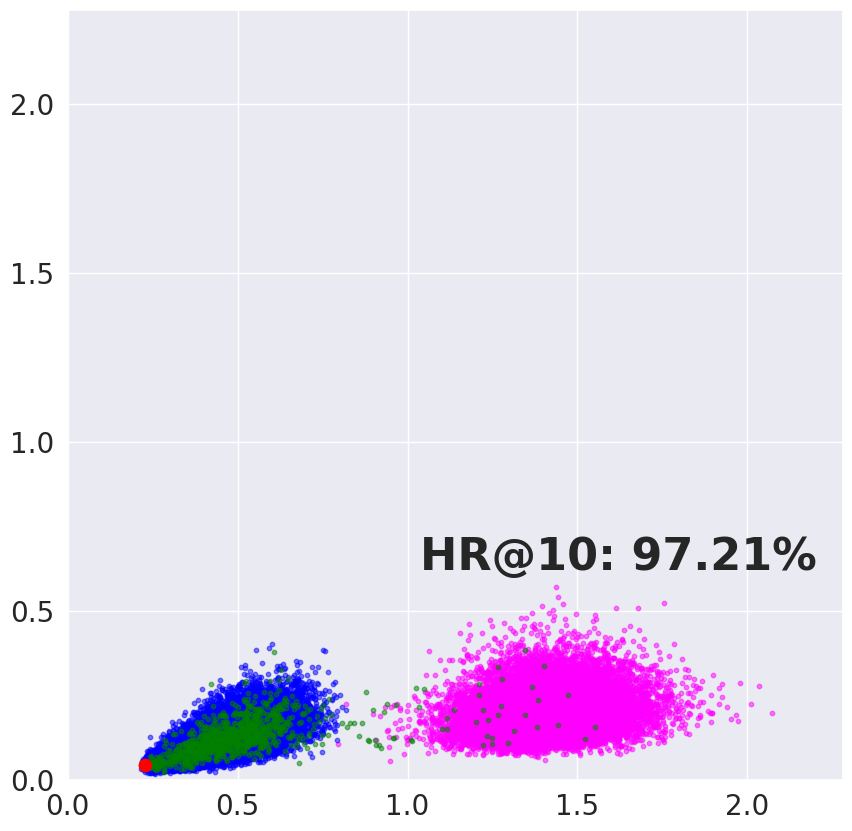}}\hfill \\
\subfloat{\includegraphics[width=.33\textwidth,keepaspectratio,trim={0 0 0 11cm},clip]{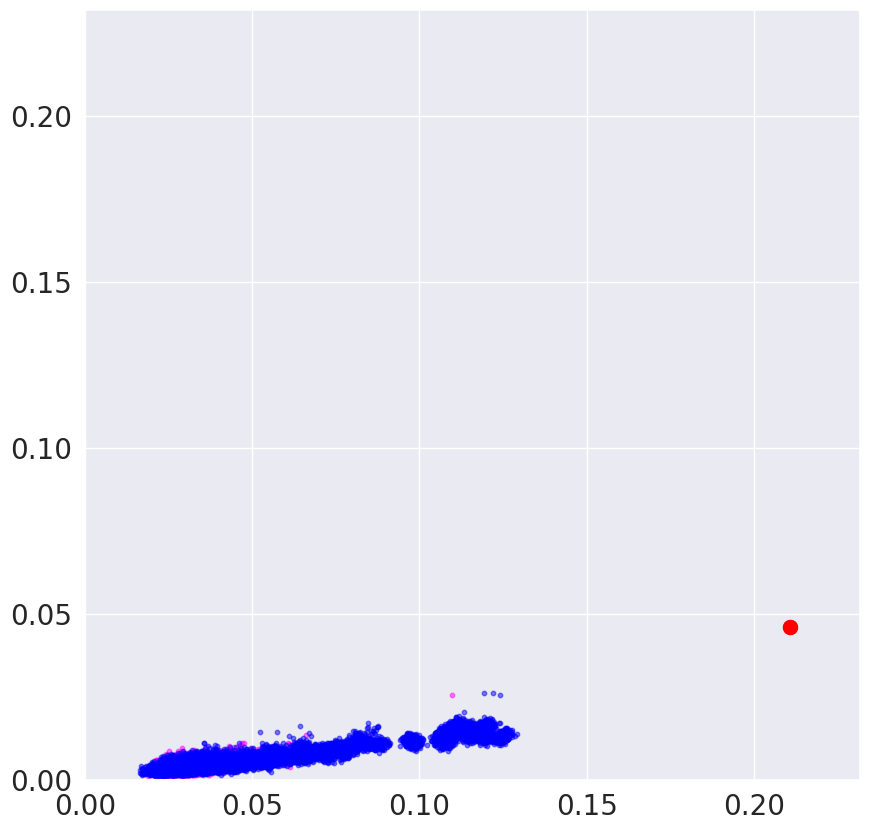}}\hfill
\subfloat{\includegraphics[width=.33\textwidth,keepaspectratio,trim={0 0 0 11cm},clip]{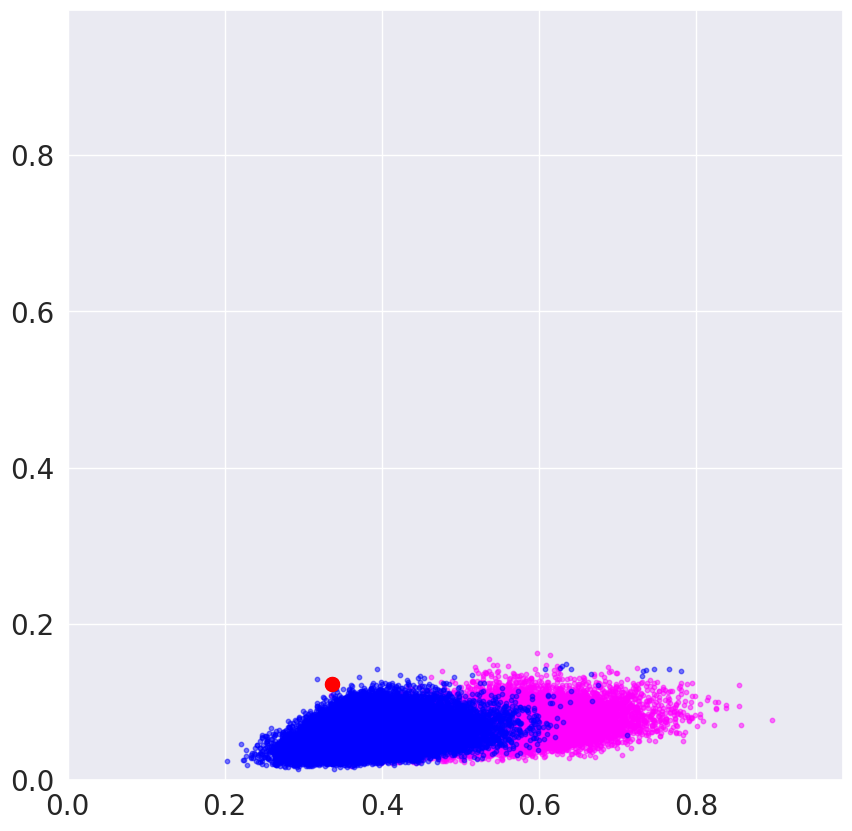}}\hfill
\subfloat{\includegraphics[width=.33\textwidth,keepaspectratio,trim={0 0 0 11cm},clip]{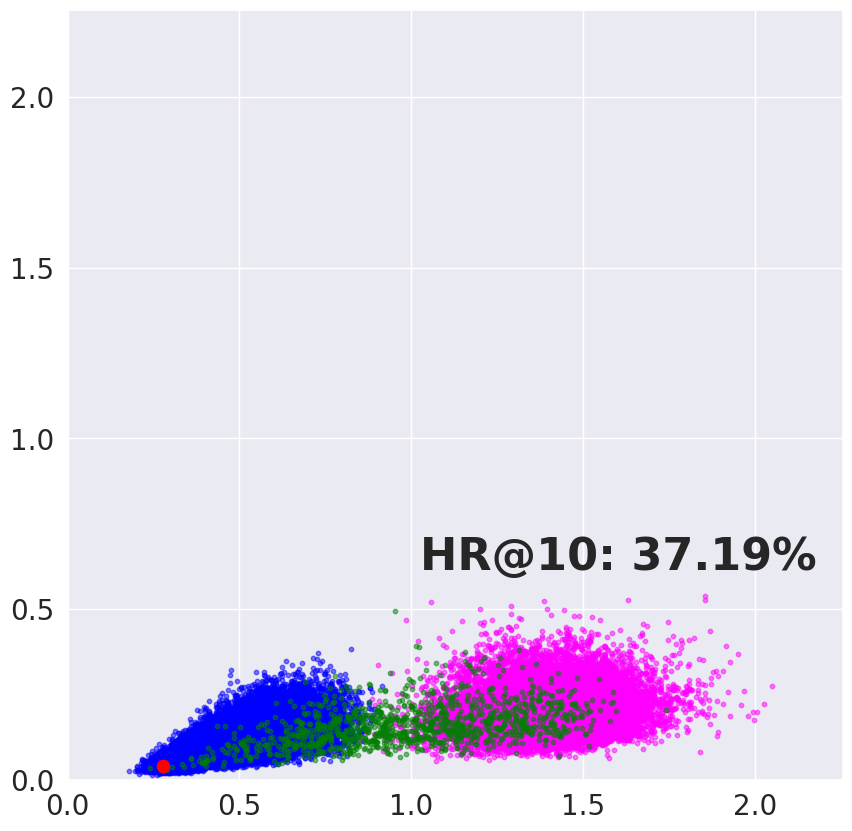}}\hfill
\caption{\textcolor{blue}{Train}, \textcolor{magenta}{negative} and \textcolor{ForestGreen}{validation} triples for relationships \textit{'derivationally related form'} (top) and \textit{'hypernym'} (bottom) for 5 (left), 50 (center) and 3000 (right) epochs for the \SPDtraFone model. The \textcolor{red}{red} dot corresponds to the relation addition $\textbf{R}$.}
\label{fig:normangle-plots}
\vspace{-4mm}
\end{figure*}

\subsection{Analysis}
\label{sec:analysis}
One reason to embed data into Riemannian manifolds, such as SPD, is to use geometric properties of the manifold to analyze the structure of the data \cite{lopez2021symmetric}. 
Visualizations in SPD are difficult due to their high dimensionality. As a solution we use the vector-valued distance function to develop a new tool to visualize and analyze structural properties of the learned representations. 

We adopt the vector $(n-1, n-3, \cdots, -n+3, -n+1)$, as the barycenter of the space in $\RR^n$ where the VVD is contained. Then, we plot the norm of the VVD vector and its angle with respect to this barycenter.
In Figure~\ref{fig:normangle-plots}, we compute and plot the VVD corresponding to $d(\textbf{M}_r \otimes \textbf{H}, \textbf{T})$ and $\textbf{R}$ as defined in Eq.~\ref{eq:kg-scaling-scoring_fn} for KG models trained on WN18RR.
In early stages of the training, all points fall near the origin (left side of the plots). As training evolves, the model learns to separate true $(h, r, t)$ triples from corrupted ones (center part).
When the training converges, the model is able to clearly disentangle and cluster positive and negative samples. We observe how the position of the validation triples (green points, not seen during training) directly correlates with the performance of each relation. Plots for more relations in Appendix~\ref{app:expdetails-kgcompletion}.

\section{Conclusions}
\label{sec:conclusions}
Riemannian geometry has gained attention due to its capacity to represent non-Euclidean data arising in several domains.
In this work we introduce the vector-valued distance function, which allows to implement universal models (generalizing previous metrics on SPD), and can be leveraged to provide a geometric interpretation on what the models learn. Moreover, we bridge the gap between Euclidean and SPD geometry under the lens of the gyrovector theory, providing means to transfer standard arithmetic operations from the Euclidean setting to their analog notions in SPD. 
These tools enable practitioners to exploit the full representation power of SPD, and profit from the enhanced expressivity of this manifold.
We propose and evaluate SPD models on three tasks and eight datasets, which showcases the versatility of the approach and ease of integration with downstream tasks.
The results reflect the superior expressivity of SPD when compared to Euclidean or hyperbolic baselines. 

This work is not without limitations. We consider the computational complexity of working with spaces of matrices to be the main drawback, since the cost of many operations is polynomial instead of linear. Nevertheless, a matrix of rank $n$ implies $n(n+1)/2$ dimensions thus a large $n$ value is usually not required.


   

\begin{ack}
This work has been supported by the German Research Foundation (DFG) as part of the Research Training Group Adaptive Preparation of Information from Heterogeneous Sources (AIPHES) under grant No. GRK 1994/1, as well as under Germany’s Excellence Strategy EXC-2181/1 - 390900948 (the Heidelberg STRUCTURES Cluster of Excellence), and by the Klaus Tschira Foundation, Heidelberg, Germany.
\end{ack}

\bibliographystyle{apalike}
\bibliography{mybib}

\begin{thebibliography}{}

\bibitem[Abe and Hatori, 2015]{abe2015generalizedGyrovectors}
Abe, T. and Hatori, O. (2015).
\newblock Generalized gyrovector spaces and a mazur--ulam theorem.
\newblock {\em Publicationes Mathematicae Debrecen}, 87:393--413.

\bibitem[Absil et~al., 2009]{absil2009optimAlgosonMatrix}
Absil, P.-A., Mahony, R., and Sepulchre, R. (2009).
\newblock {\em Optimization Algorithms on Matrix Manifolds}.
\newblock Princeton University Press.

\bibitem[Ai et~al., 2018]{ai2018explreco}
Ai, Q., Azizi, V., Chen, X., and Zhang, Y. (2018).
\newblock Learning heterogeneous knowledge base embeddings for explainable
  recommendation.
\newblock {\em Algorithms}, 11(9).

\bibitem[Allen et~al., 2021]{allenBalazevic2021interpretingKG}
Allen, C., Balazevic, I., and Hospedales, T. (2021).
\newblock Interpreting knowledge graph relation representation from word
  embeddings.
\newblock In {\em International Conference on Learning Representations}.

\bibitem[Arsigny et~al., 2006a]{arsigny2006spdGeometricMeans}
Arsigny, V., Fillard, P., Pennec, X., and Ayache, N. (2006a).
\newblock Geometric means in a novel vector space structure on symmetric
  positive-definite matrices.
\newblock {\em SIAM J. Matrix Anal. Appl.}, 29(1):328--347.

\bibitem[Arsigny et~al., 2006b]{arsigny2006logEuclidTensorDiffusion}
Arsigny, V., Fillard, P., Pennec, X., and Ayache, N. (2006b).
\newblock Log-euclidean metrics for fast and simple calculus on diffusion
  tensors.
\newblock {\em Magnetic Resonance in Medicine}, 56(2):411--421.

\bibitem[Bachmann et~al., 2020]{bachmann2020ccgcn}
Bachmann, G., Becigneul, G., and Ganea, O.-E. (2020).
\newblock Constant curvature graph convolutional networks.
\newblock In {\em 37th International Conference on Machine Learning (ICML)}.

\bibitem[Balazevic et~al., 2019a]{balazevic2019murp}
Balazevic, I., Allen, C., and Hospedales, T. (2019a).
\newblock Multi-relational poincar\'{e} graph embeddings.
\newblock In Wallach, H., Larochelle, H., Beygelzimer, A., d\textquotesingle
  Alch\'{e}-Buc, F., Fox, E., and Garnett, R., editors, {\em Advances in Neural
  Information Processing Systems}, volume~32, pages 4463--4473. Curran
  Associates, Inc.

\bibitem[Balazevic et~al., 2019b]{balazevic2019tucker}
Balazevic, I., Allen, C., and Hospedales, T. (2019b).
\newblock {T}uck{ER}: Tensor factorization for knowledge graph completion.
\newblock In {\em Proceedings of the 2019 Conference on Empirical Methods in
  Natural Language Processing and the 9th International Joint Conference on
  Natural Language Processing (EMNLP-IJCNLP)}, pages 5185--5194, Hong Kong,
  China. Association for Computational Linguistics.

\bibitem[Ballmann et~al., 1985]{eberlein1985}
Ballmann, W., Brin, M., and Eberlein, P. (1985).
\newblock Structure of manifolds of nonpositive curvature. {I}.
\newblock {\em Annals of Mathematics}, 122(1):171--203.

\bibitem[B{\'{e}}cigneul and Ganea, 2019]{becigneul2019riemannianMethods}
B{\'{e}}cigneul, G. and Ganea, O. (2019).
\newblock Riemannian adaptive optimization methods.
\newblock In {\em 7th International Conference on Learning Representations,
  {ICLR}}, New Orleans, LA, USA.

\bibitem[Bhatia, 2003]{bhatia2003expMetricSPD}
Bhatia, R. (2003).
\newblock On the exponential metric increasing property.
\newblock {\em Linear Algebra and its Applications}, 375:211--220.

\bibitem[Bhatia et~al., 2019]{bhatia2019buresWassersteinDistanceSPD}
Bhatia, R., Jain, T., and Lim, Y. (2019).
\newblock On the {Bures–Wasserstein} distance between positive definite
  matrices.
\newblock {\em Expositiones Mathematicae}, 37(2):165--191.

\bibitem[Bollacker et~al., 2008]{bollacker2008freebase}
Bollacker, K., Evans, C., Paritosh, P., Sturge, T., and Taylor, J. (2008).
\newblock Freebase: A collaboratively created graph database for structuring
  human knowledge.
\newblock In {\em Proceedings of the 2008 ACM SIGMOD International Conference
  on Management of Data}, SIGMOD '08, page 1247–1250, New York, NY, USA.
  Association for Computing Machinery.

\bibitem[Bonnabel, 2011]{bonnabel2011rsgd}
Bonnabel, S. (2011).
\newblock Stochastic gradient descent on {R}iemannian manifolds.
\newblock {\em IEEE Transactions on Automatic Control}, 58.

\bibitem[Bordes et~al., 2013]{bordes2013transe}
Bordes, A., Usunier, N., Garcia-Duran, A., Weston, J., and Yakhnenko, O.
  (2013).
\newblock Translating embeddings for modeling multi-relational data.
\newblock In Burges, C. J.~C., Bottou, L., Welling, M., Ghahramani, Z., and
  Weinberger, K.~Q., editors, {\em Advances in Neural Information Processing
  Systems}, volume~26, pages 2787--2795. Curran Associates, Inc.

\bibitem[Boumal et~al., 2014]{boumal2014manopt}
Boumal, N., Mishra, B., Absil, P.-A., and Sepulchre, R. (2014).
\newblock Manopt, a matlab toolbox for optimization on manifolds.
\newblock {\em Journal of Machine Learning Research}, 15(42):1455--1459.

\bibitem[Brams et~al., 2020]{brams2020mindreader}
Brams, A.~H., Jakobsen, A.~L., Jendal, T.~E., Lissandrini, M., Dolog, P., and
  Hose, K. (2020).
\newblock Mindreader: Recommendation over knowledge graph entities with
  explicit user ratings.
\newblock In {\em Proceedings of the 29th ACM International Conference on
  Information \& Knowledge Management}, CIKM '20, page 2975–2982, New York,
  NY, USA. Association for Computing Machinery.

\bibitem[{Bronstein} et~al., 2017]{bronstein2018geomdeeplearning}
{Bronstein}, M.~M., {Bruna}, J., {LeCun}, Y., {Szlam}, A., and {Vandergheynst},
  P. (2017).
\newblock Geometric deep learning: Going beyond {E}uclidean data.
\newblock {\em IEEE Signal Processing Magazine}, 34(4):18--42.

\bibitem[Brooks et~al., 2019a]{brooks2019riemBNforSPD}
Brooks, D., Schwander, O., Barbaresco, F., Schneider, J.-Y., and Cord, M.
  (2019a).
\newblock Riemannian batch normalization for {SPD} neural networks.
\newblock In Wallach, H., Larochelle, H., Beygelzimer, A., d\textquotesingle
  Alch\'{e}-Buc, F., Fox, E., and Garnett, R., editors, {\em Advances in Neural
  Information Processing Systems}, volume~32, pages 15489--15500. Curran
  Associates, Inc.

\bibitem[Brooks et~al., 2019b]{brooks2019riemannianRadarData}
Brooks, D.~A., Schwander, O., Barbaresco, F., Schneider, J.-Y., and Cord, M.
  (2019b).
\newblock Exploring complex time-series representations for riemannian machine
  learning of radar data.
\newblock In {\em ICASSP 2019 - 2019 IEEE International Conference on
  Acoustics, Speech and Signal Processing (ICASSP)}, pages 3672--3676.

\bibitem[Carreira et~al., 2012]{carreira2012semanticSegmentationSPD}
Carreira, J., Caseiro, R., Batista, J., and Sminchisescu, C. (2012).
\newblock Semantic segmentation with second-order pooling.
\newblock In Fitzgibbon, A., Lazebnik, S., Perona, P., Sato, Y., and Schmid,
  C., editors, {\em Computer Vision -- ECCV 2012}, pages 430--443, Berlin,
  Heidelberg. Springer Berlin Heidelberg.

\bibitem[Chakraborty et~al., 2020]{chakraborty2020manifoldNet}
Chakraborty, R., Bouza, J., Manton, J., and Vemuri, B.~C. (2020).
\newblock Manifoldnet: A deep neural network for manifold-valued data with
  applications.
\newblock {\em IEEE Transactions on Pattern Analysis and Machine Intelligence},
  pages 1--1.

\bibitem[Chakraborty et~al., 2018]{chakraborty2018recurrentSPD}
Chakraborty, R., Yang, C.-H., Zhen, X., Banerjee, M., Archer, D., Vaillancourt,
  D., Singh, V., and Vemuri, B. (2018).
\newblock A statistical recurrent model on the manifold of symmetric positive
  definite matrices.
\newblock In Bengio, S., Wallach, H., Larochelle, H., Grauman, K.,
  Cesa-Bianchi, N., and Garnett, R., editors, {\em Advances in Neural
  Information Processing Systems}, volume~31. Curran Associates, Inc.

\bibitem[Chami et~al., 2020]{chami2020lowdimkge}
Chami, I., Wolf, A., Juan, D.-C., Sala, F., Ravi, S., and R{\'e}, C. (2020).
\newblock Low-dimensional hyperbolic knowledge graph embeddings.
\newblock In {\em Proceedings of the 58th Annual Meeting of the Association for
  Computational Linguistics}, pages 6901--6914, Online. Association for
  Computational Linguistics.

\bibitem[Chami et~al., 2019]{chami2019hgcnn}
Chami, I., Ying, Z., R\'{e}, C., and Leskovec, J. (2019).
\newblock Hyperbolic graph convolutional neural networks.
\newblock In {\em Advances in Neural Information Processing Systems 32}, pages
  4869--4880. Curran Associates, Inc.

\bibitem[Cruceru et~al., 2020]{cruceru20matrixGraph}
Cruceru, C., Becigneul, G., and Ganea, O.-E. (2020).
\newblock Computationally tractable {R}iemannian manifolds for graph
  embeddings.
\newblock In {\em 37th International Conference on Machine Learning (ICML)}.

\bibitem[Dettmers et~al., 2018]{dettmers2018conve}
Dettmers, T., Pasquale, M., Pontus, S., and Riedel, S. (2018).
\newblock Convolutional 2d knowledge graph embeddings.
\newblock In {\em Proceedings of the 32th AAAI Conference on Artificial
  Intelligence}, pages 1811--1818.

\bibitem[Dong et~al., 2017]{dong2017spdToFaceRecog}
Dong, Z., Jia, S., Zhang, C., Pei, M., and Wu, Y. (2017).
\newblock Deep manifold learning of symmetric positive definite matrices with
  application to face recognition.
\newblock In {\em Proceedings of the Thirty-First AAAI Conference on Artificial
  Intelligence}, AAAI'17, page 4009–4015. AAAI Press.

\bibitem[Donoho and Tsaig, 2008]{Donoho:2008aa}
Donoho, D.~L. and Tsaig, Y. (2008).
\newblock Fast solution of $\ell_1$-norm minimization problems when the
  solution may be sparse.
\newblock {\em IEEE Trans. Information Theory}, 54(11):4789--4812.

\bibitem[Feragen et~al., 2015]{feragen2015curvatureAndLinearity}
Feragen, A., Lauze, F., and Hauberg, S. (2015).
\newblock Geodesic exponential kernels: When curvature and linearity conflict.
\newblock In {\em 2015 IEEE Conference on Computer Vision and Pattern
  Recognition (CVPR)}, pages 3032--3042.

\bibitem[Ganea et~al., 2018]{ganea2018hyperNN}
Ganea, O., Becigneul, G., and Hofmann, T. (2018).
\newblock Hyperbolic neural networks.
\newblock In {\em Advances in Neural Information Processing Systems 31}, pages
  5345--5355. Curran Associates, Inc.

\bibitem[Gao et~al., 2019]{gao2019robustRepreSPD}
Gao, Z., Wu, Y., Bu, X., Yu, T., Yuan, J., and Jia, Y. (2019).
\newblock Learning a robust representation via a deep network on symmetric
  positive definite manifolds.
\newblock {\em Pattern Recognition}, 92:1--12.

\bibitem[{Guo} et~al., 2020]{guo2020kgsurvey}
{Guo}, Q., {Zhuang}, F., {Qin}, C., {Zhu}, H., {Xie}, X., {Xiong}, H., and
  {He}, Q. (2020).
\newblock A survey on knowledge graph-based recommender systems.
\newblock {\em IEEE Transactions on Knowledge and Data Engineering}, pages
  1--1.

\bibitem[Ha~Quang et~al., 2014]{quang2014logHilbertSPD}
Ha~Quang, M., San~Biagio, M., and Murino, V. (2014).
\newblock Log-{H}ilbert-{S}chmidt metric between positive definite operators on
  {H}ilbert spaces.
\newblock In Ghahramani, Z., Welling, M., Cortes, C., Lawrence, N., and
  Weinberger, K.~Q., editors, {\em Advances in Neural Information Processing
  Systems}, volume~27. Curran Associates, Inc.

\bibitem[Harandi et~al., 2014]{harandi2014manToManforActionRecog}
Harandi, M.~T., Salzmann, M., and Hartley, R. (2014).
\newblock From manifold to manifold: Geometry-aware dimensionality reduction
  for {SPD} matrices.
\newblock In Fleet, D., Pajdla, T., Schiele, B., and Tuytelaars, T., editors,
  {\em Computer Vision -- ECCV 2014}, pages 17--32, Cham. Springer
  International Publishing.

\bibitem[Hatori, 2017]{hatori2017examplesGyrovector}
Hatori, O. (2017).
\newblock Examples and applications of generalized gyrovector spaces.
\newblock {\em Results in Mathematics}, 71:295--317.

\bibitem[He et~al., 2017]{he2017neuralCF}
He, X., Liao, L., Zhang, H., Nie, L., Hu, X., and Chua, T.-S. (2017).
\newblock Neural collaborative filtering.
\newblock In {\em Proceedings of the 26th International Conference on World
  Wide Web}, WWW '17, page 173–182, Republic and Canton of Geneva, CHE.
  International World Wide Web Conferences Steering Committee.

\bibitem[Helgason, 1978]{helgason1078diffGeom}
Helgason, S. (1978).
\newblock {\em Differential geometry, Lie groups, and symmetric spaces}.
\newblock Academic Press New York.

\bibitem[Huang and Gool, 2017]{huang2017riemannianNetForSPDMatrix}
Huang, Z. and Gool, L.~V. (2017).
\newblock A {R}iemannian network for {SPD} matrix learning.
\newblock In {\em Proceedings of the Thirty-First AAAI Conference on Artificial
  Intelligence}, AAAI'17, page 2036–2042. AAAI Press.

\bibitem[Huang et~al., 2014]{huang2014euclidToRiemPointClassif}
Huang, Z., Wang, R., Shan, S., and Chen, X. (2014).
\newblock Learning {E}uclidean-to-{R}iemannian metric for point-to-set
  classification.
\newblock In {\em 2014 IEEE Conference on Computer Vision and Pattern
  Recognition}, pages 1677--1684.

\bibitem[Huang et~al., 2015]{huang2015logEuclideanSPD}
Huang, Z., Wang, R., Shan, S., Li, X., and Chen, X. (2015).
\newblock Log-{E}uclidean metric learning on symmetric positive definite
  manifold with application to image set classification.
\newblock In {\em Proceedings of the 32nd International Conference on
  International Conference on Machine Learning - Volume 37}, ICML'15, page
  720–729. JMLR.org.

\bibitem[{Ionescu} et~al., 2015]{ionescu2015matrixBackprop}
{Ionescu}, C., {Vantzos}, O., and {Sminchisescu}, C. (2015).
\newblock Matrix backpropagation for deep networks with structured layers.
\newblock In {\em 2015 IEEE International Conference on Computer Vision
  (ICCV)}, pages 2965--2973.

\bibitem[Jayasumana et~al., 2013]{jayasumana2013kernelOnSPD}
Jayasumana, S., Hartley, R., Salzmann, M., Li, H., and Harandi, M. (2013).
\newblock Kernel methods on the {R}iemannian manifold of symmetric positive
  definite matrices.
\newblock In {\em Proceedings of the IEEE Conference on Computer Vision and
  Pattern Recognition (CVPR)}.

\bibitem[Kapovich et~al., 2009]{kapovich2017cvxFns}
Kapovich, M., Leeb, B., and Millson, J. (2009).
\newblock Convex functions on symmetric spaces, side lengths of polygons and
  the stability inequalities for weighted configurations at infinity.
\newblock {\em J. Differential Geom.}

\bibitem[Kapovich et~al., 2017]{kapovich2017vectorValuedDistance}
Kapovich, M., Leeb, B., and Porti, J. (2017).
\newblock Anosov subgroups: dynamical and geometric characterizations.
\newblock {\em European Journal of Mathematics}, 3(3):808--898.

\bibitem[Kim, 2016]{kim2016gyrovectorOnSPD}
Kim, S. (2016).
\newblock Gyrovector spaces on the open convex cone of positive definite
  matrices.
\newblock {\em Mathematics Interdisciplinary Research}, 1(1):173--185.

\bibitem[Lacroix et~al., 2018]{lacroix2018tensordecomp}
Lacroix, T., Usunier, N., and Obozinski, G. (2018).
\newblock Canonical tensor decomposition for knowledge base completion.
\newblock In Dy, J. and Krause, A., editors, {\em Proceedings of Machine
  Learning Research}, volume~80 of {\em Proceedings of Machine Learning
  Research}, pages 2863--2872, Stockholmsmässan, Stockholm Sweden. PMLR.

\bibitem[Li et~al., 2014]{li2014rslinkpred}
Li, J., Zhang, L., Meng, F., and Li, F. (2014).
\newblock Recommendation algorithm based on link prediction and domain
  knowledge in retail transactions.
\newblock {\em Procedia Computer Science}, 31:875 -- 881.
\newblock 2nd International Conference on Information Technology and
  Quantitative Management, ITQM 2014.

\bibitem[Li et~al., 2018]{li2018globalCovariance}
Li, P., Xie, J., Wang, Q., and Gao, Z. (2018).
\newblock Towards faster training of global covariance pooling networks by
  iterative matrix square root normalization.
\newblock In {\em 2018 IEEE/CVF Conference on Computer Vision and Pattern
  Recognition}, pages 947--955.

\bibitem[Li and Lu, 2018]{liandlu2018spdLieGroupAlgorithm}
Li, Y. and Lu, R. (2018).
\newblock Locality preserving projection on {SPD} matrix lie group: algorithm
  and analysis.
\newblock {\em Sci. China Inf. Sci.}, 61(9):092104:1--092104:15.

\bibitem[L\'opez et~al., 2021a]{lopez2021symmetric}
L\'opez, F., Pozzetti, B., Trettel, S., Strube, M., and Wienhard, A. (2021a).
\newblock Symmetric spaces for graph embeddings: A finsler-riemannian approach.
\newblock In Meila, M. and Zhang, T., editors, {\em Proceedings of the 38th
  International Conference on Machine Learning}, volume 139 of {\em Proceedings
  of Machine Learning Research}, pages 7090--7101. PMLR.

\bibitem[L\'opez et~al., 2021b]{lopez2021relco}
L\'opez, F., Scholz, M., Yung, J., Pellat, M., Strube, M., and Dixon, L.
  (2021b).
\newblock Augmenting the user-item graph with textual similarity models.

\bibitem[L\'opez and Strube, 2020]{lopez2020fullyhyper}
L\'opez, F. and Strube, M. (2020).
\newblock A fully hyperbolic neural model for hierarchical multi-class
  classification.
\newblock In {\em Findings of the Association for Computational Linguistics:
  EMNLP 2020}, pages 460--475, Online. Association for Computational
  Linguistics.

\bibitem[Loshchilov and Hutter, 2019]{loshchilov2018adamW}
Loshchilov, I. and Hutter, F. (2019).
\newblock Decoupled weight decay regularization.
\newblock In {\em International Conference on Learning Representations}.

\bibitem[Ma et~al., 2019]{ma2019explainableRuleskgrecosys}
Ma, W., Zhang, M., Cao, Y., Jin, W., Wang, C., Liu, Y., Ma, S., and Ren, X.
  (2019).
\newblock Jointly learning explainable rules for recommendation with knowledge
  graph.
\newblock In {\em The World Wide Web Conference}, WWW '19, page 1210–1221,
  New York, NY, USA. Association for Computing Machinery.

\bibitem[Mao et~al., 2019]{mao2019cosonet}
Mao, Y., Wang, R., Shan, S., and Chen, X. (2019).
\newblock Cosonet: Compact second-order network for video face recognition.
\newblock In Jawahar, C.~V., Li, H., Mori, G., and Schindler, K., editors, {\em
  Computer Vision -- ACCV 2018}, pages 51--67, Cham. Springer International
  Publishing.

\bibitem[McAuley and Leskovec, 2013]{mcauley2013hft}
McAuley, J. and Leskovec, J. (2013).
\newblock Hidden factors and hidden topics: Understanding rating dimensions
  with review text.
\newblock In {\em Proceedings of the 7th ACM Conference on Recommender
  Systems}, RecSys '13, page 165–172, New York, NY, USA. Association for
  Computing Machinery.

\bibitem[Miller, 1992]{miller1995wordnet}
Miller, G.~A. (1992).
\newblock Word{N}et: A lexical database for {E}nglish.
\newblock In {\em Speech and Natural Language: Proceedings of a Workshop Held
  at Harriman, New York, February 23-26, 1992}.

\bibitem[Nguyen et~al., 2019]{nguyen2019spdHandGestureRecog}
Nguyen, X.~S., Brun, L., Lezoray, O., and Bougleux, S. (2019).
\newblock A neural network based on {SPD} manifold learning for skeleton-based
  hand gesture recognition.
\newblock In {\em Proceedings of the IEEE/CVF Conference on Computer Vision and
  Pattern Recognition (CVPR)}.

\bibitem[Ni et~al., 2019]{nimcauley2019extamazondata}
Ni, J., Li, J., and McAuley, J. (2019).
\newblock Justifying recommendations using distantly-labeled reviews and
  fine-grained aspects.
\newblock In {\em Proceedings of the 2019 Conference on Empirical Methods in
  Natural Language Processing and the 9th International Joint Conference on
  Natural Language Processing (EMNLP-IJCNLP)}, pages 188--197, Hong Kong,
  China. Association for Computational Linguistics.

\bibitem[Nickel and Kiela, 2017]{nickel2017poincare}
Nickel, M. and Kiela, D. (2017).
\newblock Poincar\'{e} embeddings for learning hierarchical representations.
\newblock In Guyon, I., Luxburg, U.~V., Bengio, S., Wallach, H., Fergus, R.,
  Vishwanathan, S., and Garnett, R., editors, {\em Advances in Neural
  Information Processing Systems 30}, pages 6341--6350. Curran Associates, Inc.

\bibitem[Nielsen and Sun, 2019]{Nielsen2019clusteringWithFinsler}
Nielsen, F. and Sun, K. (2019).
\newblock {\em Clustering in Hilbert's Projective Geometry: The Case Studies of
  the Probability Simplex and the Elliptope of Correlation Matrices}, pages
  297--331.
\newblock Springer International Publishing, Cham.

\bibitem[Paszke et~al., 2019]{paszke2019pytorchNeurips}
Paszke, A., Gross, S., Massa, F., Lerer, A., Bradbury, J., Chanan, G., Killeen,
  T., Lin, Z., Gimelshein, N., Antiga, L., Desmaison, A., Kopf, A., Yang, E.,
  DeVito, Z., Raison, M., Tejani, A., Chilamkurthy, S., Steiner, B., Fang, L.,
  Bai, J., and Chintala, S. (2019).
\newblock Pytorch: An imperative style, high-performance deep learning library.
\newblock In Wallach, H., Larochelle, H., Beygelzimer, A., d\textquotesingle
  Alch\'{e}-Buc, F., Fox, E., and Garnett, R., editors, {\em Advances in Neural
  Information Processing Systems 32}, pages 8024--8035. Curran Associates, Inc.

\bibitem[Pennec et~al., 2006]{pennec2006spdmagneticResonanceImaging}
Pennec, X., Fillard, P., and Ayache, N. (2006).
\newblock A {R}iemannian framework for tensor computing.
\newblock {\em Int. J. Comput. Vision}, 66(1):41–66.

\bibitem[Planche, 1995]{planche1995finslerMetrics}
Planche, P. (1995).
\newblock {\em Géométrie de Finsler sur les espaces symétriques}.
\newblock PhD thesis, Université de Genève, Geneve, Switzerland.

\bibitem[Ratliff et~al., 2020]{ratliff2020finslerRobotics}
Ratliff, N.~D., Wyk, K.~V., Xie, M., Li, A., and Rana, M.~A. (2020).
\newblock Generalized nonlinear and {Finsler} geometry for robotics.
\newblock {\em CoRR}, abs/2010.14745.

\bibitem[Said et~al., 2017]{salem2017gaussianDistribOnSPD}
Said, S., Bombrun, L., Berthoumieu, Y., and Manton, J.~H. (2017).
\newblock Riemannian {G}aussian distributions on the space of symmetric
  positive definite matrices.
\newblock {\em IEEE Transactions on Information Theory}, 63(4):2153--2170.

\bibitem[Shen, 2006]{shen2006FinslerMetricInfoGeom}
Shen, Z. (2006).
\newblock Riemann-{F}insler geometry with applications to information geometry.
\newblock {\em Chinese Annals of Mathematics, Series B}, 27:73--94.

\bibitem[Shimizu et~al., 2021]{shimizu2021hyperNNplusplus}
Shimizu, R., Mukuta, Y., and Harada, T. (2021).
\newblock Hyperbolic neural networks++.
\newblock In {\em International Conference on Learning Representations}.

\bibitem[Sra, 2012]{sra2012steinMetricSPD}
Sra, S. (2012).
\newblock A new metric on the manifold of kernel matrices with application to
  matrix geometric means.
\newblock In Pereira, F., Burges, C. J.~C., Bottou, L., and Weinberger, K.~Q.,
  editors, {\em Advances in Neural Information Processing Systems}, volume~25.
  Curran Associates, Inc.

\bibitem[Sra, 2015]{sra2015positiveDefiniteMatrices}
Sra, S. (2015).
\newblock Positive definite matrices and the {S}-divergence.
\newblock {\em Proceedings of the American Mathematical Society}.
\newblock Published electronically: October 22, 2015.

\bibitem[Sun et~al., 2019]{sun2018rotate}
Sun, Z., Deng, Z.-H., Nie, J.-Y., and Tang, J. (2019).
\newblock Rotate: Knowledge graph embedding by relational rotation in complex
  space.
\newblock In {\em International Conference on Learning Representations}.

\bibitem[Tay et~al., 2018]{tay2018hyperQA}
Tay, Y., Tuan, L.~A., and Hui, S.~C. (2018).
\newblock Hyperbolic representation learning for fast and efficient neural
  question answering.
\newblock In {\em Proceedings of the Eleventh ACM International Conference on
  Web Search and Data Mining}, WSDM '18, pages 583--591, New York, NY, USA.
  ACM.

\bibitem[Tosato et~al., 2010]{tosato2010multiclassClassifOnRiemManifolds}
Tosato, D., Farenzena, M., Spera, M., Murino, V., and Cristani, M. (2010).
\newblock Multi-class classification on {R}iemannian manifolds for video
  surveillance.
\newblock In Daniilidis, K., Maragos, P., and Paragios, N., editors, {\em
  Computer Vision - {ECCV} 2010, 11th European Conference on Computer Vision,
  Heraklion, Crete, Greece, September 5-11, 2010, Proceedings, Part {II}},
  volume 6312 of {\em Lecture Notes in Computer Science}, pages 378--391.
  Springer.

\bibitem[Toutanova and Chen, 2015]{toutanova2015kgObserveFeats}
Toutanova, K. and Chen, D. (2015).
\newblock Observed versus latent features for knowledge base and text
  inference.
\newblock In {\em Proceedings of the 3rd Workshop on Continuous Vector Space
  Models and their Compositionality}, pages 57--66, Beijing, China. Association
  for Computational Linguistics.

\bibitem[Trouillon et~al., 2016]{trouillon2016complex}
Trouillon, T., Welbl, J., Riedel, S., Gaussier, E., and Bouchard, G. (2016).
\newblock Complex embeddings for simple link prediction.
\newblock In {\em Proceedings of the 33rd International Conference on
  International Conference on Machine Learning - Volume 48}, ICML'16, page
  2071–2080. JMLR.org.

\bibitem[Tuzel et~al., 2006]{tuzel2006regionCovariance}
Tuzel, O., Porikli, F., and Meer, P. (2006).
\newblock Region covariance: A fast descriptor for detection and
  classification.
\newblock In Leonardis, A., Bischof, H., and Pinz, A., editors, {\em Computer
  Vision -- ECCV 2006}, pages 589--600, Berlin, Heidelberg. Springer Berlin
  Heidelberg.

\bibitem[Tuzel et~al., 2008]{tuzel2008spdPedestrianDetection}
Tuzel, O., Porikli, F., and Meer, P. (2008).
\newblock Pedestrian detection via classification on {R}iemannian manifolds.
\newblock {\em IEEE Trans. Pattern Anal. Mach. Intell.}, 30(10):1713--1727.

\bibitem[Ungar, 2005]{ungar2005DiffGeo}
Ungar, A. (2005).
\newblock Gyrovector spaces and their differential geometry.
\newblock {\em Nonlinear Functional Analysis and Applications}, 10.

\bibitem[Ungar, 2008]{ungar2008gyrovector}
Ungar, A.~A. (2008).
\newblock {\em A Gyrovector Space Approach to Hyperbolic Geometry}.
\newblock Morgan \& Claypool.

\bibitem[Ungar, 2018]{ungar2018pseudorotations}
Ungar, A.~A. (2018).
\newblock {\em Beyond Pseudo-Rotations in Pseudo-Euclidean Spaces}.
\newblock Mathematical Analysis and its Applications. Academic Press.

\bibitem[Vemulapalli and Jacobs, 2015]{Vemulapalli2015RiemannianML}
Vemulapalli, R. and Jacobs, D. (2015).
\newblock Riemannian metric learning for symmetric positive definite matrices.
\newblock {\em ArXiv}, abs/1501.02393.

\bibitem[Vermeer, 2005]{vermeer2005geometricInterpretationGyroaddition}
Vermeer, J. (2005).
\newblock A geometric interpretation of {U}ngar's addition and of gyration in
  the hyperbolic plane.
\newblock {\em Topology and Its Applications: a journal devoted to general,
  geometric, set-theoretic and algebraic topology}, 152(3):226--242.

\bibitem[Wang et~al., 2007]{wang2007jeopardyTrecqaDataset}
Wang, M., Smith, N.~A., and Mitamura, T. (2007).
\newblock What is the {J}eopardy model? {A} quasi-synchronous grammar for {QA}.
\newblock In {\em Proceedings of the 2007 Joint Conference on Empirical Methods
  in Natural Language Processing and Computational Natural Language Learning
  ({EMNLP}-{C}o{NLL})}, pages 22--32, Prague, Czech Republic. Association for
  Computational Linguistics.

\bibitem[Wang et~al., 2018]{wang2018analysisOnRiemManifoldImageSets}
Wang, W., Wang, R., Huang, Z., Shan, S., and Chen, X. (2018).
\newblock Discriminant analysis on {R}iemannian manifold of {G}aussian
  distributions for face recognition with image sets.
\newblock {\em IEEE Transactions on Image Processing}, 27(1):151--163.

\bibitem[Wu et~al., 2015]{wu2015manifoldKernelforVisualTracking}
Wu, Y., Jia, Y., Li, P., Zhang, J., and Yuan, J. (2015).
\newblock Manifold kernel sparse representation of symmetric positive-definite
  matrices and its applications.
\newblock {\em IEEE Transactions on Image Processing}, 24(11):3729--3741.

\bibitem[Yang et~al., 2020]{yang2020NagEnonAbelianGroupsKG}
Yang, T., Sha, L., and Hong, P. (2020).
\newblock {\em NagE: Non-Abelian Group Embedding for Knowledge Graphs}, page
  1735–1742.
\newblock Association for Computing Machinery, New York, NY, USA.

\bibitem[Yang et~al., 2015]{yang2015wikiqaDataset}
Yang, Y., Yih, W.-t., and Meek, C. (2015).
\newblock {W}iki{QA}: A challenge dataset for open-domain question answering.
\newblock In {\em Proceedings of the 2015 Conference on Empirical Methods in
  Natural Language Processing}, pages 2013--2018, Lisbon, Portugal. Association
  for Computational Linguistics.

\bibitem[Yin et~al., 2016]{Yin2016KernelClusteringSPD}
Yin, M., Guo, Y., Gao, J., He, Z., and Xie, S. (2016).
\newblock Kernel sparse subspace clustering on symmetric positive definite
  manifolds.
\newblock In {\em CVPR}, pages 5157--5164. IEEE Computer Society.

\bibitem[Zhang et~al., 2016]{zhang2016collkbe}
Zhang, F., Yuan, N.~J., Lian, D., Xie, X., and Ma, W.-Y. (2016).
\newblock Collaborative knowledge base embedding for recommender systems.
\newblock In {\em Proceedings of the 22nd ACM SIGKDD International Conference
  on Knowledge Discovery and Data Mining}, KDD '16, page 353–362, New York,
  NY, USA. Association for Computing Machinery.

\bibitem[Zhang et~al., 2019a]{zhang2019quaternionKG}
Zhang, S., Tay, Y., Yao, L., and Liu, Q. (2019a).
\newblock Quaternion knowledge graph embeddings.
\newblock In Wallach, H., Larochelle, H., Beygelzimer, A., d\textquotesingle
  Alch\'{e}-Buc, F., Fox, E., and Garnett, R., editors, {\em Advances in Neural
  Information Processing Systems}, volume~32, pages 2735--2745. Curran
  Associates, Inc.

\bibitem[Zhang et~al., 2019b]{zhang2019dlrssurvey}
Zhang, S., Yao, L., Sun, A., and Tay, Y. (2019b).
\newblock Deep learning based recommender system: A survey and new
  perspectives.
\newblock {\em ACM Comput. Surv.}, 52(1).

\bibitem[Zhang et~al., 2018a]{zhang2018manifoldToManifold}
Zhang, T., Zheng, W., Cui, Z., and Li, C. (2018a).
\newblock Deep manifold-to-manifold transforming network.
\newblock In {\em 2018 25th IEEE International Conference on Image Processing
  (ICIP)}, pages 4098--4102.

\bibitem[Zhang et~al., 2018b]{zhang2018learningOverKBE}
Zhang, Y., Ai, Q., Chen, X., and Wang, P. (2018b).
\newblock Learning over knowledge-base embeddings for recommendation.
\newblock {\em CoRR}, abs/1803.06540.

\bibitem[Zhao et~al., 2019]{zhao2019convexClassifSPD}
Zhao, K., Wiliem, A., Chen, S., and Lovell, B.~C. (2019).
\newblock Convex class model on symmetric positive definite manifolds.
\newblock {\em Image and Vision Computing}, 87:57--67.

\end{thebibliography}

\newpage
\appendix
\section{Implementation Details}

\subsection{Computational Complexity of Operations}
\label{app:dist-alg-complexity}

In this section we discuss the computational theoretical complexity of the different operations involved in the development of this work. We employ Big O notation\footnote{\url{https://en.wikipedia.org/wiki/Big_O_notation}}. Since in all cases operations are not nested, but are applied sequentially, the costs can be added resulting in a polynomial expression. Thus, by applying the properties of the notation, we disregard lower-order terms of the polynomial.

\paragraph{Matrix Operations:}
For $n \times n$ matrices, the associated complexity of each operation is as follows:\footnote{\url{https://en.wikipedia.org/wiki/Computational_complexity_of_mathematical_operations}}
\begin{itemize}
    \item Addition and subtraction: $\bigO(n^2)$
    \item Multiplication: $\bigO(n^{2.4})$
    \item Inversion: $\bigO(n^{2.4})$
    \item Diagonalization: $\bigO(n^{3})$
\end{itemize}

\paragraph{SPD Operations:}
For $n \times n$ SPD matrices, the associated complexity of each operation is as follows:
\begin{itemize}
    \item Exp/Log map: $\bigO(n^{3})$, due to diagonalizations.
    \item Gyro-Addition: $\bigO(n^{2.4})$, due to matrix multiplications
    \item Matrix Scaling: $\bigO(n^{3})$, due to exp and log maps.
    \item Isometries: $\bigO(n^{2.4})$, due to matrix multiplications.
\end{itemize}

\paragraph{Distance Calculation:}
The full computation of the distance algorithm in $\spd_n$ involves matrix square root, inverses, multiplications, and diagonalizations. Since they are applied sequentially, without affecting the dimensionality of the matrices, we can take the highest value as the asymptotic cost of the algorithm, which is $\bigO(n^{3})$.

\subsection{Tangent Space Optimization}
\label{app:tg-space-optim}

Optimization in Riemannian manifolds normally requires Riemannian Stochastic Gradient Descent (RSGD) \cite{bonnabel2011rsgd} or other Riemannian techniques \cite{becigneul2019riemannianMethods}. We performed initial tests converting the Euclidean gradient into its Riemannian form, but found it to be less numerically stable and also slower than tangent space optimization \cite{chami2019hgcnn}.
With tangent space optimization, we can use standard Euclidean optimization techniques, and respect the geometry of the manifold. 
Note that tangent space optimization is an exact procedure, which does not incur losses in representational power. This is the case in $\spd_n$ specifically because of a completeness property given by the choice of $I \in \spd_n$ as the basepoint: there is always a global bijection between the tangent space and the manifold.

\section{Experimental Details}
\label{sec:exp-details}
All models and experiments were implemented in PyTorch \cite{paszke2019pytorchNeurips} with distributed data parallelism, for high performance on clusters of CPUs/GPUs.

\paragraph{Hardware:}
All experiments were run on Intel Cascade Lake CPUs, with microprocessors Intel Xeon Gold 6230 (20 Cores, 40 Threads, 2.1 GHz, 28MB Cache, 125W TDP). 
Although the code supports GPUs, we did not utilize them due to higher availability of CPU's.

\subsection{Knowledge Graph Completion}
\label{app:expdetails-kgcompletion}

\paragraph{Setup:} We train for $5000$ epochs, with batch size of $4096$ and $10$ negative samples, reducing the learning rate by a factor of $2$ if the model does not improve the performance on the dev set after $50$ epochs, and early stopping based on the MRR if the model does not improve after $500$ epochs. We use the burn-in strategy \cite{nickel2017poincare} training with a 10 times smaller learning rate for the first 10 epochs.
We experiment with matrices of dimension $n \times n$ where $n \in \{14, 20, 24\}$ (this is the equivalent of $\{105, 210, 300\}$ degrees of freedom respectively), learning rates from $\{1\rm{e-}4, 5\rm{e-}5, 1\rm{e-}5\}$ and weight decays of $\{1\rm{e-}2, 1\rm{e-}3\}$.

\begin{table}[!b]
\vspace{-1mm}
\caption{Statistics for Knowledge Graph completion datasets.}
\label{tab:kg-datastats}
\small
\centering
\adjustbox{max width=\linewidth}{
\begin{tabular}{lrrrrr}
\toprule
\textbf{Dataset} & \multicolumn{1}{c}{\textbf{Entities}} & \multicolumn{1}{c}{\textbf{Relations}} & \multicolumn{1}{c}{\textbf{Train}} & \multicolumn{1}{c}{\textbf{Dev}} & \multicolumn{1}{c}{\textbf{Test}} \\
\midrule
WN18RR & 40943 & 11 & 86835 & 3034 & 3134 \\
FB15k-237 & 14541 & 237 & 272115 & 17535 & 20466 \\
\bottomrule
\end{tabular}
}
\end{table}

\paragraph{Datasets:} Stats about the datasets used in Knowledge graph experiments can be found in Table~\ref{tab:kg-datastats}.

\paragraph{Results:} In addition to the results provided in \S\ref{sec:kg-completion}, in Table~\ref{tab:kg-results-appendix} we provide a comparison with other state-of-the-art models. We include ComplEx \cite{trouillon2016complex}, Tucker \cite{balazevic2019tucker}, and Quaternion \cite{zhang2019quaternionKG}. 

\paragraph{Analisis:} In Figure~\ref{fig:appendix-normangle-plots} we add equivalent plots to the ones explained in \S\ref{sec:analysis} for other relations from WN18RR.

\begin{figure*}[!t]
\centering
\subfloat[\textit{'Also see'}]{\includegraphics[width=.33\textwidth,keepaspectratio,trim={0 0 0 11cm},clip]{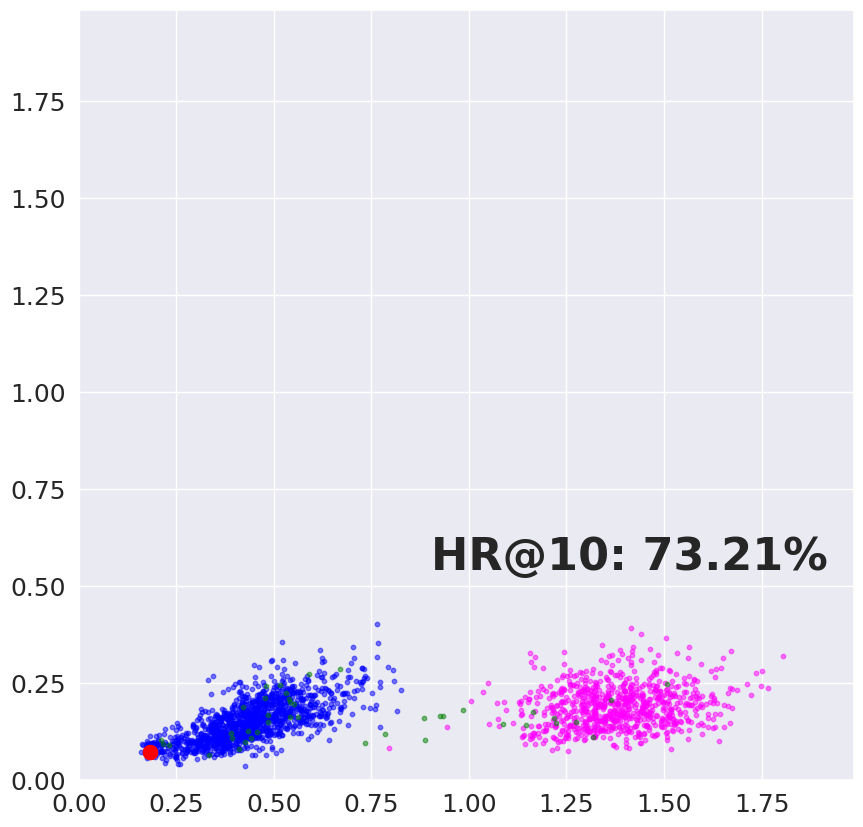}}\hfill
\subfloat[\textit{'Has part'}]{\includegraphics[width=.33\textwidth,keepaspectratio,trim={0 0 0 11cm},clip]{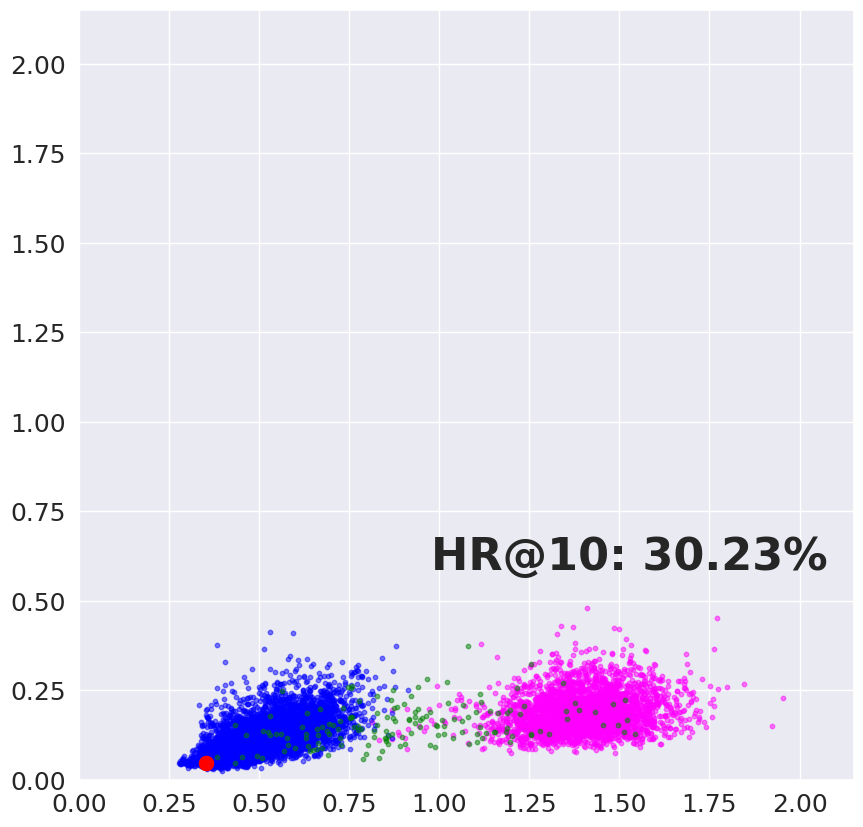}}\hfill
\subfloat[\textit{'Instance hypernym'}]{\includegraphics[width=.33\textwidth,keepaspectratio,trim={0 0 0 11cm},clip]{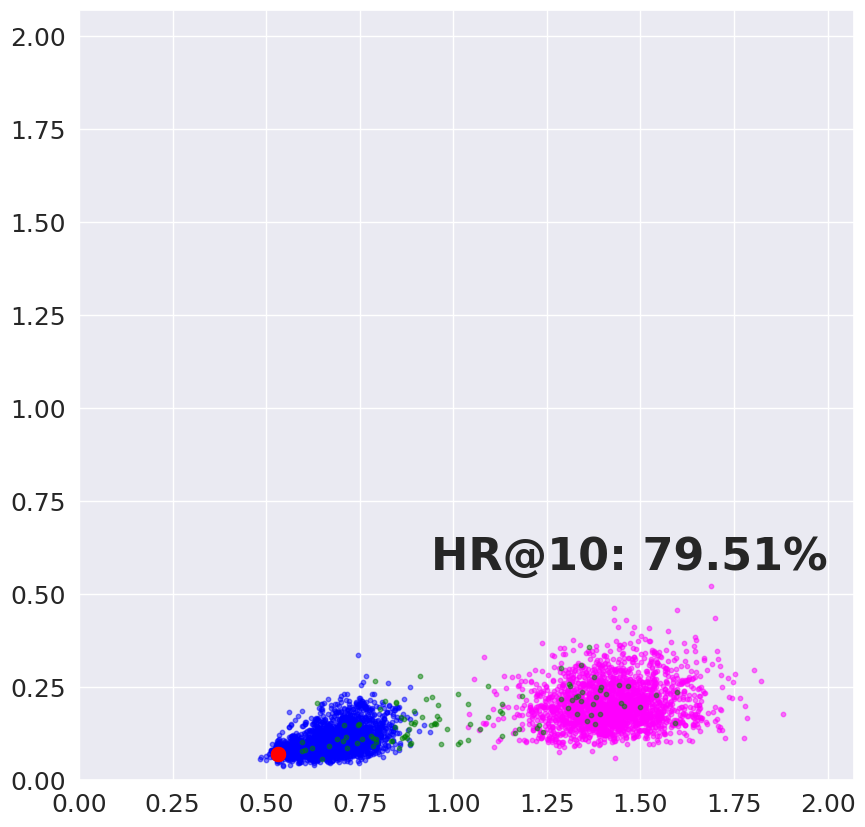}}\hfill \\
\subfloat[\textit{'Member meronym'}]{\includegraphics[width=.33\textwidth,keepaspectratio,trim={0 0 0 11cm},clip]{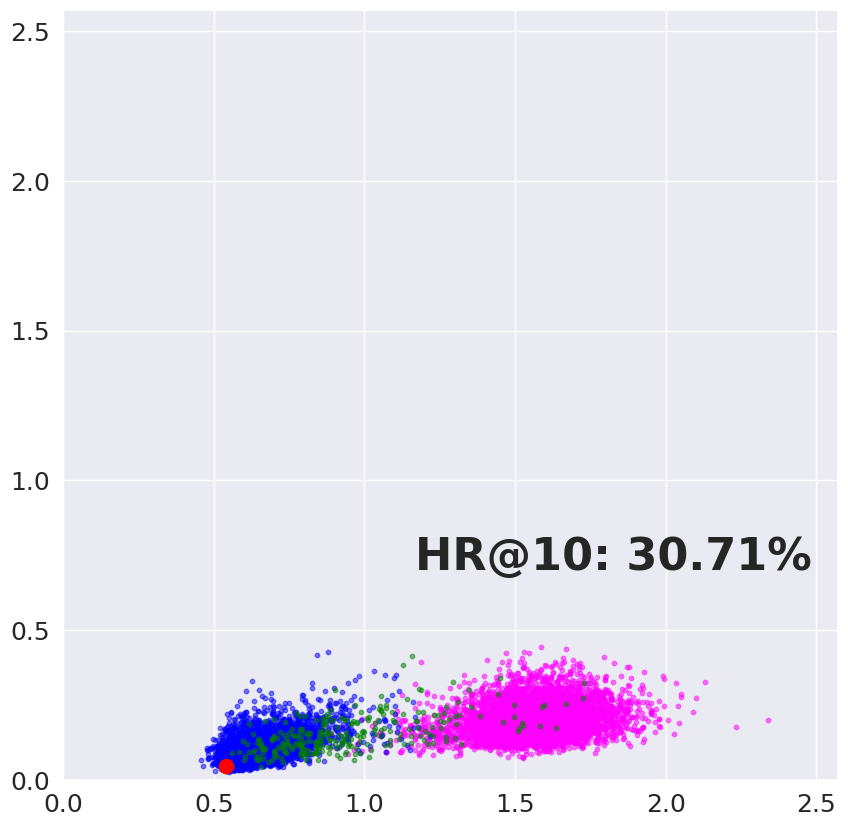}}\hfill
\subfloat[\textit{'Member of domain region'}]{\includegraphics[width=.33\textwidth,keepaspectratio,trim={0 0 0 11cm},clip]{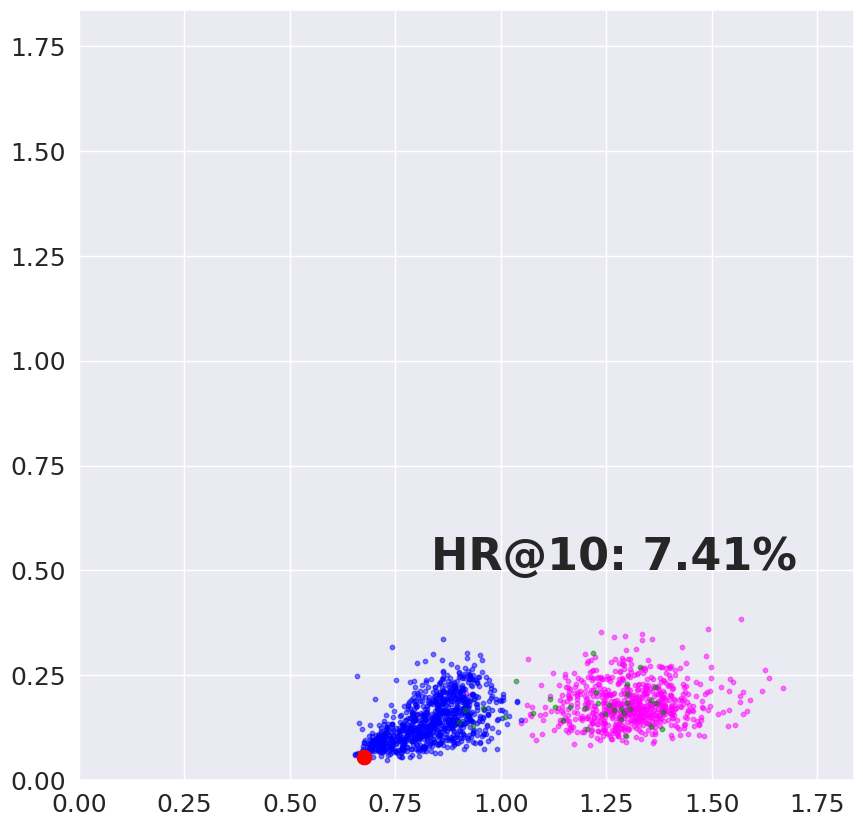}}\hfill
\subfloat[\textit{'Member of domain usage'}]{\includegraphics[width=.33\textwidth,keepaspectratio,trim={0 0 0 11cm},clip]{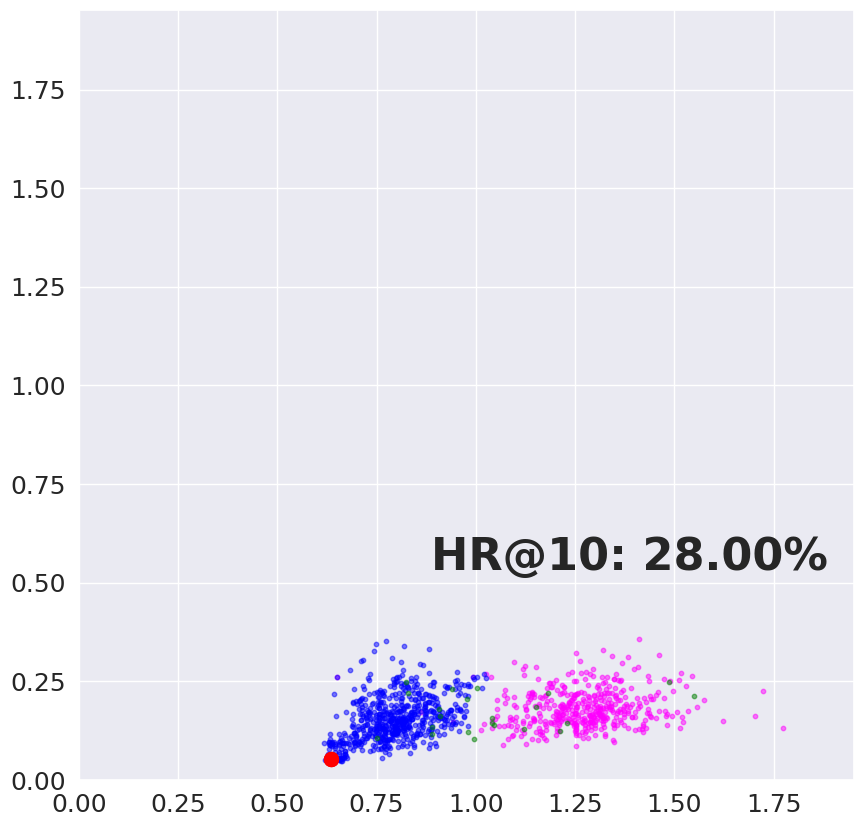}}\hfill \\
\subfloat[\textit{'Similar to'}]{\includegraphics[width=.33\textwidth,keepaspectratio,trim={0 0 0 11cm},clip]{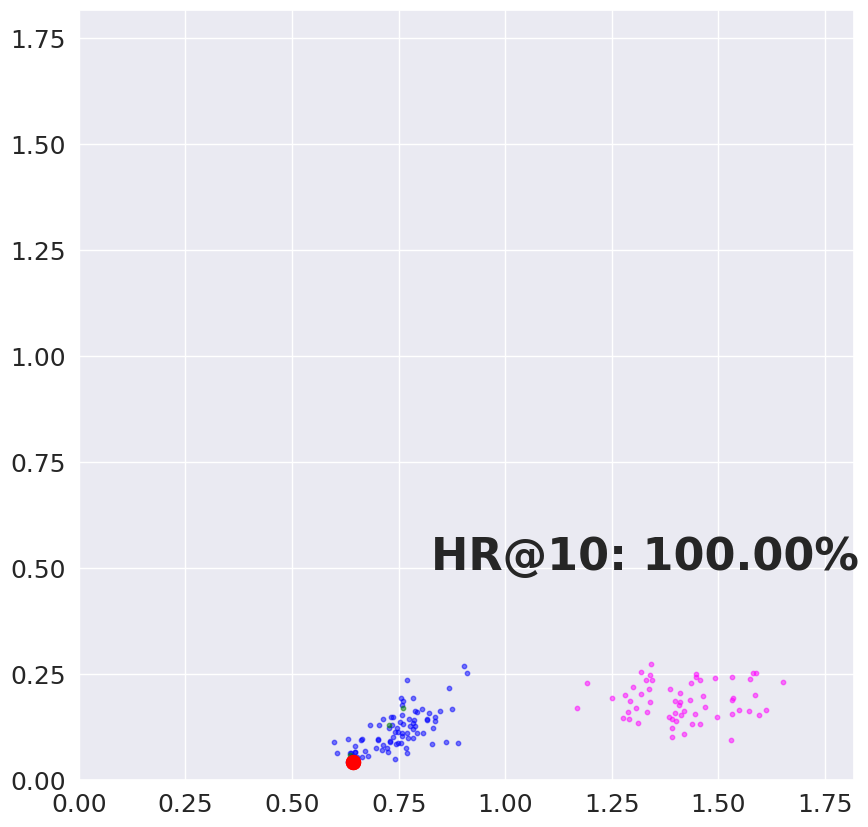}}\hfill
\subfloat[\textit{'Synset domain topic of'}]{\includegraphics[width=.33\textwidth,keepaspectratio,trim={0 0 0 11cm},clip]{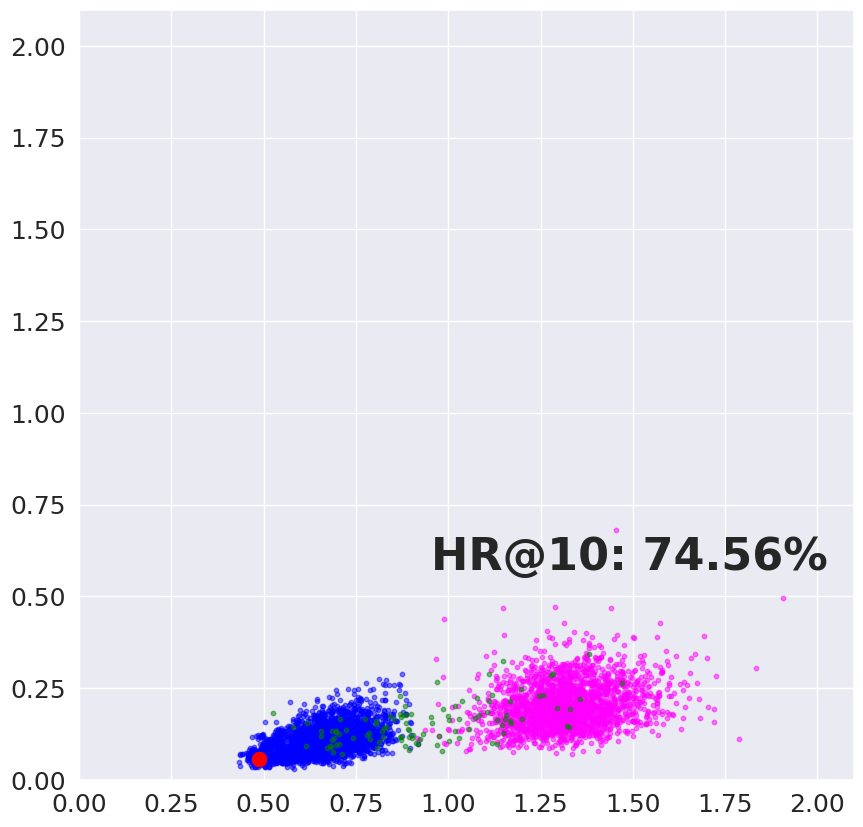}}\hfill
\subfloat[\textit{'Verb group'}]{\includegraphics[width=.33\textwidth,keepaspectratio,trim={0 0 0 11cm},clip]{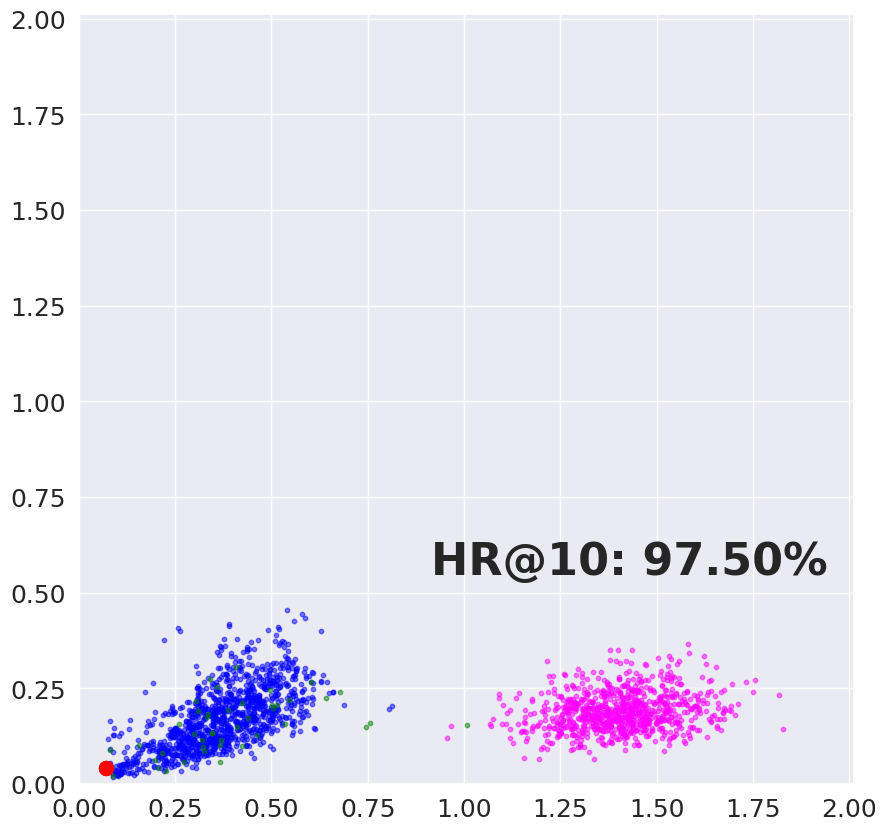}}\hfill
\caption{\textcolor{blue}{Train}, \textcolor{magenta}{negative} and \textcolor{ForestGreen}{validation} triples for WN18RR relationships of the \SPDtraFone model after convergence. The \textcolor{red}{red} dot corresponds to the relation addition $\textbf{R}$.}
\label{fig:appendix-normangle-plots}
\end{figure*}

\begin{table}[!t]
\vspace{-1mm}
\caption{Results for Knowledge graph completion.}
\label{tab:kg-results-appendix}
\small
\centering
\adjustbox{max width=\linewidth}{
\begin{tabular}{llrrrrrrrr}
\toprule
 &  & \multicolumn{4}{c}{WN18RR} & \multicolumn{4}{c}{FB15k-237} \\
 \cmidrule(lr){3-6}\cmidrule(lr){7-10}
Space & Model & \multicolumn{1}{l}{MRR} & \multicolumn{1}{l}{HR@1} & \multicolumn{1}{l}{HR@3} & \multicolumn{1}{l}{HR@10} & \multicolumn{1}{l}{MRR} & \multicolumn{1}{l}{HR@1} & \multicolumn{1}{l}{HR@3} & \multicolumn{1}{l}{HR@10} \\
\midrule
\multirow{2}{*}{$\mathbb C$} & ComplEx & 48.0 & 43.5 & 49.5 & 57.2 & 35.7 & 26.4 & 39.2 & 54.7 \\
 & RotC & 47.6 & 42.8 & 49.2 & 57.1 & 33.8 & 24.1 & 37.5 & 53.3 \\
 \hline
$\mathbb Q$ & Quaternion & 48.8 & 43.8 & 50.8 & 58.2 & \textbf{36.6} & \textbf{27.1} & \textbf{40.1} & \textbf{55.6} \\
\hline
\multirow{4}{*}{$\RR$} & Tucker & 47.0 & 44.3 & 48.2 & 52.6 & 35.8 & 26.6 & 39.4 & 54.4 \\
 & MuRE & 47.5 & 43.6 & 48.7 & 55.4 & 33.6 & 24.5 & 37.0 & 52.1 \\
 & RotE & 49.4 & 44.6 & 51.2 & 58.5 & 34.6 & 25.1 & 38.1 & 53.8 \\
 & RefE & 47.3 & 43.0 & 48.5 & 56.1 & 35.1 & 25.6 & 39.0 & 54.1 \\
 \hline
\multirow{3}{*}{$\mathbb H$} & MuRP & 48.1 & 44.0 & 49.5 & 56.6 & 33.5 & 24.3 & 36.7 & 51.8 \\
 & RotH & \textbf{49.6} & \textbf{44.9} & \textbf{51.4} & 58.6 & 34.4 & 24.6 & 38.0 & 53.5 \\
 & RefH & 46.1 & 40.4 & 48.5 & 56.8 & 34.6 & 25.2 & 38.3 & 53.6 \\
 \hline
\multirow{6}{*}{$\spd$} & \SPDtraRiem & 48.1 & 43.1 & 50.1 & 57.6 & 34.5 & 25.1 & 38.0 & 53.5 \\
 & \SPDtraFone & 48.4 & 42.6 & 51.0 & \textbf{59.0} & 32.9 & 23.6 & 36.3 & 51.5 \\
 & \SPDrotRiem & 46.2 & 39.7 & 49.6 & 57.8 & 32.7 & 23.4 & 36.1 & 51.4 \\
 & \SPDrotFone & 38.6 & 25.6 & 48.7 & 56.8 & 31.4 & 22.3 & 34.7 & 49.8 \\
 & \SPDrefRiem & 48.3 & 44.0 & 49.7 & 56.7 & 32.5 & 23.4 & 35.6 & 51.0 \\
 & \SPDrefFone & 48.7 & 44.3 & 50.1 & 57.4 & 30.7 & 21.7 & 33.7 & 48.8 \\
 \bottomrule
\end{tabular}
}
\vspace{-4mm}
\end{table}

\begin{table}[!b]
\vspace{-1mm}
\caption{Statistics for KG Recommender datasets.}
\label{tab:recosys-datastats}
\small
\centering
\adjustbox{max width=\linewidth}{
\begin{tabular}{lrrrrrr}
\toprule
\multirow{2}{*}{\textbf{Dataset}} & \multirow{2}{*}{\textbf{Users}} & \multirow{2}{*}{\textbf{Items}} & \multirow{2}{*}{\textbf{\begin{tabular}[c]{@{}r@{}}Other \\ Entities\end{tabular}}} & \multicolumn{2}{c}{\textbf{Train Relations}} & \multirow{2}{*}{\textbf{Dev/Test}} \\
\cmidrule(lr){5-6}
 &  &  &  & \textbf{User-item} & \textbf{Others} &  \\
\midrule
Software & 1826 & 802 & 689 & 8242 & 6078 & 1821 \\
Luxury Beauty & 3819 & 1581 & 2 & 20796 & 26044 & 3639 \\
Prime Pantry & 14180 & 4970 & 1100 & 102848 & 99118 & 14133 \\
MindReader & 961 & 2128 & 11775 & 11279 & 99486 & 953 \\
\bottomrule
\end{tabular}
}
\end{table}

\subsection{Knowledge Graph Recommender Systems}
\label{app:expdetails-kgrecosys}

\paragraph{Setup:} We train for $3000$ epochs, with batch size from $\{512, 1024\}$ and $10$ negative samples, and early stopping based on the MRR if the model does not improve after $200$ epochs. We use the burn-in strategy \cite{nickel2017poincare} training with a 10 times smaller learning rate for the first 10 epochs.
We report average $\pm$ standard deviation of 3 runs.
We experiment with matrices of dimension $10 \times 10$ (this is the equivalent of $\{55\}$ degrees of freedom), learning rates from $\{5\rm{e-}4, 1\rm{e-}4, 5\rm{e-}5\}$ and weight decay of $1\rm{e-}3$. Same grid search is applied to baselines.

\paragraph{Datasets:} On the Amazon dataset we adopt the 5-core split for the branches "Software", "Luxury \& Beauty" and "Industrial \& Scientific", which form a diverse dataset in size and domain.
We add relationships used in previous work \cite{zhang2018learningOverKBE, ai2018explreco}. These are:
\begin{itemize}
    \item \textit{also\_bought}: 
    users who bought item A also bought item B. 
    \item \textit{also\_view}: 
    users who bought item A also viewed item B.
    \item \textit{category}: 
    the item belongs to one or more categories.
    \item \textit{brand}: 
    the item belongs to one brand. 
\end{itemize}
On the MindReader dataset, we consider a user-item interaction when a user gave an explicit positive rating to the movie. The relationships added are:
\begin{itemize}
    \item \textit{directed\_by}: the movie was directed by this person.
    \item \textit{produced\_by}: the movie was produced by this person/company.
    \item \textit{from\_decade}: the movie was released in this decade.
    \item \textit{followed\_by}: the movie was followed by this other movie.
    \item \textit{has\_genre}: the movie belongs to this genre.
    \item \textit{has\_subject}: the movie has this subject.
    \item \textit{starring}: the movie was starred by this person.
\end{itemize}
Statistics of the datasets with the added relationships can be seen in Table~\ref{tab:recosys-datastats}. For dev/test we only consider users with 3 or more interactions.

\subsection{Question Answering}
\label{app:expdetails-qa}

\paragraph{Setup:} We train for $300$ epochs, with $2$ negative samples and early stopping based on the MRR if the model does not improve after $20$ epochs. We use the burn-in strategy \cite{nickel2017poincare} training with a 10 times smaller learning rate for the first 10 epochs. We report average $\pm$ standard deviation of 3 runs.
We experiment with matrices of dimension $14 \times 14$ (equivalent of $105$ degrees of freedom respectively), batch size from $\{512, 1024\}$, learning rate from $\{1\rm{e-}4, 5\rm{e-}5, 1\rm{e-}5\}$ and weight decays of $1\rm{e-}3$.
Same grid search was applied to baselines.

\paragraph{Datasets:} Stats about the datasets used for Question Answering experiments can be found in Table~\ref{tab:qa-datastats}.

\begin{table}[!t]
\caption{Statistics for Question Answering datasets.}
\label{tab:qa-datastats}
\small
\centering
\adjustbox{max width=\linewidth}{
\begin{tabular}{lrr}
\toprule
 & \multicolumn{1}{c}{\textsc{TrecQA}} & \multicolumn{1}{l}{\textsc{WikiQA}} \\
 \midrule
Train Qs & 1227 & 2119 \\
Dev Qs & 65 & 127 \\
Test Qs & 68 & 244 \\
\midrule
Train pairs & 53417 & 20361 \\
Dev Pairs & 1117 & 1131 \\
Test Pairs & 1442 & 2352 \\
\bottomrule
\end{tabular}
}
\end{table}

\section{Differential Geometry of $\spd_n$}
\label{sec:DiffGeo_SPD}

\subsection{Orthogonal Diagonalization}
\label{s.orthdiag}
Every real symmetric matrix may be orthogonally diagonalized: For every point $P\in\spd_n$ we may find a positive diagonal matrix $D$ and an orthogonal matrix $K$ such that $P=KDK^T$.  
This diagonalization has two practical consequences: it allows  efficient computation of important $\spd_n$ operations, and provides another means of generalizing Euclidean notions to $\spd_n$.

With respect to computation, if $P\in\spd_n$ has orthogonal diagonalization $P=KDK^T$, we may compute its square root and logarithm as
$\sqrt{P}=K\sqrt{D}K^T$ and $\log(P)=K\log(D)K^T$
where $\sqrt{D}=\operatorname{diag}(\sqrt{d_1},\ldots,\sqrt{d_n})$ and $\log(D)=\operatorname{diag}(\log{d_1},\ldots,\log{d_n})$ for $D=\operatorname{diag}(d_1,\ldots,d_n)$.  
Similarly, if a tangent vector $U\in S_n$ has orthogonal diagonalization $U=K\Lambda K^T$ (here $\Lambda=\operatorname{diag}(\lambda_1,\ldots,\lambda_n)$ not necessarily positive definite), the exponential map is computed as
$\exp(U)=K e^\Lambda K^T$,
where $e^\Lambda=\operatorname{diag}(e^{\lambda_1},\ldots e^{\lambda_n})$.

We verify this in the two lemmas below.

\begin{lemma}
If $K\in O(n)$ and $X$ is any $n\times n$ matrix, then $\exp(KXK^T)=K\exp(X)K^T$.  
\label{prop:MatrixExp-Conj}
\end{lemma}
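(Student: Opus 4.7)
The plan is to use the power series definition of the matrix exponential, $\exp(Y) = \sum_{k \ge 0} Y^k/k!$, which converges for every $n \times n$ matrix $Y$. The entire argument rests on a single algebraic observation: because $K \in O(n)$, we have $K^T K = I$, so conjugation by $K$ commutes with taking matrix powers.

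First I would verify by induction on $k$ that $(KXK^T)^k = KX^kK^T$. The base case $k = 0$ gives $I = K K^T$, which holds for orthogonal $K$. For the inductive step, $(KXK^T)^{k+1} = (KX^kK^T)(KXK^T) = KX^k(K^TK)XK^T = KX^{k+1}K^T$, where the cancellation $K^TK = I$ is the only nontrivial input.

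Next I would take the partial sums $S_N(Y) = \sum_{k=0}^N Y^k/k!$. By the previous step and linearity, $S_N(KXK^T) = K S_N(X) K^T$. Passing to the limit as $N \to \infty$, and using continuity of matrix multiplication (so that left and right multiplication by fixed matrices $K$ and $K^T$ commute with limits), gives $\exp(KXK^T) = K \exp(X) K^T$.

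There is no real obstacle here; the only thing to be mildly careful about is justifying that conjugation commutes with the infinite sum, but since the series defining $\exp$ converges absolutely in any submultiplicative matrix norm and $\|KYK^T\| \le \|K\|\|Y\|\|K^T\|$ is finite, the interchange of limit and conjugation is immediate. The assumption $K \in O(n)$ is actually stronger than needed: the identity holds for any invertible $K$ with $K^T$ replaced by $K^{-1}$, but the orthogonal case is what is used in the paper (so that symmetry is preserved and $K^T = K^{-1}$).
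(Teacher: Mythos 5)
Your proof is correct and follows essentially the same route as the paper: both use the identity $(KXK^T)^k = KX^kK^T$ (which the paper phrases as "conjugation is an automorphism of the matrix algebra"), apply it to the partial sums of the exponential series, and pass to the limit. Your version merely spells out the induction and the continuity justification that the paper leaves implicit.
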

\begin{proof}
As $K$ is orthogonal, $K^T=K^{-1}.$
Conjugation is an automorphism of the algebra of $n\times n$ matrices, and so applying this to any partial sum of the exponential $\exp(X)=\sum_{n=0}^\infty \frac{1}{n!}X^n$ yields
$$\sum_{n=0}^N\frac{1}{n!}(KXK^{-1})^n=K\left(\sum_{n=0}^N\frac{1}{n!}X^n\right)K^{-1}.$$
Taking the limit of this equality as $N\to\infty$ gives the claimed result.
\end{proof}

\begin{lemma}
If $D=\operatorname{diag}(d_1,\ldots, d_n)$ is a diagonal matrix, then $\exp(D)=\operatorname{diag}(e^{d_1},\ldots, e^{d_n})$. 
\label{prop:MatrixExp-Diag}
\end{lemma}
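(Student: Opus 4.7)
The plan is to unwind the definition of the matrix exponential as a power series and exploit the fact that diagonal matrices are closed under multiplication in a particularly simple way, so that the series reduces to $n$ independent scalar exponential series along the diagonal.

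First I would write
\[
\exp(D)=\sum_{k=0}^\infty \frac{1}{k!} D^k
\]
and then establish by an easy induction on $k$ that $D^k=\operatorname{diag}(d_1^k,\ldots,d_n^k)$: the base case $k=0$ gives the identity, and the inductive step follows because the product of two diagonal matrices with entries $(a_i)$ and $(b_i)$ is the diagonal matrix with entries $(a_i b_i)$.

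Next, since finite sums of diagonal matrices are diagonal with componentwise entries, each partial sum satisfies
\[
\sum_{k=0}^N \frac{1}{k!} D^k = \operatorname{diag}\!\left(\sum_{k=0}^N \frac{d_1^k}{k!},\ldots,\sum_{k=0}^N \frac{d_n^k}{k!}\right).
\]
Taking $N\to\infty$ and using the scalar identity $\sum_{k=0}^\infty d_i^k/k! = e^{d_i}$ together with the fact that convergence in any matrix norm implies entrywise convergence yields $\exp(D)=\operatorname{diag}(e^{d_1},\ldots,e^{d_n})$.

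There is essentially no obstacle here; the only mild subtlety is justifying that the limit of the diagonal partial sums is the diagonal matrix of limits, which is immediate from entrywise convergence. This lemma, combined with Lemma~\ref{prop:MatrixExp-Conj}, then justifies the orthogonally-diagonalized formula $\exp(K\Lambda K^T)=K e^\Lambda K^T$ used throughout Section~\ref{s.orthdiag}.
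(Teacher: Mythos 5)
Your proof is correct and follows essentially the same route as the paper's: both expand $\exp(D)$ as its power series, use the fact that products (hence powers) of diagonal matrices act entrywise to evaluate each partial sum, and pass to the limit to recover the scalar exponentials on the diagonal. The extra detail you supply (the induction on $k$ and the remark on entrywise convergence) only makes explicit what the paper leaves implicit.
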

\begin{proof}
The multiplication of diagonal matrices coincides with the elementwise product of their diagonal entries.  Again applying this to any partial sum of the exponential of $D=\operatorname{diag}(d_1,\ldots,d_n)$ gives
$$\sum_{n=0}^N\frac{1}{n!}\operatorname{diag}(\ldots,d_i\ldots)^n=\operatorname{diag}\left(\ldots,\sum_{n=0}^N\frac{1}{n!}d_i^n,\ldots\right).$$
Taking the limit of this equality as $N\to\infty$ gives the claimed result.
\end{proof}

\subsection{Metric and Isometries}
\label{sec:metric_isom}
The Riemannian metric on $\spd_n$ is defined as follows: 
if $U,V\in S_n$ are tangent vectors based at $P\in \spd_n$, their inner product is:
$$\langle U,V\rangle_P=\tr(P^{-1}UP^{-1}V).$$ 
Note that at the basepoint, this is just the standard matrix inner product $\langle U,V\rangle_I=\tr(UV^T)$ as $U,V$ are symmetric.
We now verify the $GL(n,\RR)$ action given by $M$ acting as $P\mapsto MPM^T$ is an action by isometries of this metric.

\begin{lemma}
The action $f(P)=MPM^T$ extends to tangent vectors $U$ based at $P$ without change in formula: $f_\ast(U)=MUM^T$
\label{prop:Isometry-TangVector}
\end{lemma}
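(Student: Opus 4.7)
The plan is to exploit the fact that $\spd_n$ is an open subset of the ambient vector space $S_n$ of symmetric matrices, so the tangent space $T_P\spd_n$ at any point $P$ may be canonically identified with $S_n$ itself (this identification is already used throughout the paper). Under this identification, the pushforward of $f$ at $P$ can be computed by differentiating $f$ along any smooth curve through $P$ with the prescribed initial velocity.

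First, I would pick a convenient representative curve. Given $U\in S_n$, set $\gamma(t)=P+tU$. Since $\spd_n$ is open in $S_n$ and $P\in\spd_n$, we have $\gamma(t)\in\spd_n$ for all sufficiently small $t$, and clearly $\gamma(0)=P$ with $\gamma'(0)=U$. Next, I would apply $f$ and differentiate: since $f$ is linear on $S_n$ (in fact on the full space of $n\times n$ matrices),
\begin{equation*}
f(\gamma(t)) = M(P+tU)M^T = MPM^T + t\,MUM^T,
\end{equation*}
so
\begin{equation*}
f_\ast(U) = \left.\frac{d}{dt}\right|_{t=0} f(\gamma(t)) = MUM^T.
\end{equation*}

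Finally, I would check that the resulting vector $MUM^T$ lies in $S_n$, as required for it to be a tangent vector to $\spd_n$ at $f(P)=MPM^T$: indeed $(MUM^T)^T=MU^TM^T=MUM^T$ since $U$ is symmetric. This gives the claimed formula $f_\ast(U)=MUM^T$.

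There is essentially no obstacle here; the content is just the observation that a linear map on a vector space agrees with its own differential once tangent spaces are identified with the ambient vector space, and that this identification is legitimate because $\spd_n$ is open in $S_n$. The only care needed is in picking a curve whose trace actually stays inside $\spd_n$, which the straight-line choice $\gamma(t)=P+tU$ handles for small $t$.
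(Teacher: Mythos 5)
Your proof is correct and follows essentially the same route as the paper: both compute $f_\ast(U)$ by differentiating $f$ along a curve through $P$ with velocity $U$ and using linearity of $P\mapsto MPM^T$ (the paper uses an arbitrary path where you use the straight line $P+tU$, which is an immaterial difference). Your extra check that $MUM^T$ is symmetric is a small, harmless addition the paper omits.
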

\begin{proof}
Let $P\in\spd_n$ and $U\in S_n$ be a tangent vector based at $P$.  Then by definition, $U=P^\prime _0$ is the derivative of some path $P_t$ of some path of matrices in $\spd_n$ throguh $P_0=P$. 
We compute the action of $P\to MPM^T$ on $U$ by taking the derivative of its action on the path:
$$\frac{d}{dt}\Big|_{t=0}MP_tM=MP^\prime_t M\Big|_{t=0}=MUM^T$$
\end{proof}

\begin{proposition}
For every $M\in GL(n;\RR)$ the transformation $M\mapsto MPM^T$ preserves the Riemannian metric on $\spd_n$.
\label{prop:Isometry}
\end{proposition}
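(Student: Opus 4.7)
The plan is a direct computation: apply the definition of the Riemannian inner product at the transformed point $f(P) = MPM^T$ with the transformed tangent vectors $f_\ast(U) = MUM^T$ and $f_\ast(V) = MVM^T$ (justified by Lemma~\ref{prop:Isometry-TangVector}), and then manipulate with standard matrix algebra and the cyclic property of the trace to recover $\langle U,V\rangle_P$. So the goal is to verify
\[
\langle MUM^T, MVM^T\rangle_{MPM^T} = \langle U,V\rangle_P
\]
for arbitrary $P \in \spd_n$, $U,V \in S_n$, and $M \in GL(n;\RR)$.

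First, I would substitute into the defining formula to obtain
$\tr\bigl((MPM^T)^{-1}\, MUM^T\, (MPM^T)^{-1}\, MVM^T\bigr)$.
Then I would use the identity $(MPM^T)^{-1} = (M^T)^{-1} P^{-1} M^{-1}$ to expand, at which point the interior $M^{-1}M$ factors collapse, leaving
$\tr\bigl((M^T)^{-1} P^{-1} U\, M^T (M^T)^{-1} P^{-1} V\, M^T\bigr)
= \tr\bigl((M^T)^{-1} P^{-1} U\, P^{-1} V\, M^T\bigr)$
after a second cancellation in the middle. Finally, cycling $M^T$ from the right past $(M^T)^{-1}$ via the cyclic invariance of the trace yields exactly $\tr(P^{-1}U P^{-1}V) = \langle U,V\rangle_P$.

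This is essentially bookkeeping, so there is no genuine obstacle; the only point requiring a little care is making sure the application of Lemma~\ref{prop:Isometry-TangVector} is correct, i.e.\ that $f_\ast$ really acts as $U\mapsto MUM^T$ at the \emph{new} basepoint $MPM^T$ (not at $P$), so that both the inverse $P^{-1}$ factors in the metric and the tangent vectors live at the right point. With that understood, the two cancellations and the single cyclic move complete the proof, and $GL(n;\RR)$ acts by isometries as claimed.
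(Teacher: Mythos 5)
Your proposal is correct and follows essentially the same route as the paper: both substitute $f_\ast U = MUM^T$, $f_\ast V = MVM^T$ (via Lemma~\ref{prop:Isometry-TangVector}) into the defining trace formula at $MPM^T$, expand $(MPM^T)^{-1}=(M^T)^{-1}P^{-1}M^{-1}$, cancel the interior factors, and finish with the conjugation-invariance of the trace. No substantive differences.
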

\begin{proof}
Let $M\in GL(n;\RR)$ and choose arbitrary point $P\in \spd_n$, and tangent vectors $U,V\in T_P\spd_n$.  We compute the pullback of the metric under the symmetry $f(P)=MPM^T$.
Computing directly from the definition an the previous lemma,
\begin{equation*}
    \begin{split}
        f^\ast\langle U,V\rangle_P&=\langle f_\ast U,f_\ast V\rangle_{f(P)}\\
        &=\langle MUM^T, MVM^T\rangle_{MPM^T}\\
        &=\tr\left(\left(MPM^T\right)^{-1}MUM^T\left(MPM^T\right)^{-1}MVM^T\right)\\
        &=\tr\left(M^{-T}P^{-1}UP^{-1}VM^T\right)\\
        &=\tr\left(P^{-1}UP^{-1}V\right)\\
        &=\langle U,V\rangle_P,
    \end{split}
\end{equation*}
where the penultimate equality uses that trace is invariant under conjugacy.

\end{proof}

This provides a vivid geometric interpretation of the previously discussed orthogonal diagonalization operation on $\spd_n$.

\begin{corollary}
Given any $P\in\spd_n$, there exists a symmetry fixing $I$ which moves $P$ to a diagonal matrix.
\label{prop:Isometry_MaxFlat}
\end{corollary}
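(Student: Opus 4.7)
My plan is to combine the characterization of the stabilizer of $I$ under the $GL(n,\RR)$-action with the standard spectral theorem for real symmetric matrices.

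First, I would identify which symmetries $P \mapsto MPM^T$ fix the basepoint $I$. Plugging in, $MIM^T = MM^T$, so the stabilizer of $I$ consists exactly of those $M \in GL(n,\RR)$ with $MM^T = I$, i.e., $M \in O(n)$. Thus the problem reduces to showing that for any $P \in \spd_n$ there is an orthogonal $K$ with $KPK^T$ diagonal, which is precisely the content of orthogonal diagonalizability discussed in Appendix A.1.

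Next, I would invoke the spectral theorem: every real symmetric matrix $P$ admits a factorization $P = QDQ^T$ with $Q \in O(n)$ and $D$ diagonal (in fact $D$ has positive entries since $P \in \spd_n$). Setting $M = Q^T$, which is still orthogonal, and using Proposition~\ref{prop:Isometry} to know that $P \mapsto MPM^T$ is an isometry, we compute
\begin{equation*}
MPM^T = Q^T P (Q^T)^T = Q^T(QDQ^T)Q = D.
\end{equation*}
Since $M \in O(n)$, this isometry fixes $I$, and it carries $P$ to the diagonal matrix $D$, as required.

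There is no substantive obstacle here beyond correctly translating ``symmetry fixing $I$'' into ``orthogonal conjugation'' and recognizing that the spectral theorem already does all the work. The only minor bookkeeping issue is being careful with transposes: one must verify that $M = Q^T$ (and not $M = Q$) produces the correct direction, so that $M P M^T$ lands on $D$ rather than on $QDQ^T = P$ again. Once that is noted, the corollary follows immediately.
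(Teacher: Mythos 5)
Your proposal is correct and matches the paper's intended argument exactly: the paper states this corollary as an immediate consequence of the orthogonal diagonalization $P = KDK^T$ from Appendix~\ref{s.orthdiag} together with Proposition~\ref{prop:Isometry}, observing that conjugation by an orthogonal matrix fixes $I$. Your only addition is the (worthwhile) bookkeeping check that one must conjugate by $Q^T$ rather than $Q$ to land on $D$.
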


This subspace of diagonal matrices plays an essential role in working with $\spd_n$.  As we verify below, the intrinsic geometry of this subspace of diagonal matrices inherited from the Riemannian metric on $\spd_n$ is flat.

\begin{proposition}
Let $\mathcal{D}\subset\spd_n$ be the set of diagonal matrices, and define $f\colon\RR^n\to \mathcal{D}$ by $f(x_1,\ldots,x_n)=\operatorname{diag}(e^{x_1},\ldots, e^{x_n})$.  Then $f$ is an isometry from the Euclidean metric on $\RR^n$ to the metric on $\mathcal{D}$ induced from $\spd_n$.
\label{prop:MaxFlat_Euclidean}
\end{proposition}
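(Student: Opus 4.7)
The plan is to verify that $f$ is a smooth bijection and that its differential preserves the inner product at every point, from which the isometry property follows by definition of the induced (pullback) metric.

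First, I would observe that $f$ is a diffeomorphism from $\RR^n$ onto $\mathcal{D}$: each coordinate $x_i \mapsto e^{x_i}$ is a diffeomorphism of $\RR$ onto $\RR_{>0}$, and $\mathcal{D}$ consists exactly of diagonal matrices with positive entries. In particular, the tangent space to $\mathcal{D}$ at $f(x)$ is the subspace of diagonal symmetric matrices, so $df_x$ lands correctly inside $T_{f(x)}\spd_n = S_n$.

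Next, I would compute the differential at a point $x=(x_1,\ldots,x_n)$. For a tangent vector $v=(v_1,\ldots,v_n)\in T_x\RR^n\cong\RR^n$, componentwise differentiation of $f$ gives
\[
df_x(v) \;=\; \operatorname{diag}(v_1 e^{x_1},\ldots,v_n e^{x_n}).
\]
Setting $P = f(x) = \operatorname{diag}(e^{x_1},\ldots,e^{x_n})$, the inverse is $P^{-1}=\operatorname{diag}(e^{-x_1},\ldots,e^{-x_n})$, and so for $U=df_x(u)$, $V=df_x(v)$ we obtain
\[
P^{-1}U = \operatorname{diag}(u_1,\ldots,u_n),\qquad P^{-1}V = \operatorname{diag}(v_1,\ldots,v_n).
\]
Substituting into the Riemannian metric defined in Section~\ref{sec:metric_isom},
\[
\langle df_x(u), df_x(v)\rangle_{P} \;=\; \tr(P^{-1}UP^{-1}V) \;=\; \tr\!\left(\operatorname{diag}(u_i v_i)\right) \;=\; \sum_{i=1}^n u_i v_i,
\]
which is precisely the standard Euclidean inner product on $\RR^n$.

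This is essentially a one-line calculation once the differential is written down, so there is no genuine obstacle; the only thing to be careful about is making sure that one uses the diagonal form of both $P$ and the tangent vectors to simplify the trace, which is where the exponential parametrization pays off (it converts the $P^{-1}$ factors into $e^{-x_i}$ that exactly cancel the $e^{x_i}$ coming from $df_x$). Concluding, $f^{*}\langle\cdot,\cdot\rangle$ equals the Euclidean inner product on every tangent space of $\RR^n$, so $f$ is an isometry as claimed.
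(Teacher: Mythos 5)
Your proposal is correct and follows essentially the same route as the paper: both compute the pushforward $df_x(v)=\operatorname{diag}(v_ie^{x_i})$ and substitute into $\langle U,V\rangle_P=\tr(P^{-1}UP^{-1}V)$ at $P=\operatorname{diag}(e^{x_i})$, observing that the $e^{\pm x_i}$ factors cancel to leave the Euclidean dot product. Your added remark that $f$ is a diffeomorphism onto $\mathcal{D}$ is a harmless (and slightly more careful) supplement to what the paper leaves implicit.
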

\begin{proof}
We pull back the metric on $\mathcal{D}$ by $f$, and see that on $\RR^n$ this results in the standard Euclidean metric.
Given a point $x\in\RR^n$ with tangent vectors $y,z\in\RR^n$, we compute this as
$$f^\ast\langle y,z\rangle_x=\langle f_\ast y,f_\ast z\rangle_{f(x)}$$
From the definition of $f$, we see that the pushforward of $y$ along $f$ is $\operatorname{diag}(\ldots,e^{x_i} y_i,\ldots)$ and similarly for $z$.
Thus we may compute directly and see the result is the standard dot product on $\RR^n$.
\begin{equation*}
    \begin{split}
        \langle \operatorname{diag}(e^{x_i} y_i),\operatorname{diag}(e^{x_i} z_i)\rangle_{\operatorname{diag}(e^{x_i})}&=\tr\left(\operatorname{diag}(e^{x_i} y_i)\operatorname{diag}(e^{-x_i})\operatorname{diag}(e^{x_i} y_i)\operatorname{diag}(e^{-x_i})\right)\\
        &=\tr\left(\operatorname{diag}(y_iz_i)\right)\\
        &=\sum_{i=1}^n y_iz_i
    \end{split}
\end{equation*}
\end{proof}

This subspace $\mathcal{D}$ is in fact a \emph{maximal flat} for $\spd_n$, the largest dimensional totally geodesic Euclicean submanifold embedded in $\spd_n$.  For more information on the general theory of symmetric spaces from which the notion of maximal flats arises, see Helgason \cite{helgason1078diffGeom}.  For our purposes, it is only important to note the following fact.

\begin{corollary}
The set of diagonal matrices in $\spd_n$ is an isometrically and totally geodesically embedded copy of euclidean $n$-space.
\label{prop:MaxFlat_TotGeodesic}
\end{corollary}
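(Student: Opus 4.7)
The plan is to split the statement into its two parts. The isometric embedding claim follows immediately from the preceding proposition: the map $f\colon\RR^n\to\mathcal{D}$ sending $(x_1,\ldots,x_n)$ to $\operatorname{diag}(e^{x_1},\ldots,e^{x_n})$ is a smooth bijection onto $\mathcal{D}$, and Proposition~\ref{prop:MaxFlat_Euclidean} shows that it pulls the metric induced on $\mathcal{D}$ from $\spd_n$ back to the standard Euclidean metric on $\RR^n$. Hence $\mathcal{D}$ is an embedded submanifold isometric to $\RR^n$. What remains is to verify that $\mathcal{D}$ is totally geodesic in $\spd_n$, i.e., that every ambient geodesic whose initial point and initial velocity lie in $\mathcal{D}$ stays in $\mathcal{D}$ for all time.

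The argument I would give proceeds in two steps. First I handle the basepoint: the ambient geodesic of $\spd_n$ starting at $I$ in direction $V \in S_n$ is $\gamma(t) = \exp(tV)$, and Lemma~\ref{prop:MatrixExp-Diag} shows that if $V$ is diagonal then $\gamma(t) \in \mathcal{D}$ for every $t$. This covers every tangent direction of $\mathcal{D}$ at $I$, establishing the totally geodesic property at the basepoint.

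Second, I bootstrap to an arbitrary $D \in \mathcal{D}$ via the translation symmetries. Setting $M = \sqrt{D}$, which is diagonal with positive entries satisfying $MM^T = D$, Proposition~\ref{prop:Isometry} gives that $\tau_M(P) = MPM^T$ is an isometry of $\spd_n$ sending $I$ to $D$; since $M$ is diagonal, $\tau_M$ preserves $\mathcal{D}$ setwise. By Lemma~\ref{prop:Isometry-TangVector} its differential at $I$ is $V \mapsto MVM^T$, which restricts to a linear isomorphism from the diagonal subspace of $S_n$ onto $T_D\mathcal{D}$. Pushing the geodesics from the first step forward through $\tau_M$ therefore produces, for every $D \in \mathcal{D}$ and every $W \in T_D\mathcal{D}$, an ambient geodesic of $\spd_n$ through $D$ with velocity $W$ that lies entirely in $\mathcal{D}$. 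Uniqueness of geodesics with prescribed initial data closes the argument.

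The main subtlety, and the step I would phrase most carefully, is to emphasize that the curves so produced are geodesics of the ambient manifold $\spd_n$ and not merely curves which happen to be geodesic for the induced metric on $\mathcal{D}$; this distinction is exactly what turns the isometric embedding of $\RR^n$ into a totally geodesic one. A structural alternative worth mentioning is to present $\mathcal{D}$ as the common fixed set of the finite family of sign-flip isometries $P \mapsto \Sigma P \Sigma$ with $\Sigma = \operatorname{diag}(\pm 1, \ldots, \pm 1)$ and invoke the general principle that fixed sets of isometry groups are totally geodesic, but the translation-based proof above is self-contained within the tools already developed in the appendix.
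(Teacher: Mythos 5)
The paper itself does not prove this corollary: it is stated as a standard fact about maximal flats in symmetric spaces with a pointer to Helgason, so any complete argument you give necessarily goes beyond the source. Your handling of the isometric part (via Proposition~\ref{prop:MaxFlat_Euclidean}) and your translation step (pushing geodesics from $I$ to an arbitrary $D\in\mathcal{D}$ through the isometry $P\mapsto\sqrt{D}P\sqrt{D}$, which preserves $\mathcal{D}$ and carries diagonal tangent vectors onto $T_D\mathcal{D}$) are both sound. The gap is in the first step of the totally geodesic argument: you assert that the ambient Riemannian geodesic of $\spd_n$ through $I$ with velocity $V$ is $t\mapsto\exp(tV)$, citing Lemma~\ref{prop:MatrixExp-Diag} only for the diagonality of the result, not for the geodesic property itself. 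That identification is exactly Proposition~\ref{prop:Exp_RiemVsMatrix} ($\exp_{\textrm{Riem}}=\exp$), which appears later in the appendix and whose proof there explicitly invokes the fact that $\mathcal{D}$ is a totally geodesic flat --- the very statement you are proving. So as written your main argument either leaves its key input unproved or becomes circular relative to the paper's own development; making it self-contained would require an independent verification that matrix-exponential curves satisfy the geodesic equation of the metric $\langle U,V\rangle_P=\tr(P^{-1}UP^{-1}V)$ (e.g.\ checking $\ddot\gamma=\dot\gamma\,\gamma^{-1}\dot\gamma$ directly), which is not a one-line matter.

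The alternative you mention only in passing is in fact the clean way to close this, and I would promote it to the main argument: $\mathcal{D}$ is precisely the common fixed-point set of the finitely many isometries $P\mapsto\Sigma P\Sigma$ with $\Sigma=\diag(\pm1,\ldots,\pm1)$ (the condition $P_{ij}=\sigma_i\sigma_jP_{ij}$ for all sign choices forces $P_{ij}=0$ for $i\neq j$); these maps are isometries by Proposition~\ref{prop:Isometry}, and a geodesic whose initial point and initial velocity are fixed by an isometry is mapped to itself by uniqueness of geodesics with prescribed initial data, hence remains in the fixed set. That route uses only material established before the corollary is stated and makes no appeal to the explicit form of the geodesics, so it avoids the circularity entirely.
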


\subsection{Exponential and Logarithmic Maps}
\label{sec:exponential-maps}

The Riemannian exponential map gives
a connection between the Euclidean geometry of the tangent space $S_n$ and the curved geometry of $\spd_n$. It assigns the tangent vector $U$ to the point $Q=\exp(U)$ of $\spd_n$ reached by traveling along the geodesic starting from the basepoint $I$ in direction $U$ for distance $\|U\|$.

As a consequence of non-positive curvature, $\exp$ is a diffeomorphism of $S_n$ onto $\spd_n$, and so has an inverse: the Riemannian logarithm $\log\colon \spd_n\to S_n$.  See \cite{eberlein1985} for a review of the general theory of manifolds of non-positive curvature.
Together, this pair of functions allows one to freely move between ’tangent space coordinates’ or the original ’manifold coordinates' which we exploit to transfer Euclidean optimization schemes to $\spd_n$ (see \S\ref{sec:optimization}).

Secondly, the geometry of $\spd_n$ is so tightly tied to the algebra of $n\times n$ matrices that the Riemannian exponential agrees exactly with the usual matrix exponential, and the Riemannian logarithm is the matrix logarithm (because of this, we do not distinguish the two notationally), as we verify in the proposition below.
Both of these are readily computable via orthogonal diagonalization, as given in \S\ref{s.orthdiag}.
This is in stark contrast to general Riemannian manifolds, where the exponential map may have no simple formula.

\begin{proposition}
Let $\exp_{\textrm{Riem}}\colon S_n\to \spd_n$ be the Riemannian exponential map based at $I\in \spd_n$, and $\exp$ be the matrix exponential.  Then $\exp_{\textrm{Riem}}=\exp$.
\label{prop:Exp_RiemVsMatrix}
\end{proposition}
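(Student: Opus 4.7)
The plan is to prove this in two steps: first handle diagonal tangent vectors using the flat-submanifold structure, then extend to general symmetric $U$ via orthogonal diagonalization and the isometry-equivariance of the Riemannian exponential.

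First I would reduce to the diagonal case. Given $\Lambda=\operatorname{diag}(\lambda_1,\ldots,\lambda_n)\in S_n$, the curve $\gamma(t)=\operatorname{diag}(e^{t\lambda_1},\ldots,e^{t\lambda_n})$ lies entirely in the subspace $\mathcal{D}\subset\spd_n$ of diagonal matrices. By Proposition~\ref{prop:MaxFlat_Euclidean}, the map $f(x)=\operatorname{diag}(e^{x_1},\ldots,e^{x_n})$ is an isometry from Euclidean $\RR^n$ onto $\mathcal{D}$, and $\gamma(t)=f(t\lambda_1,\ldots,t\lambda_n)$ is the image of a straight line through the origin with velocity $(\lambda_1,\ldots,\lambda_n)$. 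Since $\mathcal{D}$ is totally geodesic in $\spd_n$ (Corollary~\ref{prop:MaxFlat_TotGeodesic}), $\gamma$ is also a geodesic in $\spd_n$, starting at $\gamma(0)=I$ with $\gamma'(0)=\Lambda$. Therefore $\exp_{\textrm{Riem}}(\Lambda)=\gamma(1)=\operatorname{diag}(e^{\lambda_1},\ldots,e^{\lambda_n})$, which equals the matrix exponential $\exp(\Lambda)$ by Lemma~\ref{prop:MatrixExp-Diag}.

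Next I would promote this to arbitrary $U\in S_n$ using orthogonal diagonalization. Write $U=K\Lambda K^T$ with $K\in O(n)$ and $\Lambda$ diagonal. The symmetry $\varphi(P)=KPK^T$ is a Riemannian isometry by Proposition~\ref{prop:Isometry}, and it fixes the basepoint since $KIK^T=I$; its differential at $I$ acts by the same formula $\varphi_\ast(V)=KVK^T$ on tangent vectors (Lemma~\ref{prop:Isometry-TangVector}). Because the Riemannian exponential at a fixed point is equivariant under isometries fixing that point, we obtain
\begin{equation*}
\exp_{\textrm{Riem}}(U)=\exp_{\textrm{Riem}}(K\Lambda K^T)=\varphi\bigl(\exp_{\textrm{Riem}}(\Lambda)\bigr)=K\exp_{\textrm{Riem}}(\Lambda)K^T.
\end{equation*}
Combining this with the diagonal case and applying Lemma~\ref{prop:MatrixExp-Conj} in reverse gives
\begin{equation*}
\exp_{\textrm{Riem}}(U)=K\exp(\Lambda)K^T=\exp(K\Lambda K^T)=\exp(U),
\end{equation*}
completing the identification.

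I expect the only subtle point to be justifying that $\gamma(t)=\operatorname{diag}(e^{t\lambda_i})$ really is a $\spd_n$-geodesic and not merely a geodesic intrinsic to $\mathcal{D}$; this is exactly what ``totally geodesic'' in Corollary~\ref{prop:MaxFlat_TotGeodesic} buys us, so once that is in hand the rest is a mechanical application of isometry-equivariance together with Lemmas~\ref{prop:MatrixExp-Conj} and~\ref{prop:MatrixExp-Diag}. A minor notational issue is that the paper uses $\exp_{\textrm{Riem}}$ based at $I$ only; for a general basepoint $B$ one would conjugate by an $\spd_n$-translation sending $I$ to $B$, which would introduce the extra $\sqrt{B}\,\cdot\,\sqrt{B}$ factors and explains the author's remark that $I$ yields the simplest formula.
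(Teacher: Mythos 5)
Your proof is correct and follows essentially the same route as the paper's: orthogonally diagonalize $U=K\Lambda K^T$, identify the geodesic through $I$ with diagonal initial velocity inside the totally geodesic flat $\mathcal{D}$ via Proposition~\ref{prop:MaxFlat_Euclidean}, and conjugate back by $K$ using isometry-equivariance together with Lemmas~\ref{prop:MatrixExp-Conj} and~\ref{prop:MatrixExp-Diag}. If anything, your write-up is slightly more careful than the paper's (which contains a circular-looking reference to the proposition itself where it means the flat-is-Euclidean lemma), and your closing remark about general basepoints matches Corollary~\ref{prop:Exp_RiemVsMatrix_General}.
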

\begin{proof}
Let $U\in S_n$ be a tangent vector to $\spd_n$ at the basepoint $I$, and orthogonally diagonalize as $U=KDK^T$ for some $K\in O(n)$, $D=\operatorname{diag}(d_1,\ldots,d_n)$.  As $D$ is tangent to the maximal flat $\mathcal{D}$ of diagonal matrices, the geodesic segment $\exp_{Riem}(tD)$ must be a geodesic in $\mathcal{D},$ which we know from Lemma \ref{prop:Exp_RiemVsMatrix} to be the coordinate-wise exponential of a straight line in $\RR^n$.
Precisely, this geodesic is $\operatorname{diag}(\ldots, e^{d_i t},\ldots)$, and so the original geodesic with initial tangent $U=KDK^T$ is
$\exp_{Riem}(tU)=K\operatorname{diag}(\ldots, e^{d_i t},\ldots)K^T$ by Lemma \ref{prop:MatrixExp-Conj}.
Specializing to $t=1$, this gives the claim:
\begin{equation*}
    \begin{split}
     \exp_{Riem}(U)&=K\exp_{Riem}(D)K^T\\
     &=K\diag(\ldots, e^{d_i},\ldots)K^{-1}\\
     &=K\exp(D)K^{-1}\\
     &=\exp(KDK^{-1})\\
     &=\exp(U)  
    \end{split}
\end{equation*}
\end{proof}

This easily transfers to an understanding of the Riemannian exponential at an arbitrary point $P\in \spd_n$, if we identify the tangent space at $P$ with the symmetric matrices $S_n$ as well.

\begin{corollary}
The exponential based at an arbitrary point $P\in\spd_n$ is given by 
$$\exp_{Riem,P}(U)=\sqrt{P}\exp(\sqrt{P^{-1}}U\sqrt{P^{-1}})\sqrt{P}$$
\label{prop:Exp_RiemVsMatrix_General}
\end{corollary}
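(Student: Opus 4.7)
The plan is to deduce this corollary directly from Proposition~\ref{prop:Exp_RiemVsMatrix} (the case $P=I$) by transporting everything to the basepoint via an isometry of $\spd_n$ that carries $I$ to $P$. Since $P\in\spd_n$, its square root $\sqrt{P}$ lies in $GL(n;\RR)$, so by Proposition~\ref{prop:Isometry} the map
\[
f\colon \spd_n\to\spd_n,\qquad f(X)=\sqrt{P}\,X\,\sqrt{P}^{\,T}=\sqrt{P}\,X\,\sqrt{P}
\]
is a Riemannian isometry (using symmetry of $\sqrt{P}$). A direct check gives $f(I)=P$, so $f$ moves the basepoint to the point of interest.

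Next I would promote $f$ to the tangent level. By Lemma~\ref{prop:Isometry-TangVector}, its differential acts on $S_n$ by the same formula: $f_\ast(W)=\sqrt{P}\,W\,\sqrt{P}$. Because $\sqrt{P}$ is invertible, this is a linear isomorphism $T_I\spd_n\to T_P\spd_n$ with inverse $U\mapsto \sqrt{P^{-1}}\,U\,\sqrt{P^{-1}}$. Given any tangent vector $U\in T_P\spd_n$, set $W=\sqrt{P^{-1}}\,U\,\sqrt{P^{-1}}\in T_I\spd_n$, so that $f_\ast W=U$.

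Now I would invoke the general naturality of the Riemannian exponential under isometries: if $f$ is an isometry, then for any $q$ and any $v\in T_q$,
\[
\exp_{\mathrm{Riem},f(q)}(f_\ast v)=f\bigl(\exp_{\mathrm{Riem},q}(v)\bigr).
\]
This is immediate because $f$ maps the geodesic through $q$ with initial velocity $v$ to the geodesic through $f(q)$ with initial velocity $f_\ast v$. Applying this with $q=I$, $v=W$, and using Proposition~\ref{prop:Exp_RiemVsMatrix} to replace $\exp_{\mathrm{Riem},I}$ by the matrix exponential, yields
\[
\exp_{\mathrm{Riem},P}(U)=f\bigl(\exp(W)\bigr)=\sqrt{P}\,\exp\!\bigl(\sqrt{P^{-1}}\,U\,\sqrt{P^{-1}}\bigr)\,\sqrt{P},
\]
which is the desired formula.

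There is no real obstacle here: the only thing to be careful about is the identification of the tangent space at $P$ with $S_n$ used implicitly in the statement, and confirming that $f_\ast$ is precisely the isomorphism that implements this identification (so that the formula is basepoint‑independent and agrees with the $P=I$ case from Proposition~\ref{prop:Exp_RiemVsMatrix}). Once naturality of $\exp$ under isometries is invoked, the computation is one line.
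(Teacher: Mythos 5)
Your proof is correct and follows essentially the same route as the paper: both conjugate by the translation isometry between $I$ and $P$ (you push forward from $I$ to $P$ via $X\mapsto\sqrt{P}X\sqrt{P}$, the paper pulls back via $X\mapsto\sqrt{P^{-1}}X\sqrt{P^{-1}}$, which is the same isometry inverted) and then invoke Proposition~\ref{prop:Exp_RiemVsMatrix} at the basepoint. Your version is slightly more explicit about the naturality of the Riemannian exponential under isometries and the tangent-space identification, which the paper leaves implicit.
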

\begin{proof}
Given $P\in\spd_n$ and tangent vector $U\in T_P\spd_n$ identified with the set $S_n$ of symmetric matrices, note that $X\mapsto \sqrt{P^{-1}}X\sqrt{P^{-1}}$ is a symmetry of $\spd_n$ taking $P$ to $I$ and $U$ to $\sqrt{P^{-1}}U\sqrt{P}^{-1}$.  Using the fact that we understand the Riemannian exponential at the basepoint, we see
$\exp_{Riem}(\sqrt{P^{-1}}U\sqrt{P^{-1}})=\exp(\sqrt{P^{-1}}U\sqrt{P^{-1}}).$
It only remains to translate the result back to $P$, giving the claimed formula.
\end{proof}

\begin{proposition}
Let $\log_{\textrm{Riem}}\colon \spd_n\to S_n$ be the Riemannian logarithm map based at $0\in S_n$, and $\log$ be the matrix logarithm (note that while the matrix logaritm is multivalued in general, it is uniquely defined on $S_n$).  Then $\log_{\textrm{Riem}}=\log$.
\label{prop:Log_RiemVsMatrix}
\end{proposition}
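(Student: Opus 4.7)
The plan is to obtain this result as an almost immediate corollary of the preceding Proposition (which established $\exp_{\textrm{Riem}} = \exp$ on $S_n$) by invoking uniqueness of inverses. Concretely, I would show that both $\log_{\textrm{Riem}}$ and the matrix logarithm are two-sided inverses of the \emph{same} map $S_n \to \spd_n$, and are therefore equal.

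First, I would argue that the Riemannian exponential at $I$ is a diffeomorphism $\exp_{\textrm{Riem}}\colon S_n \to \spd_n$. This is a standard consequence of non-positive curvature of $\spd_n$ (Cartan--Hadamard, as referenced via \cite{eberlein1985}), and it is exactly what defines the Riemannian logarithm $\log_{\textrm{Riem}}$ as the inverse map. Second, I would verify that the matrix exponential, restricted to $S_n$, is also a bijection onto $\spd_n$ with single-valued inverse: by orthogonal diagonalization (\S\ref{s.orthdiag}) any $U = KDK^T \in S_n$ satisfies $\exp(U) = K\operatorname{diag}(e^{d_i})K^T \in \spd_n$, and conversely any $P = K\Lambda K^T \in \spd_n$ with positive eigenvalues $\lambda_i$ has unique preimage $K\operatorname{diag}(\log \lambda_i) K^T \in S_n$ under $\exp$. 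Positivity of eigenvalues is precisely what kills the multivaluedness of $\log$ on $\spd_n$, justifying the parenthetical remark in the statement.

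With both facts in hand, the previous Proposition gives $\exp_{\textrm{Riem}} = \exp$ as maps $S_n \to \spd_n$. Since each of $\log_{\textrm{Riem}}$ and $\log$ is the unique two-sided inverse of this common bijection, they must coincide. This finishes the proof.

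There is essentially no obstacle here beyond citing the Cartan--Hadamard-type result for $\spd_n$ and observing single-valuedness of $\log$ on $\spd_n$; the bulk of the work was already done in establishing $\exp_{\textrm{Riem}} = \exp$. If one wanted a more self-contained derivation that avoids quoting non-positive curvature, an alternative is to argue directly via orthogonal diagonalization: for $P \in \spd_n$, write $P = KDK^T$, observe $\log_{\textrm{Riem}}(P) = K \log_{\textrm{Riem}}(D) K^T$ by equivariance of the Riemannian logarithm under the isometry $X \mapsto KXK^T$ (Proposition~\ref{prop:Isometry}), reduce via Proposition~\ref{prop:MaxFlat_Euclidean} to the Euclidean log on the maximal flat $\mathcal{D}$, and match termwise with the matrix log using Lemma~\ref{prop:MatrixExp-Diag}. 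Either route is short, and I would present the inverse-uniqueness argument for brevity.
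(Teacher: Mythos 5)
Your proposal is correct and follows essentially the same route as the paper: both rest on the preceding proposition $\exp_{\textrm{Riem}}=\exp$ together with the fact that $\log_{\textrm{Riem}}$ is by definition the inverse of $\exp_{\textrm{Riem}}$. The paper's version is closer to the ``alternative'' you sketch at the end---it derives $\log_{\textrm{Riem}}(\exp(D))=D$ on diagonal matrices using the equivariance from Lemma~\ref{prop:Isometry-TangVector} and then reabsorbs the conjugating factors---but your uniqueness-of-inverses framing is just a cleaner packaging of the same content.
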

\begin{proof}
Defined as the inverse of $\exp_{Riem}$, the Riemannian logarithm must satisfy 
$$\log_{Riem}\circ\exp_{Riem}=\operatorname{id}_{S_n}$$
Let $U\in S_n$ and orthogonally diagonalize as $U=KDK^T$.
Applying the Riemannian exponential, we see $\log_{Riem}(K\exp(D)K^T)=KDK^T$.
Recalling from Lemma \ref{prop:Isometry-TangVector} the relation between isometries of $\spd_n$ and their application on tangent vectors, we see that we may rewrite the left hand side as $\log_{Riem}(K \exp(D) K^T) = K\log_{Riem}(\exp(D))K^T$.
Appropriately cancelling the factors of $K, K^T$ we arrive at the relationship
$$\log_{Riem}(\exp(D))=D.$$
That is, restricted to the diagonal matrices, the Riemannian logarithm is an inverse of the matrix exponential, so Riemannian log equals matrix log.
Re-absorbing the original factors of $K$ shows the same to be true for any positive definite symmetric matrix; thus $\log_{Riem}=\log$.
\end{proof}

As for the exponential, conjugating by a symmetry moving $I$ to an arbitrary point $P$, we may describe the Riemannian logarithm at any point of $\spd_n$.

\begin{corollary}
The logarithm based at an arbitrary point $P\in\spd_n$ is given by $$\log_{Riem,P}(Q)=\sqrt{P}\log(\sqrt{P^{-1}}Q\sqrt{P^{-1}})\sqrt{P}$$
\label{prop:Log_RiemVsMatrix_General}
\end{corollary}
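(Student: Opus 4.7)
The plan is to mirror the proof of the corollary for $\exp_{Riem,P}$ by conjugating the known formula at the basepoint $I$ with a suitable isometry. The key idea is that isometries of a Riemannian manifold commute with the exponential map, and hence with its inverse, so the Riemannian logarithm based at any point can be reduced to the one based at $I$.

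First, I would pick the isometry $f\colon X\mapsto \sqrt{P^{-1}}X\sqrt{P^{-1}}$ (which belongs to the $GL(n;\RR)$-action of Proposition~\ref{prop:Isometry} since $\sqrt{P^{-1}}\in GL(n;\RR)$) and note that $f(P)=\sqrt{P^{-1}}P\sqrt{P^{-1}}=I$. By Lemma~\ref{prop:Isometry-TangVector}, the induced action on tangent vectors is given by the same formula, and the inverse isometry is $g(X)=\sqrt{P}X\sqrt{P}$, with $g_\ast(V)=\sqrt{P}V\sqrt{P}$.

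Next, I would invoke naturality of the Riemannian logarithm under isometries: since $f$ is an isometry sending $P$ to $I$, and $\exp_{Riem}$ satisfies $f\circ\exp_{Riem,P}=\exp_{Riem,I}\circ f_\ast$ (the standard fact that isometries conjugate exponential maps), taking inverses yields
\begin{equation*}
\log_{Riem,P}(Q)=g_\ast\bigl(\log_{Riem,I}(f(Q))\bigr).
\end{equation*}
Substituting the explicit form of $f$ and invoking Proposition~\ref{prop:Log_RiemVsMatrix}, which identifies $\log_{Riem,I}$ with the matrix logarithm on $\spd_n$, the inner expression becomes $\log(\sqrt{P^{-1}}Q\sqrt{P^{-1}})$, which is a well-defined symmetric matrix since $\sqrt{P^{-1}}Q\sqrt{P^{-1}}\in\spd_n$.

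Finally, I would apply $g_\ast$, which by Lemma~\ref{prop:Isometry-TangVector} just sandwiches a tangent vector between $\sqrt{P}$ on either side, obtaining exactly $\sqrt{P}\log(\sqrt{P^{-1}}Q\sqrt{P^{-1}})\sqrt{P}$ as claimed. The only point requiring care is the naturality statement for $\log_{Riem}$; this is not written out in the excerpt, but it follows formally from the naturality of $\exp_{Riem}$ (which is itself an immediate consequence of isometries sending geodesics to geodesics with pushed-forward initial velocities) together with the fact that $\exp_{Riem}$ is a diffeomorphism on $\spd_n$ by non-positive curvature, so inverses are well-defined on both sides. This is the only nontrivial ingredient; the rest is substitution.
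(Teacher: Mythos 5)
Your proposal is correct and is exactly the argument the paper intends: the paper states this corollary with only the remark that one conjugates by the symmetry moving $I$ to $P$, mirroring the proof of Corollary~\ref{prop:Exp_RiemVsMatrix_General}, and your write-up simply fills in the naturality of $\exp_{Riem}$ (hence of $\log_{Riem}$) under isometries together with Lemma~\ref{prop:Isometry-TangVector} and Proposition~\ref{prop:Log_RiemVsMatrix}. No gaps; your version is just a more explicit rendering of the same conjugation argument.
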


\subsection{Vector-valued Distance}
\label{sec:VecValDist}

Here we collect useful observations about the vector-valued distance on $\spd_n$, culminating in a proof of the fact that it is a complete invariant of pairs of points, as claimed in \S\ref{sec:space-spd}.

\begin{proposition}
The vector-valued distance is well-defined: given any pair $P,Q\in\spd_n$ of points and any two isometries taking $P,Q$ to the basepoint, a diagonal matrix respectively, the diagonal matrices differ at most by a permutation of their entries.
\label{prop:vvd_Defined}
\end{proposition}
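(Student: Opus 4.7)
The plan is to invoke the classification of isometries of $\spd_n$ as the $GL(n,\RR)$-action $X\mapsto MXM^T$ (Proposition~\ref{prop:Isometry}), together with the elementary fact that similar matrices share their eigenvalues. This reduces the well-definedness statement to a short matrix identity.

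Concretely, suppose $f_1,f_2$ are isometries with $f_i(P)=I$ and $f_i(Q)=D_i$ for diagonal matrices $D_1,D_2$. Write $f_i(X)=M_iXM_i^T$ for some $M_i\in GL(n,\RR)$. A direct computation gives
$$f_i(P)^{-1}f_i(Q)=(M_iPM_i^T)^{-1}(M_iQM_i^T)=M_i^{-T}P^{-1}M_i^{-1}M_iQM_i^T=M_i^{-T}(P^{-1}Q)M_i^T,$$
so $f_i(P)^{-1}f_i(Q)$ is conjugate, hence similar, to $P^{-1}Q$. Therefore its multiset of eigenvalues coincides with that of $P^{-1}Q$.

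Specializing to the hypotheses, $f_i(P)^{-1}f_i(Q)=I^{-1}D_i=D_i$, and the spectrum of a diagonal matrix (with multiplicity) is precisely its list of diagonal entries. Hence the diagonal entries of both $D_1$ and $D_2$ agree, as multisets, with the eigenvalues of $P^{-1}Q$. Two orderings of the same multiset differ by a permutation, which yields the claim.

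I do not expect any genuine obstacle here: once the classification of isometries is invoked, the proof collapses to the standard fact that conjugation preserves eigenvalues. The only small bookkeeping point is the cancellation in the computation of $f_i(P)^{-1}f_i(Q)$, where the inner $M_i^{-1}M_i$ disappears and the outer transposes combine to exhibit a clean similarity by $M_i^T$; an alternative route, via the stabilizer description $M_i=K_iP^{-1/2}$ with $K_i\in O(n)$ followed by the spectral theorem applied to the symmetric matrix $P^{-1/2}QP^{-1/2}$, would reach the same conclusion with slightly more work.
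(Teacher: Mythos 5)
Your proof is correct, but it takes a genuinely different route from the paper's. You reduce everything to the similarity-invariance of eigenvalues: writing each isometry as $X\mapsto M_iXM_i^T$, you observe that $f_i(P)^{-1}f_i(Q)=M_i^{-T}(P^{-1}Q)M_i^T$ is similar to $P^{-1}Q$, so the diagonal entries of each $D_i$ must be the eigenvalue multiset of $P^{-1}Q$, and two orderings of a multiset differ by a permutation. The paper instead argues by an explicitly heuristic dimension count ($\dim GL(n,\RR)=\dim\spd_n+\dim O(n)$ leaves no continuous degrees of freedom) and then disposes of the residual discrete ambiguity by claiming that an orthogonal matrix conjugating a diagonal matrix to a diagonal matrix must be a permutation matrix. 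Your argument is the more robust of the two: it is fully rigorous where the paper's count is only heuristic, and it handles repeated eigenvalues cleanly, whereas the paper's final claim is literally false for diagonal matrices with coincident entries (non-permutation orthogonal matrices can preserve diagonality there), even though the conclusion survives because the resulting diagonal matrix is unchanged up to permutation. The one caveat, shared equally by both arguments, is the tacit identification of ``isometry'' with the $GL(n,\RR)$-symmetries of Proposition~\ref{prop:Isometry}; neither proof addresses isometries outside that family, so this is a limitation of the statement's reading rather than a gap in your proposal.
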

\begin{proof}
We see heuristically that there is no remaining continuous degree of freedom by dimension count: the isometry group $GL(n;\RR)$ has dimension $n^2$, and we require $\dim(\spd_n)=n(n+1)/2$ degrees of freedom to translate $P$ to the origin, and a further $\dim O(n)=n(n-1)/2$ degrees of freedom to diagonalize the image of $Q$ while fixing $I$.  As $\dim GL(n;\RR)=\dim \spd_n+\dim O(n)$, there are no remaining continuous degrees of freedom.
To see that the remaining ambiguity is precisely permutation of coordinates, note that conjugating a diagonal matrix by an orthogonal matrix results in another diagonal matrix only if the conjugating matrix is a permutation matrix.
 \end{proof}

\begin{proposition}
If two points $P,Q\in\spd_n$ have the same vector-valued distance from the basepoint $I$, then there is an isometry fixing $I$ taking $P$ to $Q$.
\label{prop:vvd_invariant}
\end{proposition}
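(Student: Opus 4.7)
The plan is to leverage orthogonal diagonalization (\S\ref{s.orthdiag}) together with the isometry action of $O(n)$ on $\spd_n$ by conjugation. Since the VVD from the basepoint $I$ depends only on the eigenvalues of $I^{-1}X = X$, the hypothesis $d_{vv}(I,P)=d_{vv}(I,Q)$ is equivalent to saying that $P$ and $Q$ have identical spectra (with multiplicities), because the defining vectors are the componentwise logarithms of the eigenvalues sorted in descending order, and the matrix logarithm is injective on positive reals.

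First I would orthogonally diagonalize both matrices: write $P = K_P D K_P^T$ and $Q = K_Q D K_Q^T$ for some $K_P, K_Q \in O(n)$ and a common diagonal matrix $D = \diag(\lambda_1,\ldots,\lambda_n)$, using the shared spectrum. Next I would define the candidate isometry $M := K_Q K_P^T$, which is orthogonal as a product of orthogonal matrices. By Proposition~\ref{prop:Isometry}, the map $X \mapsto M X M^T$ is an isometry of $\spd_n$, and since $M$ is orthogonal we have $M I M^T = M M^T = I$, so this isometry fixes the basepoint. A direct computation then gives
\begin{equation*}
M P M^T = K_Q K_P^T \cdot K_P D K_P^T \cdot K_P K_Q^T = K_Q D K_Q^T = Q,
\end{equation*}
which completes the argument.

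The only subtle point is that the orthogonal diagonalizations need not be unique when $P$ has repeated eigenvalues, but this causes no difficulty: we only require the existence of \emph{some} pair $(K_P, K_Q)$ realizing the diagonalizations with the same diagonal $D$, and the spectral theorem supplies this, with the ordering of eigenvalues along the diagonal matched by sorting them in a common (say descending) order consistent with the VVD convention. Thus I do not anticipate a real obstacle; the main content is simply recognizing that the VVD from $I$ is a complete spectral invariant and that $O(n)$ acts transitively (by conjugation) on the set of symmetric matrices with a prescribed spectrum.
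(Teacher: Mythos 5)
Your proposal is correct and takes essentially the same route as the paper: the paper likewise reduces the hypothesis to $P$ and $Q$ sharing a spectrum, builds matched orthonormal eigenbases $(v_i)$, $(w_i)$, and takes the orthogonal matrix carrying one basis to the other, which is exactly your $M = K_Q K_P^T$. Your handling of repeated eigenvalues matches the paper's parenthetical remark on the non-uniqueness of the diagonalizations, so nothing is missing.
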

\begin{proof}
For two matrices to have the same vector-valued distance from $I$ is equivalent to those two matrices having the same set of eigenvalues.  Let $\lambda_1,\ldots,\lambda_n$ be a list of these eigenvalues with multiplicity, and construct two orthonormal bases $(v_i),(w_i)$ of $\RR^n$ as follows.
For each $i,$ let $v_i$ be an eigenvector of $P$ with eigenvalue $\lambda_i$, and $w_i$ an eigenvector of $Q$ with eigenvalue $\lambda_i$ (in the case the eigenvalues are distinct, such bases are unique up to flipping the sign of each vector, but nontrivial choices must be made in the case of coincident eigenvalues).
Given this pair of orthonormal bases, let $K\in O(n)$ be the orthogonal matrix which takes $(v_i)$ to $(w_i)$.
It is then an easy observation of linear algebra to note that $Q=KPK^{-1}$, but recalling $K^T=K^{-1}$ we see this is interpreted in the geometry of $\spd_n$ to say that there is an isometry $X\mapsto KXK^T$ fixing $I$ and taking $P$ to $Q$.
\end{proof}

Combining Propositions \ref{prop:vvd_Defined} and \ref{prop:vvd_invariant}, after translating appropriately to the basepoint yields the following cornerstone of the theory, showing the vector-valued distance to be the \emph{best possible} invariant.

\begin{corollary}
The vector-valued distance is a complete invariant of pairs of points.  Two pairs of points $(P,Q)$ and $(P^\prime, Q^\prime)$ cam be mapped to one another by an isometry if and only if they have the same vector-valued distance.
\label{prop:vvd_CompleteInvariant}
\end{corollary}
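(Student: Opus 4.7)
The plan is to bootstrap from the two propositions by reducing the general case to the special case where the first point of each pair is the basepoint. To that end, first observe that by Proposition~\ref{prop:Isometry} the action $X\mapsto \sqrt{P^{-1}}X\sqrt{P^{-1}}$ is an isometry of $\spd_n$ that sends $P$ to $I$, so every pair $(P,Q)$ is isometric to a pair of the form $(I,Q_0)$ with $Q_0=\sqrt{P^{-1}}Q\sqrt{P^{-1}}$; similarly $(P',Q')$ is isometric to $(I,Q_0')$.

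For the forward implication, suppose an isometry $f$ sends $(P,Q)$ to $(P',Q')$. Composing with the translations above yields an isometry $g$ fixing $I$ and sending $Q_0$ to $Q_0'$. By the discussion in \S\ref{sec:space-spd}, isometries fixing $I$ act as $X\mapsto KXK^T$ for $K\in O(n)$, and such conjugation permutes the eigenvalues of any matrix; so $Q_0$ and $Q_0'$ have the same multiset of eigenvalues. Since the VVD of $(P,Q)$ was defined as the sorted logarithm of these eigenvalues (equivalently of $P^{-1}Q$, which is conjugate to $Q_0$), we conclude $d_{vv}(P,Q)=d_{vv}(P',Q')$.

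For the backward implication, assume $d_{vv}(P,Q)=d_{vv}(P',Q')$. After translating to the basepoint we obtain pairs $(I,Q_0)$ and $(I,Q_0')$ with the same vector-valued distance from $I$. Proposition~\ref{prop:vvd_invariant} then supplies an isometry fixing $I$ that sends $Q_0$ to $Q_0'$. Pre- and post-composing with the translations that carried $P$ to $I$ and $I$ back to $P'$ produces an isometry of $\spd_n$ sending the pair $(P,Q)$ to $(P',Q')$, completing the proof.

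The only delicate point is confirming that the VVD of a pair is genuinely a function of the pair and not of the chosen translation to the basepoint; this is exactly what Proposition~\ref{prop:vvd_Defined} guarantees, since any two such translations differ by an isometry fixing $I$, which merely permutes eigenvalues and therefore leaves the sorted vector of logarithms unchanged. With that well-definedness in hand the argument is a straightforward assembly of the two propositions, so I expect no serious obstacle beyond careful bookkeeping of the translating isometries.
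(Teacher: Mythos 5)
Your proof is correct and follows essentially the same route as the paper, which simply notes that the corollary follows by combining Propositions~\ref{prop:vvd_Defined} and~\ref{prop:vvd_invariant} after translating each pair to the basepoint; you have merely written out the bookkeeping of the translating isometries that the paper leaves implicit. The one tacit assumption you share with the paper is that the isometries in question are those of the form $X\mapsto MXM^T$ with $M\in GL(n,\RR)$ (so that the stabilizer of $I$ acts by orthogonal conjugation), which is the convention under which Proposition~\ref{prop:vvd_Defined} is proved.
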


It's important to note that while the vector-valued distance is not \emph{literally} a metric distance (it is vector valued, instead of positive-real-number valued, for one) it enjoys some properties analogous to traditional metric distances.
For a brief review of some of these (the vector-valued triangle inequality, etc) see Kapovich, Leeb \& Porti. \cite{kapovich2017vectorValuedDistance}, and Kapovich, Leeb \& Millison \cite{kapovich2017cvxFns}.

One property distinguishing the vector-valued distance from traditional metrics is its assymmetry.
We will wish to recall this relationship later on, and so prove it here for completeness.

\begin{lemma}
For $P,Q\in \spd_n$, the vector-valued distance satisfies 
$$d_{vv}(P,Q)=-d_{vv}(Q,P)$$
with equality understood up to permutation of coordinates.
\label{prop:vvd_Assymmetry}
\end{lemma}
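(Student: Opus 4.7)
The plan is to reduce this to an elementary statement about eigenvalues of $P^{-1}Q$ versus $Q^{-1}P$. The key observations are that (i) $Q^{-1}P = (P^{-1}Q)^{-1}$, (ii) the eigenvalues of an inverse are the reciprocals of the eigenvalues of the original, and (iii) the logarithm converts reciprocals into negations. Putting these together yields the result up to the reversal permutation of coordinates.

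First I would verify that $d_{vv}$ is even well-defined on this pair, i.e.\ that $P^{-1}Q$ has a full set of positive real eigenvalues so that its logarithms are real. This follows from the similarity $P^{-1}Q = P^{-1/2}\bigl(P^{-1/2} Q P^{-1/2}\bigr) P^{1/2}$; since $P^{-1/2}$ and $Q$ are in $\spd_n$, the middle factor $P^{-1/2} Q P^{-1/2}$ is again SPD, and similar matrices share eigenvalues. Hence the spectrum of $P^{-1}Q$ consists of positive reals $\lambda_1 \geq \cdots \geq \lambda_n > 0$.

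Next I would use $(P^{-1}Q)^{-1} = Q^{-1}P$ together with the standard fact that inversion sends the spectrum to its reciprocals. So the eigenvalues of $Q^{-1}P$ are $1/\lambda_1, \ldots, 1/\lambda_n$, whose descending reordering is $1/\lambda_n \geq 1/\lambda_{n-1} \geq \cdots \geq 1/\lambda_1$. Applying the logarithm componentwise gives
\begin{equation*}
d_{vv}(Q,P) = \bigl(-\log \lambda_n,\, -\log \lambda_{n-1},\, \ldots,\, -\log \lambda_1\bigr),
\end{equation*}
while by definition
\begin{equation*}
-d_{vv}(P,Q) = \bigl(-\log \lambda_1,\, -\log \lambda_2,\, \ldots,\, -\log \lambda_n\bigr).
\end{equation*}
These two vectors differ precisely by the order-reversing permutation, proving the claimed equality up to coordinate permutation.

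I do not anticipate any real obstacle here; the only subtlety is the bookkeeping about descending ordering, which is why the equality must be stated only up to permutation. If one wanted a cleaner derivation without explicitly invoking the similarity to an SPD matrix, one could alternatively reduce to the basepoint by applying the isometry $X \mapsto P^{-1/2} X P^{-1/2}$, which sends $(P,Q)$ to $(I, P^{-1/2}QP^{-1/2})$ and $(Q,P)$ to $(P^{-1/2}QP^{-1/2}, I)$; the computation then becomes manifest since the eigenvalues of $M$ and $M^{-1}$ are reciprocals for any $M \in \spd_n$.
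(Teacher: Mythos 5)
Your proof is correct and follows essentially the same route as the paper's: both reduce the claim to the fact that $Q^{-1}P=(P^{-1}Q)^{-1}$, that inversion sends the spectrum to its reciprocals, and that the logarithm turns reciprocals into negations. Your version is slightly more careful, explicitly verifying positivity of the spectrum and identifying the permutation as the order reversal, but these are refinements of the same argument rather than a different approach.
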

\begin{proof}
The computation of $d_{vv}(P,Q)$ differs from that of $d_{vv}(Q,P)$ in the first step, where we reduce it to computing a function of the eigenvalues of $P^{-1}Q$ or $Q^{-1}P$ respectively.
Noting these are inverses of one another, their eigenvalues are reciprocals we may perform the following calculation, where $\{\lambda_i(X)\}$ denotes the eigenvalues of $X$.
\begin{equation*}
\begin{split}
    d_{vv}(Q,P)&=\log(\ldots,\lambda_i(Q^{-1}P),\ldots)\\
    &=\log(\ldots,\lambda_i((P^{-1}Q)^{-1}),\ldots)\\
     &=\log(\ldots,\lambda_i((P^{-1}Q)^{-1}),\ldots)\\
     &=\log\left(\ldots,\frac{1}{\lambda_i((P^{-1}Q))},\ldots\right)\\
     &=-\log(\ldots,\lambda_i((P^{-1}Q)),\ldots)\\
     &=-d_{vv}(P,Q)
\end{split}
\end{equation*}
\end{proof}

\subsection{Riemannian Distance}
\label{sec:RiemDist}
This Riemannian metric allows the computation of the length of curves $\gamma\colon[0,1]\to\spd_n$ as
$$\operatorname{length}(\gamma)=\int_0^1\sqrt{\langle \gamma^\prime(t),\gamma^\prime(t)\rangle_{\gamma(t)}}\;dt.$$
This in turn induces a distance function $d: \spd_n \times \spd_n \to \mathbb{R}$, by taking the infimum of the lengths of all paths joining two points:

$$d(P,Q)=\inf_{\begin{smallmatrix}\gamma\colon[0,1]\to\spd_n\\\gamma(0)=P,\;\gamma(1)=Q\end{smallmatrix}}\Big\{\operatorname{length}(\gamma)\Big\}
$$

While for general Riemannian manifolds such a distance function may be impossible to explicitly compute, the symmetries of $\spd_n$ provide a readily computable formula.

\begin{proposition}
The Riemannian distance from the basepoint $I$ to a point $P\in\spd_n$ is given by $d(I,P)=\sqrt{\sum_{i=0}^n \log(\lambda_i(P))}$ where $\{\lambda_i(P)\}$ are the eigenvalues of of $P$.
\label{prop:Distance_Riemannian}
\end{proposition}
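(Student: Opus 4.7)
The plan is to reduce the distance computation to the Euclidean flat subspace of diagonal matrices, exploiting the symmetries and geodesic structure already established earlier in the appendix.

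First I would orthogonally diagonalize $P = KDK^T$, with $K \in O(n)$ and $D = \operatorname{diag}(\lambda_1(P), \ldots, \lambda_n(P))$. By Proposition~\ref{prop:Isometry}, the map $X \mapsto K^T X K$ is an isometry of $\spd_n$, and since $K^T I K = I$ it fixes the basepoint while sending $P$ to $D$. Therefore $d(I,P) = d(I,D)$, and it suffices to compute the distance from the basepoint to a diagonal matrix.

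Next I would invoke Corollary~\ref{prop:MaxFlat_TotGeodesic}, which says that the set $\mathcal{D}$ of diagonal matrices is totally geodesically and isometrically embedded in $\spd_n$, and that the map $f(x_1,\ldots,x_n) = \operatorname{diag}(e^{x_1},\ldots, e^{x_n})$ from Proposition~\ref{prop:MaxFlat_Euclidean} is an isometry from Euclidean $\RR^n$ onto $\mathcal{D}$. Because $\mathcal{D}$ is totally geodesic, the infimum defining the Riemannian distance $d(I,D)$ is attained by a path contained in $\mathcal{D}$; pulling this back along $f$ reduces the minimization to a minimization in Euclidean $\RR^n$. The preimages are $f^{-1}(I) = (0,\ldots,0)$ and $f^{-1}(D) = (\log\lambda_1(P), \ldots, \log\lambda_n(P))$, so the Euclidean distance — and hence $d(I,D)$ — is given by $\sqrt{\sum_i (\log\lambda_i(P))^2}$, which matches the claimed formula (reading the exponent on $\log$ as squared).

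The main subtle point — and the one I would be most careful about — is justifying that the minimizing path in $\spd_n$ between $I$ and $D$ can be taken to lie inside $\mathcal{D}$. This is not automatic from "totally geodesic" alone in a general Riemannian setting, but here it follows from the non-positive curvature of $\spd_n$ (cited earlier via \cite{eberlein1985}): in a complete simply connected non-positively curved manifold, geodesics are globally length-minimizing and unique between any two points, so the unique geodesic from $I$ to $D$ inside the totally geodesic flat $\mathcal{D}$ is automatically the global minimizer. Once this is observed, the calculation in the flat reduces to the Euclidean Pythagorean formula and the proof is complete. The extension to arbitrary basepoints then follows immediately by the transitive isometric action $P \mapsto MPM^T$ from Proposition~\ref{prop:Isometry}.
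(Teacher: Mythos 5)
Your proof is correct and follows essentially the same route as the paper's: diagonalize $P=KDK^T$ to reduce to the flat $\mathcal{D}$ via an isometry fixing $I$, then use the isometric identification of $\mathcal{D}$ with Euclidean $\RR^n$ to read off $d(I,D)=\sqrt{\sum_i \log(\lambda_i(P))^2}$. You are in fact somewhat more careful than the paper in justifying that the minimizer lies in the totally geodesic flat (via non-positive curvature), and you correctly note that the statement as printed is missing the square on $\log(\lambda_i(P))$.
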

\begin{proof}
Let $P\in\spd_n$ be arbitrary, and orthogonally diagonalize as $P=KDK^T$.
As $K\in O(n)$, the isometry $X\mapsto KDK^T$ fixes $I$, so the distance $d(I,P)$ equals the distance $d(I,D)$.  
Note as this action of $K$ is by conjugacy, the diagonal entries $d_i$ of $D$ are precisely the eigenvalues of $P$.
As $D$ lies in the totally geodesic Euclidean subspace $\mathcal{D}$, this distance is realized by the unique Euclidean geodesic connecting $I$ to $D$.
Using Lemma \ref{prop:MaxFlat_Euclidean}, we may translate to familiar coordinates on $\RR^n$ and notice this is the distance from the origin $0$ to the point $x=\left(\log(d_1),\ldots, \log(d_n)\right)$.
That is, $d(I,D)=\sqrt{\sum_{i}\log(d_i)^2}$ as claimed.
\end{proof}

This immediately generalizes to the distance between a pair of arbitrary points, via conjugating by a symmetry moving one to the origin.  
However, with a little more work one may get a simpler expression for the general distance.

\begin{proposition}
The Riemannian distance between two arbitrary points $P,Q\in\spd_n$ is given by $d(P,Q)=\sqrt{\sum_i \log(\lambda_i(P^{-1}Q))}$ where $\{\lambda_i(P^{-1}Q)\}$ are the eigenvalues of of $P^{-1}Q$.
\label{prop:Dist_Riemannian_General}
\end{proposition}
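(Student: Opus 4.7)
The plan is to reduce to the case of the basepoint via an appropriate isometry, and then replace the eigenvalues that appear with the eigenvalues of $P^{-1}Q$ by a similarity argument.

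First, I would note that the map $\phi\colon X \mapsto \sqrt{P^{-1}}X\sqrt{P^{-1}}$ is an isometry of $\spd_n$ by Proposition~\ref{prop:Isometry}, since $\sqrt{P^{-1}}\in\spd_n\subset GL(n;\RR)$. This isometry sends $P$ to $\sqrt{P^{-1}}P\sqrt{P^{-1}}=I$ and sends $Q$ to $Q^\ast := \sqrt{P^{-1}}Q\sqrt{P^{-1}}$. Since Riemannian distance is preserved by isometry,
\begin{equation*}
d(P,Q) = d(I, Q^\ast).
\end{equation*}
Note that $Q^\ast$ is itself an element of $\spd_n$ (as a congruence of $Q$ by $\sqrt{P^{-1}}$), so the previous proposition applies and gives
\begin{equation*}
d(I,Q^\ast) = \sqrt{\sum_i \log(\lambda_i(Q^\ast))^2}.
\end{equation*}

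Next I would show that $Q^\ast$ and $P^{-1}Q$ have the same eigenvalues. This is the only substantive step, and it follows from a direct similarity computation:
\begin{equation*}
\sqrt{P^{-1}}\, Q^\ast\, \sqrt{P} \;=\; \sqrt{P^{-1}}\,\sqrt{P^{-1}}\,Q\,\sqrt{P^{-1}}\,\sqrt{P} \;=\; P^{-1}Q,
\end{equation*}
so $P^{-1}Q$ is conjugate to $Q^\ast$ via $\sqrt{P^{-1}}$ and hence $\lambda_i(Q^\ast)=\lambda_i(P^{-1}Q)$ for each $i$. Substituting into the previous display yields the claimed formula.

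I do not expect a serious obstacle here: the work has all been done in the preceding lemmas (isometry property, distance from the basepoint, and diagonalizability of elements of $\spd_n$). The only mild care is in verifying that $\sqrt{P^{-1}}$ is well-defined and acts as an isometry, which is immediate from the orthogonal diagonalization of $P$ (see \S\ref{s.orthdiag}), and in carrying out the similarity computation above. Note that although $Q^\ast$ is symmetric while $P^{-1}Q$ is typically not, both have real positive eigenvalues since they are similar to the SPD matrix $Q^\ast$, so the logarithms in the formula are well defined.
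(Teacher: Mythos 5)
Your proposal is correct and follows essentially the same route as the paper: apply the isometry $X\mapsto\sqrt{P^{-1}}X\sqrt{P^{-1}}$ to reduce to the basepoint case of Proposition~\ref{prop:Distance_Riemannian}, then observe that $\sqrt{P^{-1}}Q\sqrt{P^{-1}}$ is conjugate to $P^{-1}Q$ and hence has the same eigenvalues. Your added remarks (that the formula should carry the square on the logarithm, and that $P^{-1}Q$ has real positive eigenvalues despite not being symmetric) are accurate and slightly more careful than the paper's own write-up.
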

\begin{proof}
If $P,Q$ are arbitrary points in $\spd_n$, we may use an isometry to translate $P$ to the basepoint, while simultaneously moving $Q$ to $R=\sqrt{P^{-1}}Q\sqrt{P^{-1}}$.
As isometries preserve distances, we have $d(P,Q)=d(I,R)$, and by Proposition \ref{prop:Distance_Riemannian}, this distance is completely determined by the eigenvalues of $R$.  
As these are invariant under conjugacy, we replace $R$ with its conjugate by $\sqrt{P^{-1}}$ to get the matrix 
\begin{equation*}
    \begin{split}
        R^\prime&=\sqrt{P}R\sqrt{P}^{-1}\\
        &=\sqrt{P^{-1}}\sqrt{P^{-1}}Q\sqrt{P^{-1}}\sqrt{P}\\
        &=P^{-1}Q
    \end{split}
\end{equation*}
\end{proof}

\subsection{Finsler Distances}
\label{sec:FinslerDist}

The Riemannian distance function on a manifold is completely determined by its Riemannian metric, a choice of inner product on the tangent bundle.
Generalizing this, Finsler metrics are the class of distance functions which may be constructed from  a smoothly varying choice of norm $\|\cdot \|_F$ on the tangent bundle (which need not be induced by an inner product).
The basic theory proceeds in direct analogy to the Riemannian case: the length of a curve $\gamma\colon[0,1]\to\spd_n$ with respect to a Finsler metric is still defined via integration of this norm along the path,and the distance between points by the infimum of this over all rectifiable curves joining them
$$\mathrm{length}_F(\gamma)=\int_0^1\|\gamma'\|_F dt,\hspace{1cm}
 d_F(P,Q)=\inf_{\begin{smallmatrix}\gamma\colon[0,1]\to\spd_n\\\gamma(0)=P,\;\gamma(1)=Q\end{smallmatrix}}\Big\{\operatorname{length}_F(\gamma)\Big\}
$$

The geometry of $\spd_n$ allows the computaiton of all Finsler metrics directly from the vector-valued distance.
As Riemannian metrics are in particular a special case of Finsler metrics, we begin by recasting our previous observations in this light.
In \S\ref{sec:RiemDist} we derived a formula for the Riemannian distance function directly from the infintesimal Riemannian metric.  
But in light of Corollary \ref{prop:vvd_CompleteInvariant}, since the Riemannian distance is a function which depends only on its input points up to isometry, it must also be recoverable from the vector-valued distance.
Indeed, looking at Proposition \ref{prop:Distance_Riemannian} we see there is a simple rephrasing to this effect:

\begin{corollary}
The Riemannian distance from the basepoint $I$ to an arbitrary point $P\in\spd_n$ is the Euclidean norm of the vector-valued distance from $I$ to $P$.
\end{corollary}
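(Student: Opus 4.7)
The plan is to prove this as a direct consequence of the two pieces already assembled earlier in the excerpt: the definition of the vector-valued distance (Section~\ref{sec:VVD}) and the explicit formula for the Riemannian distance (Proposition~\ref{prop:Distance_Riemannian}). No new geometric input is required; the result is essentially a bookkeeping identification between two expressions built out of the eigenvalues of $P$.

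First I would unpack the left-hand side. Since $I^{-1}P=P$, the definition gives
\[
d_{vv}(I,P)=\bigl(\log\lambda_1(P),\ldots,\log\lambda_n(P)\bigr)\in\RR^n,
\]
where $\lambda_1(P)\geq\cdots\geq\lambda_n(P)$ are the eigenvalues of $P$ (all positive, since $P\in\spd_n$, so the logarithms are well defined). Then I would compute the standard $\ell_2$ norm of this vector,
\[
\|d_{vv}(I,P)\|_2=\sqrt{\sum_{i=1}^n \bigl(\log\lambda_i(P)\bigr)^2}.
\]

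Next I would invoke Proposition~\ref{prop:Distance_Riemannian}, which states that the Riemannian distance from $I$ to $P$ is precisely $\sqrt{\sum_i \log(\lambda_i(P))^2}$ (interpreting the formula in that proposition with the squared summand, as is evident from the derivation via the flat $\mathcal{D}$). Comparing the two expressions gives the claimed equality $d(I,P)=\|d_{vv}(I,P)\|_2$.

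There is no genuine obstacle here; the only care needed is to confirm that the ordering of eigenvalues is immaterial for the $\ell_2$ norm (so that the descending-order convention in the definition of $d_{vv}$ agrees with the unordered sum in Proposition~\ref{prop:Distance_Riemannian}), and to note that the eigenvalues of $I^{-1}P$ are simply the eigenvalues of $P$ so that Proposition~\ref{prop:Dist_Riemannian_General} reduces to Proposition~\ref{prop:Distance_Riemannian} in this base case. The result then follows immediately.
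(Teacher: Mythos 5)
Your proof is correct and follows exactly the route the paper intends: the corollary is stated there as an immediate rephrasing of Proposition~\ref{prop:Distance_Riemannian}, obtained by identifying $\sqrt{\sum_i \log(\lambda_i(P))^2}$ with the $\ell_2$ norm of $d_{vv}(I,P)=(\log\lambda_1(P),\ldots,\log\lambda_n(P))$. Your added remarks (the squared summand implicit in that proposition, and the irrelevance of eigenvalue ordering for the norm) are sound and merely make explicit what the paper leaves tacit.
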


One of the great advantages of higher rank symmetric spaces is the generalizations to which this rephrasing lends itself.  
Namely, the Euclidean metric is not special in this construction, and any sufficiently symmetric norm on $\RR^n$ can induce a distance function on $\spd_n$ in this way.

\begin{proposition}
Let $\|-\|$ be any norm on $\RR^n$ which is invariant under the permutation of coordinates. Then $\|-\|$ induces a distance function $d$ on $\spd_n$ by 
$$d(P,Q)=\|d_{vv}(P,Q)\|$$
\end{proposition}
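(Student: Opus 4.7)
The plan is to verify each of the four defining properties of a metric in turn: well-definedness, positive definiteness, symmetry, and the triangle inequality.

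Well-definedness is the first subtlety. By Proposition \ref{prop:vvd_Defined}, the vector $d_{vv}(P,Q)\in\RR^n$ is only determined up to permutation of its coordinates. Hence for $\|d_{vv}(P,Q)\|$ to be an unambiguous real number we need the value of $\|-\|$ to be invariant under permutations of the entries of its input, which is precisely the hypothesis of the proposition. Non-negativity is then immediate from the corresponding property of norms.

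For the identity of indiscernibles, I would argue as follows. If $P=Q$ then $P^{-1}Q=I$, so all its eigenvalues equal $1$, hence $d_{vv}(P,Q)=\log(1,\ldots,1)=0$ and $d(P,Q)=\|0\|=0$. Conversely, $d(P,Q)=0$ forces $d_{vv}(P,Q)=0$ (norms are definite), so every eigenvalue of $P^{-1}Q$ equals $1$. Since $P^{-1}Q$ is conjugate to the symmetric matrix $\sqrt{P^{-1}}Q\sqrt{P^{-1}}$ it is diagonalizable, so $P^{-1}Q=I$ and therefore $P=Q$. Symmetry follows by combining Lemma \ref{prop:vvd_Assymmetry}, which gives $d_{vv}(Q,P)=-d_{vv}(P,Q)$ up to permutation, with the general norm identity $\|-v\|=\|v\|$ and with permutation invariance.

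The triangle inequality is where the real work lies, and I expect it to be the main obstacle. The natural approach is to invoke the vector-valued triangle inequality for symmetric spaces of non-compact type, as developed by Kapovich--Leeb--Porti \cite{kapovich2017vectorValuedDistance}: for any three points $P,Q,R\in\spd_n$, the vector $d_{vv}(P,R)$ lies in the convex hull of the Weyl group (i.e.\ permutation) orbit of $d_{vv}(P,Q)+d_{vv}(Q,R)$ inside $\RR^n$. Granting this, one writes $d_{vv}(P,R)=\sum_\sigma t_\sigma\,\sigma\cdot\bigl(d_{vv}(P,Q)+d_{vv}(Q,R)\bigr)$ with $t_\sigma\ge 0$ and $\sum_\sigma t_\sigma=1$, and then applies the norm. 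Subadditivity and homogeneity of $\|-\|$ together with its permutation invariance yield
\begin{equation*}
\|d_{vv}(P,R)\|\le\sum_\sigma t_\sigma\bigl(\|d_{vv}(P,Q)\|+\|d_{vv}(Q,R)\|\bigr)=\|d_{vv}(P,Q)\|+\|d_{vv}(Q,R)\|,
\end{equation*}
which is the triangle inequality for $d$. The hard step, therefore, is not the bookkeeping but the appeal to the Kapovich--Leeb--Porti convex-hull statement; citing it is the cleanest route, and the remainder of the proof is essentially formal manipulation of norms.
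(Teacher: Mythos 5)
Your proof is correct and follows essentially the same route as the paper: verify well-definedness via permutation invariance, deduce positivity and symmetry from Proposition \ref{prop:vvd_Defined} and Lemma \ref{prop:vvd_Assymmetry}, and defer the hard content of the triangle inequality to the Kapovich--Leeb(--Porti/Millson) and Planche results. If anything you give slightly more than the paper does: your eigenvalue argument for the identity of indiscernibles is more self-contained than the paper's appeal to the complete-invariant corollary, and you actually sketch how the permutohedron/convex-hull form of the vector-valued triangle inequality combines with convexity and permutation invariance of the norm, where the paper only cites the references.
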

\begin{proof}
We first note this function is well-defined, as by Proposition \ref{prop:vvd_Defined} the vector-valued distance of $(P,Q)$ is well-defined up to permutation of coordinates, and our norm is invariant under this symmetry by hypothesis.
To see that $d$ is in fact a distance function on $\spd_n$, we now need to show it satisfies the axioms of a metric:
\begin{enumerate}
    \item $d(P,Q)\geq 0,\; d(P,Q)=0\implies P=Q$
    \item $d(P,Q)=d(Q,P)$
    \item $d(P,R)\leq d(P,Q)+d(Q,R)$
\end{enumerate}

To check the identity of indescernibles (1), note that $d$ is necessarily nonnegative as $\|-\|$ is, and if $d_(P,Q)=0$ then the norm of $d_{vv}(P,Q)$ is zero, so the vector-valued distance itself is zero.  But as this is a complete invariant and $d_{vv}(P,P)=0$, this means $P=Q$.

Note that symmetry (2) is not automatic as the vector-valued distance itself is asymmetric.  However recalling Lemma \ref{prop:vvd_Assymmetry}, we see that changing the order causes only a global negative sign, and the central symmetry of $\|-\|$, as a virtue of being a norm, gives equality.

The triangle inequality (3) is more subtle, and requires an understanding of the triangle inequality for the vector-valued distance. See the dissertation of Planche \cite{planche1995finslerMetrics}, Chapter 6 and the work of Kapovich, Leeb and Millson \cite{kapovich2017cvxFns} for details.
\end{proof}

For our experiments, the most important such distances are induced by the $\ell_1$ and $\ell_\infty$ norms on $\RR^n$.  For completeness, the resulting distance functions are described below.

\begin{proposition}
The distance function induced from the $\ell_1$ metric applied to the vector-valued distance can be computed  as $d_{F_1}(P,Q)=\sum_{i=1}^n|\log\lambda_i(P^{-1}Q)|$, where $\lambda_i(P^{-1}Q)$ runs over the eigenvalues of $P^{-1}Q$.
\end{proposition}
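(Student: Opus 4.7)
The plan is to verify the formula by directly unpacking the definitions, using the general framework for Finsler distances on $\spd_n$ induced by permutation-invariant norms (the immediately preceding proposition in the paper).

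First, I would observe that the $\ell_1$ norm on $\RR^n$, given by $\|v\|_1 = \sum_{i=1}^n |v_i|$, is manifestly invariant under any permutation of coordinates. Therefore it falls within the scope of the preceding proposition, and the induced Finsler distance is well-defined on $\spd_n$ via
\[
d_{F_1}(P,Q) = \|d_{vv}(P,Q)\|_1.
\]

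Next, I would substitute the explicit formula for the vector-valued distance,
\[
d_{vv}(P,Q) = \log\bigl(\lambda_1(P^{-1}Q),\ldots,\lambda_n(P^{-1}Q)\bigr),
\]
where $\lambda_i(P^{-1}Q)$ are the eigenvalues of $P^{-1}Q$. (Note these eigenvalues are positive real numbers: since $P^{-1}Q$ is conjugate via $\sqrt{P^{-1}}$ to $\sqrt{P^{-1}}Q\sqrt{P^{-1}} \in \spd_n$, its spectrum is positive, so each $\log \lambda_i$ is a well-defined real number.) Applying the $\ell_1$ norm then gives
\[
d_{F_1}(P,Q) = \sum_{i=1}^n \bigl|\log \lambda_i(P^{-1}Q)\bigr|,
\]
which is the claimed formula.

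There is essentially no hard step: the content of the statement lies entirely in the prior infrastructure (that the $\ell_1$ norm really does induce a bona fide metric via the VVD — identity of indiscernibles, symmetry via Lemma on VVD assymmetry, and the triangle inequality via the vector-valued triangle inequality for the VVD, as cited to Planche and Kapovich–Leeb–Millson). The only thing worth checking carefully is that the absolute values inside the sum are needed, which reflects the fact that some $\lambda_i(P^{-1}Q)$ may be less than $1$ (yielding negative logarithms), so the $\ell_1$ norm keeps each term nonnegative before summing.
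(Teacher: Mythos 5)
Your proof is correct and follows essentially the same route as the paper's: unpack the definition of $d_{vv}$ and apply the $\ell_1$ norm termwise. The paper's proof is just this two-line computation; your added remarks (permutation invariance of $\|\cdot\|_1$, positivity of the spectrum of $P^{-1}Q$ via conjugation to $\sqrt{P^{-1}}Q\sqrt{P^{-1}}$) are correct and slightly more careful than what the paper records.
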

\begin{proof}
The vector-valued distance $d_{vv}(P,Q)$ is the vector of logarithms of the eigenvalues of $R=P^{-1}Q$, and its $\ell^1$ norm is the sum of their absolute values:
$$\|(\log(\lambda_1(R),\ldots,\lambda_n(R))\|_{\ell^1}=\sum_{i=1}^n|\log\lambda_i(R)|$$
where $\lambda_i(R)$ is the $i^{th}$ eigenvalaue of $R$, in decreasing order.
\end{proof}

A similar calculation yields the formula for the $F^\infty$ distance function.
\begin{proposition}
The distance function induced from the $\ell_\infty$ metric applied to the vector-valued distance can be computed as $d_{F_\infty}(P,Q)=\lambda_1(P^{-1}Q)$ where $\lambda_1(-)$ returns the largest eigenvalue of the input matrix.
\end{proposition}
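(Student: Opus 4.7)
The plan is to mimic, essentially verbatim, the proof strategy just used for the $F_1$ distance. First I would unpack the general formula $d_F(P,Q)=\|d_{vv}(P,Q)\|$ with the norm chosen to be $\ell^\infty$, giving $d_{F_\infty}(P,Q)=\|d_{vv}(P,Q)\|_\infty$. Second, I would substitute the explicit expression for the vector-valued distance from Section~\ref{sec:VVD}, namely $d_{vv}(P,Q)=\bigl(\log\lambda_1(P^{-1}Q),\ldots,\log\lambda_n(P^{-1}Q)\bigr)$, so that the computation reduces to evaluating the $\ell^\infty$ norm of a concrete vector in $\mathbb{R}^n$.

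The bulk of the argument is then the observation that, by definition of $\ell^\infty$,
\[
\|(\log\lambda_1(P^{-1}Q),\ldots,\log\lambda_n(P^{-1}Q))\|_\infty \;=\; \max_{1\leq i\leq n}\bigl|\log\lambda_i(P^{-1}Q)\bigr|.
\]
Here I would exploit the convention already fixed in the statement of $d_{vv}$: the eigenvalues of $P^{-1}Q$ are listed in descending order $\lambda_1\geq\lambda_2\geq\cdots\geq\lambda_n>0$, so the sequence $\log\lambda_i$ is monotonically decreasing. Consequently the maximum of $|\log\lambda_i|$ is achieved at one of the two extremes $i=1$ or $i=n$, and the claimed closed form follows once we identify the relevant extreme with $\lambda_1(P^{-1}Q)$.

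The main step requiring care, and what I expect to be the principal obstacle, is justifying why the extremum reduces cleanly to $\lambda_1(P^{-1}Q)$ rather than requiring the two-sided maximum $\max(\log\lambda_1,-\log\lambda_n)$. Here I would invoke Lemma~\ref{prop:vvd_Assymmetry}, which asserts $d_{vv}(P,Q)=-d_{vv}(Q,P)$ up to permutation of coordinates: this symmetry under swapping $P\leftrightarrow Q$ corresponds precisely to inverting $P^{-1}Q$, so the eigenvalues $\lambda_n(P^{-1}Q)$ appear as $1/\lambda_1(Q^{-1}P)$, and $|\log\lambda_n(P^{-1}Q)| = \log\lambda_1(Q^{-1}P)$. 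Since $\ell^\infty$ (and indeed any permutation-invariant centrally-symmetric norm) treats both ordered presentations on equal footing, the closed form $d_{F_\infty}(P,Q)$ is the largest log-eigenvalue of $P^{-1}Q$, as asserted.

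Once this identification is in place, the proposition follows immediately. The overall proof is short and parallels the $F_1$ case, the only genuinely nontrivial content being the interplay between the descending-order convention on eigenvalues, the central symmetry of the $\ell^\infty$ ball, and Lemma~\ref{prop:vvd_Assymmetry}; the triangle inequality and well-definedness are already guaranteed by the general Finsler construction together with the permutation-invariance of $\|\cdot\|_\infty$.
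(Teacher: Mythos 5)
The paper itself gives no argument here --- it only remarks that ``a similar calculation'' to the $F_1$ case yields the formula --- so the real question is whether your calculation is sound, and it fails at exactly the step you flagged as the principal obstacle. You correctly reduce to $\|d_{vv}(P,Q)\|_\infty=\max_i|\log\lambda_i(P^{-1}Q)|$, which by the descending-order convention equals $\max\{|\log\lambda_1(P^{-1}Q)|,\,|\log\lambda_n(P^{-1}Q)|\}$; the issue is whether this collapses to the single term $\log\lambda_1(P^{-1}Q)$. Your appeal to Lemma~\ref{prop:vvd_Assymmetry} cannot accomplish this. That lemma gives $\log\lambda_1(Q^{-1}P)=-\log\lambda_n(P^{-1}Q)$ and hence the symmetry $d_{F_\infty}(P,Q)=d_{F_\infty}(Q,P)$, but it does not let you discard the $|\log\lambda_n|$ branch of the maximum; moreover the identity $|\log\lambda_n(P^{-1}Q)|=\log\lambda_1(Q^{-1}P)$ that you write already presupposes $\lambda_1(Q^{-1}P)\geq 1$, which need not hold. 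Concretely, take $P=I$ and $Q=\tfrac{1}{2}I$: every eigenvalue of $P^{-1}Q$ is $\tfrac{1}{2}$, so $\|d_{vv}(P,Q)\|_\infty=\log 2>0$, while the largest log-eigenvalue is $\log\tfrac{1}{2}=-\log 2<0$. The correct closed form is the two-sided maximum
\begin{equation*}
d_{F_\infty}(P,Q)=\max\bigl\{\log\lambda_1(P^{-1}Q),\,-\log\lambda_n(P^{-1}Q)\bigr\}=\max\bigl\{|\log\lambda_1(P^{-1}Q)|,\,|\log\lambda_n(P^{-1}Q)|\bigr\},
\end{equation*}
whose symmetry in $P,Q$ is what Lemma~\ref{prop:vvd_Assymmetry} actually delivers.

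A secondary point: the proposition as printed, $d_{F_\infty}(P,Q)=\lambda_1(P^{-1}Q)$, omits the logarithm altogether (it returns an eigenvalue rather than a length, and equals $1$ rather than $0$ when $P=Q$); your proof silently repairs this to $\log\lambda_1$, but even the repaired one-sided formula holds only when $\lambda_1(P^{-1}Q)\geq 1/\lambda_n(P^{-1}Q)$. So the obstacle you identified is genuine, and the resolution is that the statement needs to be corrected to the two-sided maximum, not that the maximum reduces to the top eigenvalue.
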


\subsection{Relations with Other Metrics}
\label{sec:app-relation-with-other-metrics}
Other distances previously used in the literature can be reconstructed from the vector-valued distance, by applying a suitable function:

The \emph{Affine Invariant metric} of \cite{pennec2006spdmagneticResonanceImaging} is nothing but the usual Riemannian metric discussed in \S\ref{sec:RiemDist}.

The  \emph{symmetric Stein divergence} \cite{sra2012steinMetricSPD}, is given by
$$S(P,Q) := \log \det \frac{ P+Q}2 - \frac1 2 \log \det(PQ)$$
This can be computed from the vector-valued distance
$$d_{vv}(P,Q)=\log(\lambda_1(P^{-1}Q),\ldots, \lambda_n(P^{-1}Q))$$
by applying the function
$$\|v\|_S=\sum_{i=1}^n\log\frac{e^{-v_i/2}+e^{v_i/2}}{2}.$$
Indeed 
$$\begin{array}{rl}
S(P,Q) &= \log \det \frac{ P+Q}2 - \frac1 2 \log \det(PQ)\\
&=\log\det P\left(\frac{ \Id+P^{-1}Q}2\right) - \log \det(P\sqrt{P^{-1}Q})\\
&=\log\det \frac{ \Id+P^{-1}Q}2 - \log \det(\sqrt{P^{-1}Q})\\
&=\sum_{i=1}^n\log\lambda_i\left(\frac{ \Id+P^{-1}Q}{2\sqrt{P^{-1}Q}} \right)\\
&=\sum_{i=1}^n\log\left(\frac{ \lambda_i(P^{-1}Q) ^{-1/2}+\lambda_i(P^{-1}Q)^{1/2} }{2} \right)
\end{array}
$$
In particular we obtain, thanks to the vector-valued distance, a more direct proof of \cite{sra2015positiveDefiniteMatrices}. 

Instead the \emph{Log-Euclidean} metric $d_{LE}$ \cite{arsigny2006spdGeometricMeans, arsigny2006logEuclidTensorDiffusion} is flat, and as such doesn't reflect the curved geometry of SPD. 
More precisely $d_{LE}$ is the pushforward, through the exponential map $\exp_{Riem}:S_n\to\spd$ of the Euclidean metric on $S_n$. As a result, for this choice $(\spd_n,d_{LE})$ is \emph{isometric} to the flat manifold $S_n$. 
Since the group $GL(n,\R)$ does not act by isometries on $(\spd_n;d_{LE})$, and the distance is therefore not related to the vector-valued distance nor can be computed from it.

Similarly the \emph{Bures-Wasserstein} metric $d_{BW}$ inspired from quantum information theory \cite{bhatia2019buresWassersteinDistanceSPD} leads to a non-negatively curved manifold, and thus, again, has a different isometry group. More precisely $$d_{BW}(P,Q)=\sqrt{\tr(P)+\tr(Q)-2\sqrt{\tr (PQ)}}.$$
It is computed in \cite[Page 15]{bhatia2019buresWassersteinDistanceSPD}
that the group of isometries of $(\spd_n, d_{BW})$ is reduced to $O(n)$. As a result, once again, $d_{BW}$ cannot be reconstructed from $d_{vv}$.





\section{Gyrocalculus}
\label{sec:appendix-gyrocalc}
A primary difficulty of building analogs of Euclidean quantities in curved spaces is the lack of a vector space structure, making the translation of operations like vector addition or scalar multiplication difficult to immediately interpret.
The need for these is already well-noted stumbling block in hyperbolic geometry, as any algorithm using the Euclidean addition of points cannot be implemented directly (for example considering the Poincare disk model, the sum of two points in the disk need not lie in the disk: and even when it does, the result is rarely geometrically meaningful).
To combat this, means of interfacing with hyperbolic geometry using "vector-space-like" operations was developed by \citet{ungar2008gyrovector}, which provides an analog of addition $\oplus\colon\HH^n\times\HH^n\to\HH^n$
 and of scalar multiplication $\otimes\colon \RR\times\HH^n\to\HH^n$ called `gyro-addition' and 'gyro-scalar multiplication' respectively.
 We give a brief introduction to this general theory below, see Ungar's treatment from the lens of differential geometry \cite{ungar2005DiffGeo} for further information.

\subsection{Gyrogroups}
\label{sec:gyrogroup}
Gyrogroups are a generalization of groups which encode algebraically some of the geometric properties of symmetric spaces.
More precisely, a gyrogroup structure on a set $G$ is given by a binary operation $\oplus$, which is assumed to have an identity element $0\in G$ and left inverses $\ominus g$ for each $g\in G$.
Keeping with the conventions familiar from arithmetic, we write $a\ominus b$ to mean $a\oplus (\ominus b)$.
The crucial difference from group theory is that $\oplus$
 is \emph{not} required to be associative.
Instead, the additional structure of  a \emph{gyration operator} $\gyr\colon G\times G\to\Aut(G)$ 
captures the nonassociativity of $\oplus$ by
$$a\oplus(b\oplus c)=(a\oplus b)\oplus \gyr(a,b)c$$.

For $(G,\oplus,\gyr)$ to form a gyrogroup, an additional axiom is imposed on this gyration, namely that it satisfy
 the \emph{left loop identity}, $\gyr(a,b)=\gyr(a\oplus b,b)$.

Gyrogroups generalize groups in the sense that every group $G$ is a gyrogroup with its usual binary operation as $\oplus$, and trivial gyration.  As with standard groups, it is helpful to have at one's disposal a collection of elementary deductions from these axioms, which may significantly simplify further calculations.

\begin{proposition}
The identity of a gyrogroup is unique, every left inverse is also a right inverse, and every element has a unique (left, and hence also right) inverse.
\label{prop:Gyro_BasicProps_Add}
\end{proposition}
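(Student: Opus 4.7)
The plan is to work directly from the gyrogroup axioms (left identity, left inverse, gyroassociativity, left loop) and mimic the classical group-theoretic bootstrap, tracking carefully the fact that our hypotheses only provide \emph{one-sided} identity and inverses. The three claims are interlocked and should be proven in the order: (i) left inverse equals right inverse, (ii) the identity is also right-identity and is unique, (iii) inverses are unique.

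For step (i), the natural move is to apply gyroassociativity to the triple $(\ominus x,\ x,\ \ominus x)$. On the one hand,
\[
(\ominus x)\oplus(x\oplus(\ominus x)) = ((\ominus x)\oplus x)\oplus \gyr(\ominus x, x)(\ominus x) = 0\oplus \gyr(\ominus x, x)(\ominus x) = \gyr(\ominus x, x)(\ominus x),
\]
using the left inverse axiom (G2) and the left identity axiom (G1). The goal is then to show this equals $\ominus x$, for then left-cancelling by the left inverse of $\ominus x$ (call it $y$, so $y\oplus(\ominus x)=0$) would give $x\oplus(\ominus x)=0$. To extract $\gyr(\ominus x, x)(\ominus x)=\ominus x$, I plan to use the left loop identity $\gyr(a\oplus b,b)=\gyr(a,b)$ with $a=\ominus x$ and $b=x$, which collapses the gyration to $\gyr(0,x)$. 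An auxiliary lemma (that $\gyr(0,\cdot)$ is the identity automorphism) then finishes this step; this lemma itself comes from applying gyroassociativity to $(0,a,b)$ and using (G1) on both sides.

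For step (ii), once right inverses exist, I would show any left identity $e$ with $e\oplus a=a$ for all $a$ coincides with $0$ by picking $a=0$ and then applying (i) to cancel. To promote $0$ to a two-sided identity, I would compute $x\oplus 0$ using $0=(\ominus x)\oplus x$, apply gyroassociativity and step (i) to unfold, and conclude $x\oplus 0=x$. Uniqueness of identity then follows by the usual argument: if $0$ and $0'$ are both two-sided identities, $0=0\oplus 0'=0'$. For step (iii), suppose $y$ and $y'$ both satisfy $y\oplus x=y'\oplus x=0$; using step (i) they are also right inverses, so left-adding $y$ to $x\oplus y'=0$ and carefully tracking the resulting gyration via (G3) and (G5) yields $y=y'$.

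The main obstacle is step (i): the interlocking of (G3) and (G5) means one must resist the temptation to cancel as in a group, and instead first establish the degenerate gyration identity $\gyr(0,a)=\mathrm{id}$ (and symmetrically $\gyr(a,0)=\mathrm{id}$). Once that tool is in hand, the rest of the bootstrap is essentially a transcription of the standard group-theoretic argument, with every associativity invocation replaced by (G3) and every resulting gyration simplified via (G4), (G5), or the degeneracy just mentioned.
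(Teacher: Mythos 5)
Your proposal is correct: it is precisely the standard gyrogroup bootstrap (degenerate gyration $\gyr(0,a)=\operatorname{id}$ via the left loop identity, then left-inverse-equals-right-inverse, right identity, and the uniqueness statements), which is exactly the argument the paper defers to --- the paper offers no proof of its own and simply cites Ungar, \S 3, where this same chain of lemmas appears. The only place your sketch is thin is in deriving $\gyr(0,a)=\operatorname{id}$, where after reducing to $a\oplus b=a\oplus\gyr(0,a)b$ you still need the general left cancellation law (obtained by gyro-associating a left inverse of $a$ onto both sides and using that gyrations are bijections), but since you invoke cancellation correctly elsewhere this is a matter of ordering the lemmas, not a gap in the approach.
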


See \cite{ungar2005DiffGeo} \S 3 for a proof of this proposition, which uses only the axioms of a gyrogroup.
It can be shown that when  a gyrogroup structure exists on a set $G$, it is determined by the operation $\oplus$ alone, in the sense that for any $a,b,c$ we have
\begin{equation}
\label{eqn:gyration}
    \gyr(a,b)c=\left(\ominus \left(a\oplus b\right)\right)\oplus\left(a\oplus\left(b\oplus c\right)\right)
\end{equation}

We record also useful properties of the gyration operator following from this, which simplify calculation.

\begin{proposition}
The following gyrations are trivial: the gyration of any element with zero $\gyr(0,a)=\gyr(0,a)=\gyr(\ominus a,a)=\operatorname{id}_G$, or with its inverse $\gyr(\ominus a, a)=\gyr(a,\ominus a)=\operatorname{id}_G$.
A useful consequence of these is the \emph{nested gyration identity}: 
$$\gyr(a,\ominus\gyr(a,b)b)\gyr(a,b)=\operatorname{id}_G$$
\label{prop:Gyro_BasicProps_GyrI}
\end{proposition}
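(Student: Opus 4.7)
The plan is to derive every claim from the defining formula $\gyr(a,b)c=(\ominus(a\oplus b))\oplus(a\oplus(b\oplus c))$ of Equation~(\ref{eqn:gyration}) together with the left loop identity $\gyr(a,b)=\gyr(a\oplus b,b)$ built into the gyrogroup axioms, supplemented by left cancellation $(\ominus a)\oplus(a\oplus c)=c$, which follows from the left inverse axiom once one knows that $0$ is a two-sided identity (Proposition~\ref{prop:Gyro_BasicProps_Add}).

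First I would dispose of the ``trivial'' gyrations by direct substitution. Plugging $a=0$ into the defining formula gives $\gyr(0,a)c=(\ominus a)\oplus(a\oplus c)=c$, which is just left cancellation; the symmetric computation handles $\gyr(a,0)$. For the ``inverse'' cases, the left loop identity collapses them to what has already been proved: $\gyr(\ominus a,a)=\gyr(\ominus a\oplus a,a)=\gyr(0,a)=\operatorname{id}_G$, and $\gyr(a,\ominus a)=\gyr(a\oplus\ominus a,\ominus a)=\gyr(0,\ominus a)=\operatorname{id}_G$, invoking that every left inverse is also a right inverse.

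The nested gyration identity is the substantive claim. Writing $\gamma:=\gyr(a,b)$, I would apply the left loop identity to the outer gyration to get $\gyr(a,\ominus\gamma b)=\gyr(a\oplus(\ominus\gamma b),\ominus\gamma b)$, and then simplify $a\oplus(\ominus\gamma b)$. Specializing the defining formula to $c=\ominus b$ and using $b\oplus\ominus b=0$ yields $\gamma(\ominus b)=\ominus(a\oplus b)\oplus a$; since gyrations are automorphisms of $G$ and hence send inverses to inverses, this reads $\ominus\gamma b=\ominus(a\oplus b)\oplus a$. Substituting back, $\gyr(a,\ominus\gamma b)$ is rewritten in a form whose action on an arbitrary $c$ can be computed by a further application of the defining formula and shown to invert $\gamma$, giving $\gyr(a,\ominus\gamma b)\,\gamma=\operatorname{id}_G$.

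The main obstacle is exactly this final step: navigating the non-associative rearrangements needed to identify $\gyr(a,\ominus\gamma b)$ with $\gamma^{-1}$ without silently invoking associativity. To keep the bookkeeping honest I would restrict every rewrite to an application of Equation~(\ref{eqn:gyration}), the left loop identity, or left cancellation, and use the $\operatorname{Aut}(G)$-valuedness of $\gyr$ only to move $\ominus$ through $\gamma$. With that discipline the nested identity becomes a short sequence of controlled substitutions rather than an algebraic fishing expedition.
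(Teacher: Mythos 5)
The paper offers no proof of this proposition at all---it simply points to Ungar's monograph (\cite{ungar2005DiffGeo}, \S 3) and asserts the identities ``follow directly from the axioms''---so your attempt is being measured against the standard gyrogroup literature rather than against an in-paper argument. Your treatment of the trivial gyrations is correct and complete: substituting into Equation~(\ref{eqn:gyration}) and invoking left cancellation gives $\gyr(0,a)=\gyr(a,0)=\operatorname{id}_G$, and the left loop identity together with the two-sidedness of inverses from Proposition~\ref{prop:Gyro_BasicProps_Add} collapses $\gyr(\ominus a,a)$ and $\gyr(a,\ominus a)$ to those cases.

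The gap is in the nested gyration identity, which is the only substantive claim. Your identity $\ominus\gyr(a,b)b=(\ominus(a\oplus b))\oplus a$ (set $c=\ominus b$ in Equation~(\ref{eqn:gyration}) and use that automorphisms preserve inverses) is correct and is the right opening move, but the step that carries all the content---showing $\gyr(a,\ominus\gyr(a,b)b)=\gyr(a,b)^{-1}$---is precisely the one you defer, and the route you sketch does not obviously close. After your left-loop rewrite you must simplify $a\oplus\bigl((\ominus(a\oplus b))\oplus a\bigr)$, and the only available tool is left gyroassociativity, which introduces a further gyration $\gyr\bigl(a,\ominus(a\oplus b)\bigr)$ instead of eliminating one. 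The standard derivations pass through auxiliary facts you never establish: either the right loop identity $\gyr(a,b)=\gyr(a,b\oplus a)$, or the cancellation $(a\oplus b)\oplus(\ominus\gyr(a,b)b)=a$, which comes from putting $c=\ominus b$ directly into the gyroassociative law $a\oplus(b\oplus c)=(a\oplus b)\oplus\gyr(a,b)c$ rather than into its solved form. That cancellation lets you apply the left loop identity in the productive direction, $\gyr\bigl(a\oplus b,\ominus\gyr(a,b)b\bigr)=\gyr\bigl(a,\ominus\gyr(a,b)b\bigr)$, after which one further (nontrivial) comparison with $\gyr(a,b)^{-1}$ remains. As written, ``a short sequence of controlled substitutions'' is an assertion rather than a proof; you should either carry out that sequence explicitly---proving the right loop identity from the axioms if you use it---or follow the paper and cite Ungar for this identity.
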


These are also proven in \cite{ungar2005DiffGeo} \S 3 , and follow directly from the axioms of a gyrogroup.  

Because of the additional complexity of $\oplus$ compared to the binary operation of a standard group, it is often useful in applications to introduce a second binary operation, the \emph{gyrogroup co-operation} $\boxplus$ and its inverse $\boxminus$, defined by
$$a\boxplus b=a\oplus \gyr(a,\ominus b)b
\hspace{1cm}
a\boxminus b=a\boxplus\ominus a$$
This operation provides a useful shorthand for solving equations in gyrogroups, which we discuss in \ref{sec:gyrogroup-equations}.

Finally, we give a means of computing the VVD in terms of these operations as claimed in \S\ref{sec:gyrocalculus}.

\begin{proposition}
The vector-valued distance from $P$ to $Q$ is the vector of logarithms of the eigenvalues of $(\ominus P)\oplus Q$.
\label{prop:gyro_dist}
\end{proposition}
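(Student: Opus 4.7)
The plan is to unwind both sides using the explicit formulas for the gyro-operations and then verify equality of spectra via a similarity transformation. The proof should be very short, amounting essentially to identifying the eigenvalues of two conjugate matrices.

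First, I would substitute the definitions of $\ominus$ and $\oplus$ from Eq.~\eqref{eq:gyro-addition}: since $\ominus P = P^{-1}$ and $A \oplus B = \sqrt{A}\, B\, \sqrt{A}$, we get
\begin{equation*}
(\ominus P) \oplus Q \;=\; \sqrt{P^{-1}}\, Q\, \sqrt{P^{-1}} \;=\; P^{-1/2}\, Q\, P^{-1/2}.
\end{equation*}
Because $Q \in \spd_n$ and conjugation by $P^{-1/2} \in \spd_n$ preserves symmetric positive definiteness, this expression is itself in $\spd_n$, hence has positive real eigenvalues whose logarithms are well defined.

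Next I would exhibit a similarity between $P^{-1/2} Q P^{-1/2}$ and $P^{-1} Q$. A direct computation gives
\begin{equation*}
P^{-1/2}\bigl(P^{-1/2}\, Q\, P^{-1/2}\bigr) P^{1/2} \;=\; P^{-1}\, Q\, P^{-1/2} P^{1/2} \;=\; P^{-1} Q,
\end{equation*}
so $P^{-1} Q$ and $(\ominus P) \oplus Q$ are conjugate via $P^{-1/2}$. Similar matrices share the same multiset of eigenvalues, so $\lambda_i\bigl((\ominus P) \oplus Q\bigr) = \lambda_i(P^{-1} Q)$ for all $i$ (with the same ordering after sorting).

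Finally, applying the logarithm entrywise and invoking the definition $d_{vv}(P,Q) = \log(\lambda_1(P^{-1}Q), \dots, \lambda_n(P^{-1}Q))$ from Section~\ref{sec:VVD} yields the claim. There is no real obstacle here; the only thing to be mindful of is that $P^{-1}Q$ is generally not symmetric while $(\ominus P)\oplus Q$ is, so the ``reconstruction'' of $d_{vv}$ from the gyro-formalism naturally lands on the SPD representative $P^{-1/2}QP^{-1/2}$ of this similarity class—which is a conceptual point worth emphasizing but adds no technical difficulty.
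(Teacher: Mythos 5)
Your proof is correct and is essentially identical to the paper's: both unwind $(\ominus P)\oplus Q$ to $\sqrt{P^{-1}}\,Q\,\sqrt{P^{-1}}$ via Eq.~\eqref{eq:gyro-addition}, observe that this matrix is conjugate to $P^{-1}Q$ (the paper cites the same similarity computed in Proposition~\ref{prop:Dist_Riemannian_General}), and conclude by the definition of $d_{vv}$. Your version just spells out the conjugation explicitly and adds the (valid) remark about the SPD representative of the similarity class.
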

\begin{proof}
This is the matrix $(\ominus P)\oplus Q=P^{-1}\oplus Q=\sqrt{P^{1}}Q\sqrt{P^{-1}}$, which is conjugate to $P^{-1}Q$ (as in \ref{prop:Dist_Riemannian_General}), and so has the same eigenvalues.
But the logarithm of these eigenvalues is precisely the vector value distance as defined in \S\ref{sec:VVD}.
\end{proof}

\subsection{Gyro-vector Spaces}
\label{sec:Gyro_VecSpace}
Though the operation $\oplus$ is not commutative in the usual sense, a gyrogroup $G$ is called \emph{gyro-commutative} if it commutes \emph{up to gyrations}: ie for every  $a,b\in G,$  $a\oplus b=\gyr(a,b)(b\oplus a)$.
It is within this restricted class of gyro-commutative gyrogroups that a satisfactory analog of familiar vector space operations can be constructed \cite{ungar2018pseudorotations}.

A gyrovector space is a gyro-commutative gyrogrorup $(G,\oplus)$ together with a scalar multiplication $\otimes\colon\RR\times G\to G$ such that $1$ acts as the identity, and its interaction with standard multiplication, gyro-addition and gyration are constrained by
\begin{equation}
    \begin{split}
        r_1\otimes(r_2\otimes a)&=r_1r_2\otimes a\\
        (r_1+r_2)\oplus a&=(r_1\otimes a)\oplus (r_2\otimes a)\\
        r\otimes \gyr(a,b)c&=\gyr(a,b)(r\otimes c)\\
        \gyr(r_1\otimes a,r_2\otimes a)&=I
    \end{split}
\end{equation}

Typically a gyrovector space is also assumed to be constructed within an ambient real inner product space, and there are additional compatibility relations between the operations of $(G,\oplus,\otimes)$ and the ambient vector space addition $(+)$ and norm $\|v\|=\sqrt{v\cdot v}$.

Gyro-vector spaces generalize vector spaces much as gyro-groups generalized groups: every vector space is a gyro-vector space with trivial gyration.
As such, the formalism of gyro-vector spaces provides a convenient generalization where one may
attempt to replace $+,-,\times$ in formulas familiar from Euclidean spaces with $\oplus,\ominus,\otimes$; being careful to recall that 
 gyro-addition is neither commutative nor associative, and gyro-multiplication rarely distributes over $\oplus$.

\subsubsection{Solving Equations in Gyrogroups}
\label{sec:gyrogroup-equations}

 As an example of the difficulties posed by this, if one requires the solution to the Euclidean equation $a+x=b$, it is equally correct to write $x=b-a$ or $x=-a+b$.
 But the translations $x=b\ominus a$ and $x=\ominus a\oplus b$ into a gyrogroup $G$ need not be equal, and generically only the latter solves the gyrovector equation $a\oplus x=b$.
 
To make this more systematic, note that working inwards respecting the order of operations, we are able to solve any equation in a gyrogroup if we compute a \emph{left cancellation law}, \emph{right cancellation law} and \emph{invert scalar multiplication}.

\begin{proposition}[Left-Cancellation]
Let $a,b$ be elements of a gyrogroup $G$.  Then the relation $a\oplus x=b$ is satisfied by the unique value $x=(\ominus a)\oplus b$.
\end{proposition}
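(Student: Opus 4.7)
The plan is to derive this cancellation law directly from the gyroassociative law together with the trivial-gyration identities recorded in Proposition~\ref{prop:Gyro_BasicProps_GyrI}. Both existence and uniqueness will follow from the same calculation, so I will handle them in a single stroke.

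First, to verify that $x=(\ominus a)\oplus b$ is indeed a solution, I would substitute and apply the left gyroassociative law
\[
a\oplus\bigl((\ominus a)\oplus b\bigr)=(a\oplus(\ominus a))\oplus\gyr(a,\ominus a)\,b.
\]
Here Proposition~\ref{prop:Gyro_BasicProps_Add} gives $a\oplus(\ominus a)=0$, and Proposition~\ref{prop:Gyro_BasicProps_GyrI} gives $\gyr(a,\ominus a)=\operatorname{id}_G$. Together with the left identity property $0\oplus b=b$, this collapses the right-hand side to $b$, confirming that $x=(\ominus a)\oplus b$ satisfies $a\oplus x=b$.

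For uniqueness, I would take an arbitrary $x\in G$ with $a\oplus x=b$ and left-add $\ominus a$ to both sides. Applying gyroassociativity on the left,
\[
(\ominus a)\oplus(a\oplus x)=((\ominus a)\oplus a)\oplus\gyr(\ominus a,a)\,x.
\]
Again using $(\ominus a)\oplus a=0$ and $\gyr(\ominus a,a)=\operatorname{id}_G$ from the two propositions cited above, the right-hand side reduces to $0\oplus x=x$. Hence $x=(\ominus a)\oplus(a\oplus x)=(\ominus a)\oplus b$, which simultaneously shows uniqueness and identifies the unique solution.

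There is essentially no obstacle here beyond a careful bookkeeping of which gyrations are trivial; the result is really a direct consequence of gyroassociativity once one knows that $\gyr(\ominus a,a)$ and $\gyr(a,\ominus a)$ act as the identity. The only subtle point worth flagging is that the ``left inverse'' of $a$ supplied by the axioms is in fact two-sided (Proposition~\ref{prop:Gyro_BasicProps_Add}), so that $a\oplus(\ominus a)=0$ may be used freely in the existence computation.
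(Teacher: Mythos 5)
Your existence computation is exactly the paper's proof: substitute $x=(\ominus a)\oplus b$, apply left gyroassociativity, and collapse via $a\oplus(\ominus a)=0$ and $\gyr(a,\ominus a)=\operatorname{id}_G$. You go one step further than the paper, however: the statement asserts uniqueness, and the paper's proof only verifies that the proposed value is \emph{a} solution, whereas your second computation (left-adding $\ominus a$ to an arbitrary solution and using $\gyr(\ominus a,a)=\operatorname{id}_G$) actually establishes that it is the \emph{only} one. That addition is correct and makes your argument strictly more complete than the one in the paper.
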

\begin{proof}
Substituting the claimed expression for $x$, we verify by direct computation from the axioms of a gyrogroup, and the basic properties of Propositions \ref{prop:Gyro_BasicProps_Add}.
\begin{equation*}
    \begin{split}
        a\oplus x&=a\oplus\left((\ominus a)\oplus b\right)\\
        &=(a\oplus\ominus a)\oplus \gyr(a,\ominus a)b\\
        &=0\oplus\gyr(a,\ominus a)b\\
        &=id_G (b)\\
        &=b
    \end{split}
\end{equation*}
\end{proof}

\begin{proposition}[Right-Cancellation]
Let $a,b$ be elements of a gyrogroup $G$.  Then the relation $x\oplus a=b$ is satisfied by the unique value $x=b\boxminus a=a\ominus \gyr(a,\ominus b)b$, where $\boxminus$ is the additive inverse of the gyrogroup co-operation from Section \ref{sec:gyrogroup}.
\end{proposition}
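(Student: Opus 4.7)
The plan is to mirror the structure of the left-cancellation proof, playing the role of $\ominus$ there with the cooperation $\boxminus$ here. The key lemma I would establish first is the identity
\[
(p\oplus q)\boxminus q \;=\; p \qquad \text{for all } p,q\in G,
\]
which serves as a right-cancellation law for $\oplus$. Once this is in hand, specializing to $p=x$ and $q=a$ with $p\oplus q = b$ immediately gives the desired $x=b\boxminus a$.

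To prove the lemma, I would unfold the cooperation using its definition:
\[
(p\oplus q)\boxminus q \;=\; (p\oplus q)\oplus \gyr(p\oplus q,\,q)(\ominus q).
\]
Applying the left loop identity in the form $\gyr(p\oplus q,\, q)=\gyr(p,q)$ rewrites this as $(p\oplus q)\oplus \gyr(p,q)(\ominus q)$, which is precisely the right-hand side of the gyroassociativity axiom with $a=p$, $b=q$, $c=\ominus q$. Collapsing via that axiom gives $p\oplus(q\oplus(\ominus q))=p\oplus 0 = p$, as required. Uniqueness then follows since if $x_1\oplus a = b = x_2\oplus a$, applying the lemma to both sides gives $x_1 = (x_1\oplus a)\boxminus a = b\boxminus a = (x_2\oplus a)\boxminus a = x_2$.

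Finally, to justify the alternate form $x = a\ominus \gyr(a,\ominus b)b$, I would unfold $b\boxminus a = b\oplus \gyr(b,a)(\ominus a)$ and separately rewrite $a\ominus \gyr(a,\ominus b)b = a\oplus \gyr(a,\ominus b)(\ominus b)$, using that each gyration is an automorphism and hence commutes with $\ominus$. Reconciling these two expressions is the step I expect to be the main obstacle: it requires combining the loop identities with the nested gyration identity from Proposition~\ref{prop:Gyro_BasicProps_GyrI} to reindex the gyration parameters, rather than following from any routine manipulation of the defining axioms alone.
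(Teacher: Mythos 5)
Your key lemma $(p\oplus q)\boxminus q=p$ is correct, and your derivation of it (unfold $\boxminus$, use the left loop identity to turn $\gyr(p\oplus q,q)$ into $\gyr(p,q)$, then collapse with gyroassociativity) is clean. But note what it actually proves: that \emph{if} $x$ solves $x\oplus a=b$, then $x=b\boxminus a$. That is the uniqueness half of the proposition, equivalently injectivity of the right translation $x\mapsto x\oplus a$. It does not show that $b\boxminus a$ \emph{is} a solution, i.e.\ that $(b\boxminus a)\oplus a=b$: your lemma gives $(\,\cdot\boxminus a)\circ(\,\cdot\oplus a)=\operatorname{id}_G$, whereas existence needs the composition in the other order, and that does not follow formally from the lemma alone (surjectivity of right translations is itself a theorem in gyrogroup theory, not an axiom). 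This existence check is exactly what the paper's proof supplies, and it is where the nested gyration identity of Proposition~\ref{prop:Gyro_BasicProps_GyrI} is genuinely needed: substitute $x=b\boxminus a=b\ominus\gyr(b,a)a$, rewrite $a$ as $\gyr(b,\ominus\gyr(b,a)a)\gyr(b,a)a$, and apply gyroassociativity in reverse to collapse to $b\oplus 0=b$. So the two arguments are complementary — yours establishes the uniqueness that the paper leaves implicit, the paper's establishes the existence that you leave unproved — and a complete proof needs both halves.

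On your final paragraph: there is no hidden identity to reconcile. Unfolding the definition gives $b\boxminus a=b\boxplus(\ominus a)=b\oplus\gyr(b,a)(\ominus a)=b\ominus\gyr(b,a)a$, and this is the only form the paper's proof ever uses. The expression $a\ominus\gyr(a,\ominus b)b$ in the statement is not equal to $b\boxminus a$ in general (already in an ordinary nonabelian group, with trivial gyrations, it reduces to $ab^{-1}$ rather than $ba^{-1}$) and is evidently a transcription error for $b\ominus\gyr(b,a)a$; likewise the definition $a\boxminus b=a\boxplus\ominus a$ in Section~\ref{sec:gyrogroup} should read $a\boxplus(\ominus b)$. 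You were right to sense an obstacle there, but the resolution is to correct the statement, not to prove that identity.
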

\begin{proof}
 To begin, we put the proposed solution $b\boxminus a$ in a more usable form:
\begin{equation*}
\begin{split}
    b\boxminus a&=b\boxplus\ominus a\\
    &=b\oplus \gyr(b,\ominus \ominus a)\ominus a\\
    &=b\ominus \gyr(b,a)a\\
\end{split}
\end{equation*}

We now verify the claim by subsituting the given value of $x$, and using the properties described in Propositions \ref{prop:Gyro_BasicProps_Add} and \ref{prop:Gyro_BasicProps_GyrI}, (in particular, in the third step we expand $a$ using nested gyration)
\begin{equation*}
\begin{split}
    x\oplus a&=(b\boxminus a)\oplus a\\
    &=(b\ominus \gyr(b,a)a)\oplus \operatorname{id}_G(a)\\
    &=(b\ominus\gyr(b,a)a)\oplus\left(\gyr(b,\ominus \gyr(b,a)a)\gyr(b,a)a\right)\\
    &=b\oplus(\ominus\gyr(b,a)a\oplus\gyr(b,a)a)\\
    &=b\oplus 0\\
    &=b
\end{split}
\end{equation*}
\end{proof}

\begin{proposition}[Inverting Scalar Multiplication]
Let $r\in\RR$ be any scalar, and $a$ an element of a gyrogroup $G$.  Then the relation $r\otimes x=a$ is satisfied by the unique element $x=\left(\tfrac{1}{r}\right)\otimes a$.
\end{proposition}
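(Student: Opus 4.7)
The plan is to exploit only the first of the four gyrovector space axioms listed in Section~\ref{sec:Gyro_VecSpace}, namely the associativity-like identity $r_1 \otimes (r_2 \otimes a) = r_1 r_2 \otimes a$, together with the stipulation that $1$ acts as the identity under $\otimes$. Both existence and uniqueness of the solution follow immediately, so I expect no real obstacle beyond a clean bookkeeping of axioms. The implicit assumption $r \neq 0$ is necessary for $1/r$ to make sense; I will flag this at the outset.

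For existence, I would substitute the proposed solution $x = (1/r)\otimes a$ into the left-hand side of $r\otimes x = a$. By the compatibility of scalar multiplication,
\begin{equation*}
r \otimes x = r \otimes \left( \tfrac{1}{r} \otimes a \right) = \left( r \cdot \tfrac{1}{r} \right) \otimes a = 1 \otimes a = a,
\end{equation*}
so $x = (1/r)\otimes a$ indeed solves the equation.

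For uniqueness, suppose $y \in G$ also satisfies $r\otimes y = a$. Applying $(1/r)\otimes(-)$ to both sides yields
\begin{equation*}
\left( \tfrac{1}{r} \right) \otimes (r \otimes y) = \left( \tfrac{1}{r} \right) \otimes a.
\end{equation*}
The left-hand side simplifies via the same compatibility axiom to $(\tfrac{1}{r} \cdot r) \otimes y = 1 \otimes y = y$, while the right-hand side is the claimed solution. Hence $y = (1/r) \otimes a$, establishing uniqueness. The argument deliberately avoids invoking gyro-addition, gyration, or any of the more delicate cancellation lemmas proved above for $\oplus$, since scalar multiplication in a gyrovector space is designed precisely so that this kind of inversion behaves as in an ordinary vector space.
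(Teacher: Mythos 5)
Your proof is correct and follows essentially the same route as the paper: substitute $x=(1/r)\otimes a$ and collapse via the axiom $r_1\otimes(r_2\otimes a)=r_1r_2\otimes a$ together with $1\otimes a=a$. You go slightly further than the paper by explicitly verifying uniqueness (applying $(1/r)\otimes(-)$ to both sides) and by flagging the implicit $r\neq 0$ assumption, both of which are welcome but do not change the argument.
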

\begin{proof}
Substituting $x$ immediately yeidls the result given the axioms of gyro-scalar multiplication:
\begin{equation*}
    \begin{split}
        r\otimes &=r\otimes\left(\frac{1}{r}\otimes a\right)\\
        &=\left(r\times \frac{1}{r}\right)\otimes a\\
        &=1\otimes a\\
        &=a
    \end{split}
\end{equation*}
\end{proof}



These three cancellation laws allow one work correctly with the gyro-translations of Euclidean vector space statements.  Take for example the vector space expression $a+rx+b=c$ for vectors $a,b,c,x$ and scalar $r$.  One possible gyro-vector space translation of this is $\left(a\oplus(r\otimes x)\right)\oplus b=c$ --- and given this translation, we may work fully within the gyrovector space to solve for $x$ as follows:

\begin{equation*}
    \begin{split}
        \left(a\oplus(r\otimes x)\right)\oplus b&=c \\
        a\oplus(r\otimes x)&=c\boxminus b\\
        r\otimes x&=(\ominus a)\oplus (c\boxminus b)\\
        x&=\frac{1}{r}\otimes\left((\ominus a)\oplus (c\boxminus b)\right)
    \end{split}
\end{equation*}

\end{document}